\documentclass[journal]{IEEEtran}
\usepackage{amsmath,amssymb,amsfonts}
\usepackage{array}
\usepackage[caption=false,font=normalsize,labelfont=sf,textfont=sf]{subfig}
\usepackage{textcomp}
\usepackage{stfloats}
\usepackage{url}
\usepackage{verbatim}
\usepackage{graphicx}
\usepackage{cite}
\usepackage[colorlinks,
            linkcolor=blue,
            anchorcolor=blue,
            citecolor=blue]{hyperref}

\usepackage[linesnumbered,ruled]{algorithm2e}

\SetNlSkip{0.5em}
\usepackage{color}
\usepackage{multirow}
\usepackage{mathtools}
\usepackage{enumitem}
\usepackage{makecell}

\newtheorem{Lemma}{Lemma}
\newtheorem{Theorem}{Theorem}

\newtheorem{Remark}{Remark}

\begin{document}

\title{Normalizing Flow-Enhanced Message Passing for Multirobot Collaborative Localization}

\author{{Han Shen, Guanghui Wen, Liangming Chen, and Ming Cao
}

\thanks{Corresponding author: Guanghui Wen.}
\thanks{Han Shen is with the Department of Systems Science, Southeast University, Nanjing 211189, China (e-mail: shenhan@seu.edu.cn).}

\thanks{Guanghui Wen is with the School of Automation, Southeast University, Nanjing 210096, China (e-mail: ghwen@seu.edu.cn).}

\thanks{Liangming Chen is with the School of Automation and Intelligent Manufacturing and Guangdong Provincial Key Laboratory of Fully Actuated System Control Theory and Technology, Southern University of Science and Technology, Shenzhen 518055, China (e-mail: chenlm6@sustech.edu.cn).}

\thanks{Ming Cao is with the Engineering and Technology Institute Groningen, Faculty of Science and Engineering, University of Groningen, AG 9747 Groningen, The Netherlands (e-mail: m.cao@rug.nl).}
}

\maketitle

\begin{abstract}
  Accurate, robust, and adaptive localization is essential for various robotic operations. This paper proposes a new message passing (MP) algorithm for realizing collaborative localization in a distributed manner. The algorithm unifies Gaussian belief propagation (GBP) and mean-field (MF) approximation, where GBP preserves dependencies among robot states, and MF enables estimation of noise statistics. To effectively handle non-conjugate terms from nonlinear measurement models,  the algorithm adopts a parametric formulation in which these terms are treated by gradient estimators. Beyond linearization and sampling, we further design a normalizing flow (NF)-based gradient estimator, enabling learnable sampling. End-to-end training tunes NF parameters according to the behavior of MP, improving the overall estimation performance. To support estimation of practical robotic states that involve rotations, the method is then extended to Lie group state spaces. Finally, the method is applied to multirobot localization task fusing odometry, global navigation satellite system (GNSS) measurements, and inter-robot ultra wideband (UWB) ranging. Simulations and experiments on autonomous surface vehicles (ASVs) demonstrate its improved accuracy, robustness, and adaptability.
\end{abstract}

\begin{IEEEkeywords}
  Multirobot systems, collaborative localization, state estimation, message passing.
\end{IEEEkeywords}

\section{Introduction}
\label{sec:introduction}
\IEEEPARstart{R}{obot} localization is a fundamental and essential task, providing pose estimates to support subsequent tasks such as decision-making, motion planning, and control. The accuracy, robustness to outliers, and adaptability to varying noise statistics are primary requirements for effective localization. With the development of multirobot systems, leveraging both onboard sensing and inter-robot relative measurements to realize collaborative localization has emerged as a promising approach for enhancing these capabilities. The theoretical feasibility has also been examined in recent studies \cite{chen1}, \cite{chen2}.

The backend state estimation for collaborative localization typically follows either optimization-based or probabilistic inference-based paradigms. Optimization-based methods mainly include distributed pose graph optimization \cite{PGO1}, \cite{PGO2}, \cite{PGO3}, \cite{PGO4} and factor graph optimization \cite{FGO1}, \cite{FGO2}, \cite{FGO3}. The states in the former method are primarily robot poses, while in the latter method can additionally accommodate velocities, angular rates, IMU biases, and other states of interest. In \cite{ADMM}, pose graph and factor graph optimization are selected for far-field and near-field regimes, respectively, with the latter solved via a distributed alternating direction method of multipliers (ADMM) approach. Probabilistic inference-based methods, such as belief propagation (BP) \cite{BP1}, \cite{BP2}, \cite{BP3}, \cite{BP4} and mean-field variational inference (MFVI) \cite{MFVI1}, can also be used to solve the estimation problem in a distributed manner. When MFVI is expressed in a message passing (MP) form, it corresponds to variational MP (VMP) \cite{bibVMP1}, \cite{bibVMP2}, which can then be placed alongside BP within the class of MP algorithms.

BP and MF approximation exhibit distinct strengths and limitations \cite{bibBPMF1}. The MF assumption treats latent variables as independent ones and approximates the posterior with a fully factorized distribution. This leads to simple update rules, particularly suitable for conjugate exponential family models. BP, in contrast, can become intractable in some inference problems, even within conjugate exponential family, which is therefore typically restricted to Gaussian models in localization problems. Nevertheless, BP usually provides more accurate approximations. Gaussian BP (GBP), a variant of BP which is restricted to Gaussian messages and beliefs, was extended to Lie groups in \cite{BP3}. Its robustness was enhanced using dynamic covariance scaling (DCS) \cite{DCS}. This framework was later applied to joint estimation of robot states and sensor extrinsic parameters \cite{BP4}. However, the DCS mechanism in \cite{BP3}, \cite{BP4} exhibits limited adaptability to uncertain or time-varying noise statistics. In contrast, by leveraging MF approximation's strengths in handling various conjugate exponential family, noise models that beyond Gaussian assumptions can be incorporated into probabilistic models. Then, the states and noise statistics can be jointly inferred to improve both robustness and adaptability. This has been extensively explored in single robot state estimation. The inverse Gamma \cite{bibVBAKF1}, inverse Wishart \cite{bibVBAKF2} models have been applied to improve the adaptability of Kalman filters (KFs) to uncertain or varying noise covariances. The Student's \textit{t} distribution was adopted to improve the robustness of KFs in \cite{bibVBRKF1}. In many practical problems, such as localization and system identification, noise characteristics are not fixed, which may appear heavy-tailed during certain periods and revert to Gaussian at others. To capture such variability, \cite{bibVBRKF2} introduced a Gaussian-Student's \textit{t} mixture (GSTM) model. With MFVI adaptively updating the mixing probability, this approach achieves superior performance compared with both standard KFs and Student's \textit{t}-based robust variants. This model also has demonstrated effective performance across diverse applications \cite{bibVBRKF3}, \cite{bibVBRKF4}, \cite{bibVBRKF5}. For collaborative localization, MFVI has been applied to jointly estimate states and uncertain noise statistics \cite{MFVI1}. However, the MF assumption in \cite{MFVI1} enforces independence among robot states, which can degrade accuracy.

To exploit the complementary strengths of BP and MF approximation, a unified MP framework was proposed in \cite{bibBPMF1}, where the update rules are obtained as fixed-point equations corresponding to the stationary points of a constrained region-based free energy approximation. This framework was subsequently applied to collaborative localization in \cite{bibBPMF2}. In \cite{bibBPMF2}, however, the MP algorithm was not designed with robustness as the primary objective, rather, the MF component incorporated relative measurement factors to preserve the same communication overhead as the VMP-based scheme. In both \cite{bibBPMF1} and \cite{bibBPMF2}, the factor nodes in the factor graph are partitioned into BP and MF parts, and beliefs of variables connected to MF factor nodes are approximated using a MF assumption. In contrast, \cite{bibBPMF3} directly partitions the variable nodes, rather than the factor nodes, into BP and MF parts, resulting in a more general update rule.

Traditional MP algorithms typically optimize over a predefined set of candidate distributions. Although their updates can be written in terms of distribution parameters, they arise implicitly from closed-form distribution updates rather than explicit parameter optimization. In contrast, stochastic VI \cite{bibSVI} and natural gradient descent (NGD) methods \cite{bibNGVI1}, \cite{bibNGVI2}, \cite{bibNGVI3}, \cite{bibNGVI4} search for the optimal distribution by directly performing updates in the parameter space of the variational family. In \cite{bibNGVI1}, NGD was applied to conjugate exponential family. The work in \cite{bibNGVI2} proposed conjugate-computation VI, which uses closed-form updates for conjugate terms and stochastic gradients for non-conjugate ones. More recently, NGD has been extended to KFs \cite{bibNGVIKF1}, \cite{bibNGVIKF2} to improve adaptability and robustness, similar in spirit to \cite{bibVBAKF1}, \cite{bibVBAKF2}, \cite{bibVBRKF1}, \cite{bibVBRKF2}. However, \cite{bibNGVIKF1}, \cite{bibNGVIKF2} further focus on handling nonlinear dynamics and measurement models. With NGD, these nonlinear terms can be incorporated into stochastic gradients, effectively processed through Monte Carlo gradient estimators \cite{bibGE}, thereby improving estimation accuracy. Although above works achieve parameterization and some of them offer principled mechanisms to integrate closed-form updates for conjugate terms with gradient estimators for non-conjugate terms, such capabilities are largely absent in existing BP algorithms. This limitation is even more pronounced in unified MP that combine BP with MF. This gap forms the first motivation of this paper.

In state estimation problems such as localization, gradient estimators constitute core procedures of parametric inference methods, such as stochastic VI and NGD, for handling nonlinear dynamic and measurement models. The task of gradient estimators is to compute the gradient of an expectation with respect to the parameters of the distribution being integrated \cite{bibGE}. Over the past decades, several gradient estimators have been developed, including the score function estimator \cite{bibSFGE}, the pathwise estimator \cite{bibPGE}, and the measure-valued estimator \cite{bibMVGE}. Each method exhibits distinct properties and applicability, and a comprehensive overview is provided in \cite{bibGE}. These gradient estimators rely on Monte Carlo sampling. With the rise of deep learning, neural network-enhanced sampling has emerged as a promising direction. Neural importance sampling was introduced in \cite{bibNFS1}. The effectiveness of importance sampling critically depends on the choice of proposal distribution. In \cite{bibNFS1}, the proposal is constructed using a normalizing flow (NF) for improved expressiveness. Similar ideas have been explored in \cite{bibNFS2}, \cite{bibNFS3}. NFs \cite{bibNF1}, \cite{bibRealNVP}, \cite{bibNF2} offer several advantages. They provide expressive and flexible density models, and allow efficient sampling by transforming a simple base distribution through a sequence of invertible mappings. Since gradient estimators fundamentally rely on Monte Carlo sampling, improving the proposal distribution directly improves estimator performance. However, the use of NFs for designing gradient estimators remains limited. Developing NF-based gradient estimators, embedding them into MP algorithms, and enabling end-to-end training to substantially improve the state estimation accuracy and reduce algorithm iterations, constitute the second motivation of this paper.

In this paper, we propose an NF-enhanced MP algorithm to realize collaborative localization in a distributed manner. We begin by developing a parametric MP framework that unifies GBP and MF approximation, where the former preserves the dependencies between robot states, and the latter infers noise statistics to improve robustness and adaptability. The MP algorithm directly optimizes over natural parameters of candidate posterior distributions limited to exponential family. Within this framework, non-conjugate terms from nonlinear models are handled by gradient estimators, which we realize through three strategies: Linearization, sampling, and NF-based approaches. We further enable end-to-end training, allowing the NF parameters to be optimized according to the performance of the MP algorithm. Then, the method is extended to support states on Lie groups. Finally, the proposed algorithm is applied to multirobot collaborative localization, where each robot fuses measurements from odometry and global navigation satellite system (GNSS) mounted on itself, and inter-robot ranging data from ultra wideband (UWB). The contributions of this paper are summarized as follows:
\begin{enumerate}
  \item Compared with existing MP algorithms optimizing over a predefined set of candidate distributions, e.g., \cite{BP3}, \cite{bibVMP1}, \cite{bibVMP2}, \cite{bibBPMF1}, \cite{bibBPMF3}, the proposed MP algorithm directly optimizes over natural parameters of exponential family candidate distributions, establishing a principled connection between MP and gradient estimators, and further enabling explicit treatment of non-conjugate terms arising from nonlinear models through gradient estimators. In addition, the parametric structure is compatible with modern optimization and learning tools, offering increased flexibility in practical implementations.
  \item To enable flexible and learnable sampling for gradient estimation, we develop an NF-enhanced gradient estimator. Together with end-to-end training, this significantly improves estimation accuracy and reduces iterations.
  \item By reformulating inference in terms of error states, the proposed MP algorithm is extended to operate on Lie groups, enhancing its suitability for real-world robotic applications.
  \item The effectiveness of proposed framework has been validated through extensive simulations and experiments on a multiple autonomous surface vehicle (ASV) system, demonstrating its accuracy, robustness and adaptability.
\end{enumerate}

The rest of this paper is organized as follows. Section \ref{sec:Preliminaries} introduces notations and preliminaries for MP, exponential family, NFs, and Lie groups. Section \ref{sec:MP} presents the proposed MP algorithm. Applications to collaborative localization are given in Section \ref{sec::multirobot_Collaborative_Localization}. Simulations and experiments are available in Section \ref{sec:simexp}. Section \ref{sec:con} concludes this paper.

\section{Preliminaries}
\label{sec:Preliminaries}
\subsection{Notations}
Let $\mathbf{0}_{n_{1}\times n_{2}}$ and $\mathbf{1}_{n_{1}\times n_{2}}$ denote $n_{1}\times n_{2}$ dimensional matrices with all elements being zeros and ones, respectively, and $\mathbf{I}_{n}$ denote an identity matrix with $n$ dimensions. We omit their subscripts for simplicity, when the matrix dimensions are either irrelevant to the discussion or can be uniquely inferred from the context. We use ${\rm det}(\mathbf{A})$ and ${\rm tr}({\mathbf{A}})$ to denote the determinant and trace of matrix $\mathbf{A}\in\mathbb{R}^{n\times n}$, respectively. If $\boldsymbol{\nu}\in\boldsymbol{\Phi}$, we write $\boldsymbol{\Phi}\setminus\boldsymbol{\nu}$ for $\boldsymbol{\Phi}\setminus\{\boldsymbol{\nu}\}$. Let ${\rm col}\left(\mathbf{x}_{1},\mathbf{x}_{2},\ldots,\mathbf{x}_{N}\right)=\left[\mathbf{x}_{1}^{\top},\mathbf{x}_{2}^{\top},\ldots,\mathbf{x}_{N}^{\top}\right]^{\top}$ denote a column vector consisting of $\mathbf{x}_{i}\in\mathbb{R}^{n_i},i=1,2,\ldots,N$. Let ${\rm diag}(\mathbf{\Lambda}_{1},\mathbf{\Lambda}_{2},\ldots,\mathbf{\Lambda}_{N})$ denote a block diagonal matrix whose main diagonal is specified by square matrices $\mathbf{\Lambda}_{1},\mathbf{\Lambda}_{2},\ldots,\mathbf{\Lambda}_{N}$. Operators $\odot$ and $\otimes$ are the Hadamard product and Kronecker product, respectively. $\mathbb{D}_{\rm KL}(q(\mathbf{x})||p(\mathbf{x}))$ is the Kullback-Leibler (KL) divergence between distributions $q(\mathbf{x})$ and $p(\mathbf{x})$. ${\rm E}_{p(\mathbf{x})}[\cdot]$ denotes the expectation over distribution $p(\mathbf{x})$.

\subsection{MP Algorithms}
Let $\mathbf{x}=\{\mathbf{x}_{i}|i\in\mathcal{I}\}$ with $\mathcal{I}$ being the index set of latent variables $\mathbf{x}_{i}$, and $\mathbf{z}$ denote the set of all observed variables. The joint distribution $p(\mathbf{x},\mathbf{z})$ admits the factorization $p(\mathbf{x},\mathbf{z})=\prod_{a\in\mathcal{A}}f_{a}(\mathbf{x}_{a})$, where function $f_{a}(\mathbf{x}_{a})$ is called a factor defined on a subset $\mathbf{x}_{a}\subseteq \mathbf{x}$, $\mathcal{A}$ is the index set of factors, and the observed variables in $\mathbf{z}$ are treated as known parameters of factor $f_{a}(\mathbf{x}_{a})$. This factorization can be represented by a factor graph, a bipartite graph whose nodes consist of variable nodes and factor nodes. There is one variable node $i\in \mathcal{I}$ for each latent variable $\mathbf{x}_{i}$, and one factor node $a\in \mathcal{A}$ for each factor $f_{a}(\mathbf{x}_{a})$. If $\mathbf{x}_{i}\in\mathbf{x}_{a}$, a variable node $i\in\mathcal{I}$ and a factor node $a\in\mathcal{A}$ are connected by an undirected edge. We use sets $\mathcal{N}(i)$ and $\mathcal{N}(a)$ to represent factor nodes and variable nodes connected to variable node $i\in\mathcal{I}$ and factor node $a\in\mathcal{A}$, respectively.

Define a candidate distribution $b(\mathbf{x})$ used to approximate the posterior $p(\mathbf{x}|\mathbf{z})$. The posterior can be inferred by minimizing the variational free energy
\begin{align}
  F\triangleq\int b(\mathbf{x})\log\frac{b(\mathbf{x})}{p(\mathbf{x},\mathbf{z})}{\rm d}\mathbf{x}\label{variational_free_energy}
\end{align}
with respect to $b(\mathbf{x})$. This follows from the identity $F=\mathbb{D}_{\rm KL}(b(\mathbf{x})||p(\mathbf{x}|\mathbf{z}))-\log p(\mathbf{z})$, where KL divergence quantifies the discrepancy between two probability distributions, and $\log p(\mathbf{z})$ is a constant independent of $b(\mathbf{x})$. However, minimizing $F$ is intractable for many inference problems. To make the problem easier to solve, we can choose the Bethe free energy \cite{bibBP} as an approximation:
\begin{align}
    F_{\rm B}\triangleq&\sum_{a\in\mathcal{A}}\int b_{a}\left(\mathbf{x}_{a}\right)\log\frac{b_{a}\left(\mathbf{x}_{a}\right)}{f_{a}\left(\mathbf{x}_{a}\right)}{\rm d}\mathbf{x}_{a}\notag\\
    &-\sum_{i\in\mathcal{I}}\left(|\mathcal{N}(i)|-1\right)\int b_{i}\left(\mathbf{x}_{i}\right)\log b_{i}\left(\mathbf{x}_{i}\right){\rm d}\mathbf{x}_{i},\label{Bethe_free_energy}
\end{align}
where $b_{a}\left(\mathbf{x}_{a}\right)$ and $b_{i}\left(\mathbf{x}_{i}\right)$ are candidate distributions, called beliefs, used to approximate $p\left(\mathbf{x}_{a}|\mathbf{z}\right)$ and $p\left(\mathbf{x}_{i}|\mathbf{z}\right)$, respectively, and $|\mathcal{N}(i)|$ denotes the cardinality of the set $\mathcal{N}(i)$, i.e., the number of elements it contains. Additionally, normalization constraints $\int b_{a}\left(\mathbf{x}_{a}\right){\rm d}\mathbf{x}_{a}=1,\forall a\in\mathcal{A}$ and marginalization constraints $\int b_{a}\left(\mathbf{x}_{a}\right){\rm d}\mathbf{x}_{a}\setminus\mathbf{x}_{i}=b_{i}\left(\mathbf{x}_{i}\right),\forall a\in\mathcal{A},i\in\mathcal{N}(a)$ should be satisfied. Then, we can construct a Lagrangian function based on cost function (\ref{Bethe_free_energy}) and equation constraints, whose stationary points are \cite{bibBP}
\begin{align}
  b_{a}\left(\mathbf{x}_{a}\right)\propto&f_{a}\left(\mathbf{x}_{a}\right)\prod_{i\in\mathcal{N}(a)}n_{i\to a}\left(\mathbf{x}_{i}\right), \forall a\in\mathcal{A}\label{beliefupdate_a}\\
  b_{i}\left(\mathbf{x}_{i}\right)\propto&\prod_{a\in\mathcal{N}(i)}m_{a\to i}\left(\mathbf{x}_{i}\right), \forall i\in\mathcal{I}\label{beliefupdate_i}
\end{align}
with functions called messages
\begin{align}
  m_{a\to i}\left(\mathbf{x}_{i}\right)=&\int f_{a}\left(\mathbf{x}_{a}\right)\prod_{j\in\mathcal{N}(a)\setminus i}n_{j\to a}\left(\mathbf{x}_{j}\right){\rm d}\mathbf{x}_{a}\setminus\mathbf{x}_{i},\label{message_factor2var_BP}\\
  n_{i\to a}\left(\mathbf{x}_{i}\right)=&\prod_{c\in\mathcal{N}(i)\setminus a}m_{c\to i}\left(\mathbf{x}_{i}\right).\label{message_var2factor}
\end{align}
Using fixed point iterations to solve (\ref{beliefupdate_a}) and (\ref{beliefupdate_i}) yields BP algorithm. When the factor graph satisfies a tree structure, BP is an exact inference algorithm. In the presence of cycles, BP generally achieves approximations of posteriors \cite{bibBP}.

When we substitute a factorization constraint, also known as the MF assumption, $b(\mathbf{x})=\prod_{i\in\mathcal{I}}b_{i}(\mathbf{x}_{i})$ into variational free energy (\ref{variational_free_energy}), and minimize (\ref{variational_free_energy}) with respect to $b_{i}(\mathbf{x}_{i})$, we obtain the stationary conditions of MFVI, given by \cite{bibMFVI}
\begin{align}
  \log{b_{i}\left(\mathbf{x}_{i}\right)}=&{\rm E}_{b_{-i}\left(\mathbf{x}_{-i}\right)}\left[\log{p(\mathbf{x},\mathbf{z})}\right]+c_{i},\forall i\in\mathcal{I}\label{MFVI}
\end{align}
where $\mathbf{x}_{-i}=\mathbf{x}\setminus\mathbf{x}_{i}$, $b_{-i}\left(\mathbf{x}_{-i}\right)=\prod_{j\in\mathcal{I}\setminus i}b_{j}\left(\mathbf{x}_{j}\right)$, and $c_{i}$ is a constant independent of $\mathbf{x}_{i}$. As summarized in \cite{bibVMP2}, \cite{bibBPMF1}, equation (\ref{MFVI}) can be expressed in a VMP form, which is updated based on messages exchanged between variables and factors, similar to BP. This passing process is carried out directly among robots, without relying on any central node, enabling MP algorithms to support distributed computation and decentralized fusion in collaborative localization.

\subsection{Exponential Family}
The exponential family is a class of distributions:
\begin{align}
  p(\mathbf{x};\boldsymbol{\lambda})\triangleq h(\mathbf{x})\exp(\left<\boldsymbol{\lambda},\mathcal{T}(\mathbf{x})\right>-A(\boldsymbol{\lambda})),\label{ef}
\end{align}
where $h(\mathbf{x})$ is the base measure that could be a constant, such as $1$, $\boldsymbol{\lambda}$ are the natural parameters, $\mathcal{T}(\mathbf{x})$ is sufficient statistics, $A(\boldsymbol{\lambda})$ is the log partition function, and $\left<\cdot,\cdot\right>$ is an inner product. Let expectation parameters $\boldsymbol{\mu}\triangleq{\rm E}_{p(\mathbf{x};\boldsymbol{\lambda})}[\mathcal{T}(\mathbf{x})]$, which is also a function of $\boldsymbol{\lambda}$. For a minimal exponential family, it is a one-to-one mapping between $\boldsymbol{\lambda}$ and $\boldsymbol{\mu}$ \cite{bibNGVI4}. Various common distributions are in the exponential family. We give some examples in Appendix \ref{Examples_Exponential_Families}. These distributions will be used for later derivations. Let entropy $H(p(\mathbf{x};\boldsymbol{\lambda}))\triangleq-{\rm E}_{p(\mathbf{x};\boldsymbol{\lambda})}[\log p(\mathbf{x};\boldsymbol{\lambda})]$.

\subsection{Real NVP}
Given vectors $\boldsymbol{\epsilon}\in E$ and $\mathbf{y}\in Y$, a bijection $T:Y\rightarrow E$, and $\boldsymbol{\epsilon}=T(\mathbf{y})$, we have the change of variable formula:
\begin{align}
  p(\boldsymbol{\epsilon})=p(\mathbf{y})\left|{\rm det}\left(\frac{\partial T(\mathbf{y})}{\partial \mathbf{y}^{\top}}\right)\right|^{-1},\label{p2p}
\end{align}
where $p(\boldsymbol{\epsilon})$ and $p(\mathbf{y})$ denote the corresponding
probability densities over $E$ and $Y$, respectively.

NFs construct invertible mappings whose Jacobians are tractable, enabling probability densities to be transformed via change of variable formula. Among them, the Real NVP \cite{bibRealNVP} is particularly attractive due to its simple coupling-layer structure and the efficient computation of both forward mappings and log-determinants.

The Real NVP consists of $L$ affine coupling layers. We use function $\mathbf{y}^{l}=F^{l}(\mathbf{y}^{l-1})$ to denote the $l$th affine coupling layer with $\mathbf{y}^{0}=\mathbf{y}$. In \cite{bibRealNVP}, affine coupling layer $\mathbf{y}^{l}=F^{l}(\mathbf{y}^{l-1})$ is specified by
\begin{align}
  \mathbf{y}^{l}=&\left(\mathbf{1}-\mathbf{b}\right)\odot\left(\mathbf{y}^{l-1}\odot\exp\left(s\left(\mathbf{b}\odot\mathbf{y}^{l-1}\right)\right)+t\left(\mathbf{b}\odot\mathbf{y}^{l-1}\right)\right)\notag\\
  &+\mathbf{b}\odot\mathbf{y}^{l-1},\label{acl}
\end{align}
where vector $\mathbf{b}$ is a binary mask, $s$ and $t$ are mappings. In our implementation, when the layer index $l$ is odd, the mask $\mathbf{b}$ assigns the value $1$ to the even positions and $0$ to the odd positions. When $l$ is even, the assignment is reversed. Mappings $s$ and $t$ are specified as multilayer perceptrons (MLPs). Letting $\mathbf{z}^{l}=(\mathbf{1}-\mathbf{b})\odot s\left(\mathbf{b}\odot\mathbf{y}^{l-1}\right)$, we have
\begin{align}
  \left|{\rm det}\left(\frac{\partial \mathbf{y}^{l}}{\partial (\mathbf{y}^{l-1})^{\top}}\right)\right|=\exp\left(\mathbf{1}^{\top}\mathbf{z}^{l}\right).\notag
\end{align}
The overall transformation of Real NVP is constructed by $T=F^{L}\circ\ldots\circ F^{1}$. Then,
\begin{align}
  \left|{\rm det}\left(\frac{\partial T(\mathbf{y})}{\partial \mathbf{y}^{\top}}\right)\right|=\prod_{l=1}^{L}\exp\left(\mathbf{1}^{\top}\mathbf{z}^{l}\right).\notag
\end{align}
Up to this point, we have specified the mapping $\boldsymbol{\epsilon}=T(\mathbf{y})$ and the coefficient in (\ref{p2p}). The Real NVP will later be used to construct proposal distributions for sampling.

\subsection{Lie Groups}
The Lie group is a set being both a smooth manifold and a group. The special orthogonal group ${\rm SO}(n)$ is a typical Lie group, formally defined as ${\rm SO}(n)\triangleq\{\mathbf{X}\in\mathbb{R}^{n\times n}|\mathbf{X}^{\top}\mathbf{X}=\mathbf{X}\mathbf{X}^{\top}=\mathbf{I}_{n},{\rm det}(\mathbf{X})=1\}$. When setting $n=3$, the group ${\rm SO}(3)$ can be used as the domain of 3-D rotation matrices. The rotation matrix is a representation for describing attitude and rotation transformations.

The axis-angle representation of rotation matrix, i.e., the rotation vector, is another commonly used representation. For a rotation matrix $\mathbf{R}\in{\rm SO}(3)$ and its corresponding rotation vector $\mathbf{r}\in\mathbb{R}^{3}$, they can be mapped to each other by $\mathbf{R}\triangleq{\rm Exp}\left(\mathbf{r}\right)$, whose closed form is known as Rodrigues' formula. The inverse mapping of ${\rm Exp}\left(\mathbf{r}\right)$ is denoted by $\mathbf{r}\triangleq{\rm Log}\left(\mathbf{R}\right)$.

To describe increments between elements on Lie groups, we define right operators $\oplus$ and $\ominus$ as follows:
\begin{align*}
  \mathbf{R}\oplus\boldsymbol{\tau}\triangleq\mathbf{R}{\rm Exp}\left(\boldsymbol{\tau}\right),\mathbf{R}_{2}\ominus\mathbf{R}_{1}\triangleq{\rm Log}\left(\mathbf{R}_{1}^{\top}\mathbf{R}_{2}\right),
\end{align*}
with $\mathbf{R},\mathbf{R}_{1},\mathbf{R}_{2}\in{\rm SO}(3)$ and $\boldsymbol{\tau}\in\mathbb{R}^{3}$. Define a product manifold $\mathcal{M}={\rm SO}(3)\times\mathbb{R}^{3}$. The operators $\boxplus$ and $\boxminus$ act right plus or minus to each block of the product manifold, respectively. For example, for product manifold $\mathcal{M}$,
\begin{align*}
  \left[\begin{matrix}
    \mathbf{R}\\
    \mathbf{a}
  \end{matrix}\right]\boxplus\left[\begin{matrix}
    \boldsymbol{\tau}\\
    \mathbf{b}
  \end{matrix}\right]=\left[\begin{matrix}
    \mathbf{R}\oplus\boldsymbol{\tau}\\
    \mathbf{a}+\mathbf{b}
  \end{matrix}\right],\
  \left[\begin{matrix}
    \mathbf{R}_{2}\\
    \mathbf{a}
  \end{matrix}\right]\boxminus\left[\begin{matrix}
    \mathbf{R}_{1}\\
    \mathbf{b}
  \end{matrix}\right]=\left[\begin{matrix}
    \mathbf{R}_{2}\ominus\mathbf{R}_{1}\\
    \mathbf{a}-\mathbf{b}
  \end{matrix}\right],
\end{align*}
with $\mathbf{a},\mathbf{b}\in\mathbb{R}^{3}$.

Uncertainties on Lie groups can be defined by expressing perturbations in the tangent space and representing them in a Euclidean coordinate vector. Consider a product manifold $\mathcal{M}_{a}$ consisting of multiple ${\rm SO}(3)$ and $\mathbb{R}^{m}$. Define a random variable $\mathcal{X}\in\mathcal{M}_{a}$, an estimate $\bar{\mathcal{X}}\in\mathcal{M}_{a}$, and vector $\boldsymbol{\delta}=\mathcal{X}\boxminus\bar{\mathcal{X}}\in\mathbb{R}^{n}$. We have ${\rm d}\mathcal{X}=|{\rm det}(\mathbf{J}_{r}(\boldsymbol{\delta}))|{\rm d}\boldsymbol{\delta}$ where right Jacobian $\mathbf{J}_{r}(\boldsymbol{\delta})$ is block-diagonal, with each block given by the right Jacobian of ${\rm SO}(3)$ or the identity matrix for Euclidean components. Assuming $p(\boldsymbol{\delta})={\rm N}(\boldsymbol{\delta};\mathbf{0},\boldsymbol{\Sigma})$, we obtain
\begin{align}
  p(\mathcal{X})=\frac{1}{\gamma(\boldsymbol{\delta})}\exp\left(-\frac{1}{2}(\mathcal{X}\boxminus\bar{\mathcal{X}})^{\top}\boldsymbol{\Sigma}^{-1}(\mathcal{X}\boxminus\bar{\mathcal{X}})\right),\label{def_Gaussian_Lie_Group}
\end{align}
where $\gamma(\boldsymbol{\delta})=\left(2\pi\right)^{n/2}{\rm det}(\boldsymbol{\Sigma})^{1/2}|{\rm det}(\mathbf{J}_{r}(\boldsymbol{\delta}))|$. For convenience, we denote the distribution in (\ref{def_Gaussian_Lie_Group}) by ${\rm N}(\mathcal{X};\bar{\mathcal{X}},\boldsymbol{\Sigma})$, which is indirectly defined by $p(\boldsymbol{\delta})$, and is not a Gaussian distribution, merely serving as a notation.

\section{NF-Enhanced MP Algorithm}
\label{sec:MP}

The overall framework of the proposed NF-enhanced MP algorithm is presented in Fig. \ref{fig_Framework}. We begin by constructing a factor graph based on measurements and the assumed noise models. The variables are partitioned into BP and MF parts, and factors are categorized according to the types of variables they connect. Next, we propose a parametric MP framework that directly performs inference in the natural parameter space of candidate exponential family distributions. This framework iteratively updates messages between factors and variables, as well as beliefs of both factors and variables. Since the iterations involve gradient estimation, which is also the main step for handling non-conjugate terms arising from nonlinear measurement models, we design three types of gradient estimators to meet different requirements, including linearization, sampling, and NF-based approaches. The NF parameters are trained end-to-end by minimizing a loss function describing the discrepancy between ground truth and estimates. Finally, the proposed MP algorithm is extended to Lie group space.

\begin{figure}[!tb]
  \centering
  \includegraphics[scale=1]{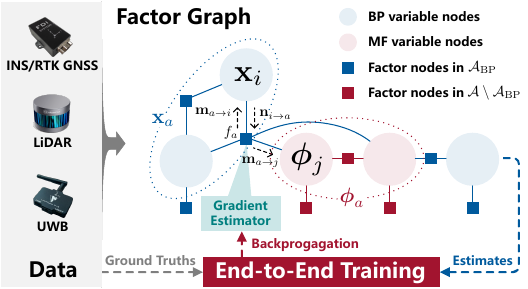}
  \caption{The proposed MP framework. The variables in BP and MF parts that serve as arguments of factor $f_{a}$ constitute $\mathbf{x}_{a}$ and $\boldsymbol{\phi}_{a}$, respectively. The factor nodes connected to BP variable nodes constitute set $\mathcal{A}_{\rm BP}\subseteq \mathcal{A}$, with $\mathcal{A}$ consisting of all factor nodes. The framework employs linearization, sampling, and NF-based gradient estimators and updates messages and beliefs to realize estimation. NF parameters are trained by minimizing the discrepancy between ground truth and estimates.}
  \label{fig_Framework}
\end{figure}

\subsection{Parametric MP Framework}
We divide the variable index set $\mathcal{I}$ into a BP part and a MF part, denoted by $\mathcal{I}_{\rm BP}$ and $\mathcal{I}_{\rm MF}$, respectively, and $\mathcal{I}_{\rm BP}\cap\mathcal{I}_{\rm MF}=\emptyset$, $\mathcal{I}=\mathcal{I}_{\rm BP}\cup\mathcal{I}_{\rm MF}$. The variables belong to BP and MF parts are denoted by $\mathbf{x}_{i},i\in\mathcal{I}_{\rm BP}$ and $\boldsymbol{\phi}_{i},i\in\mathcal{I}_{\rm MF}$, respectively. Let $\mathbf{x}=\{\mathbf{x}_{i}|i\in\mathcal{I}_{\rm BP}\}$, $\boldsymbol{\phi}=\{\boldsymbol{\phi}_{i}|i\in\mathcal{I}_{\rm MF}\}$, and $\mathbf{z}$ denote the measurement set. The joint distribution $p(\mathbf{x},\boldsymbol{\phi},\mathbf{z})=\prod_{a\in\mathcal{A}}f_{a}(\mathbf{x}_{a},\boldsymbol{\phi}_{a})$, where $f_{a}(\mathbf{x}_{a},\boldsymbol{\phi}_{a})$ is a factor, $\mathbf{x}_{a}$ and $\boldsymbol{\phi}_{a}$ are (possibly empty) subsets of $\mathbf{x}$ and $\boldsymbol{\phi}$, respectively. When $\mathbf{x}_{a}$ or $\boldsymbol{\phi}_{a}$ is empty, the corresponding argument is omitted and the factor reduces to a function of the remaining variable. Following the definition of a factor graph, a variable node $i\in\mathcal{I}_{\rm BP}$ or $i\in\mathcal{I}_{\rm MF}$ is connected to a factor node $a\in\mathcal{A}$ whenever the associated variable appears as an argument of that factor. We use sets $\mathcal{N}(a)$ and $\mathcal{N}(i)$ to denote nodes connected to factor node $a$ and variable node $i$, respectively. We further define sets $\mathcal{N}_{\rm BP}(a)=\mathcal{N}(a)\cap\mathcal{I}_{\rm BP}$ and $\mathcal{N}_{\rm MF}(a)=\mathcal{N}(a)\cap\mathcal{I}_{\rm MF}$. We use $\mathcal{A}_{\rm BP}=\{a\in\mathcal{A}|\mathcal{N}_{\rm BP}(a)\neq\emptyset\}$ to represent factor nodes that connected to variable nodes in BP part. Fig. \ref{fig_Framework} uses an example of the factor graph to illustrate above notations.

Invoking the Bethe free energy definition (\ref{Bethe_free_energy}) and rewriting it under the notation of this section, we have
\begin{align}
    F_{\rm B}=&\sum_{a\in\mathcal{A}}\int b_{a}\left(\mathbf{x}_{a},\boldsymbol{\phi}_{a}\right)\log\frac{b_{a}\left(\mathbf{x}_{a},\boldsymbol{\phi}_{a}\right)}{f_{a}\left(\mathbf{x}_{a},\boldsymbol{\phi}_{a}\right)}{\rm d}\mathbf{x}_{a}{\rm d}\boldsymbol{\phi}_{a}\notag\\
    &-\sum_{i\in\mathcal{I}_{\rm BP}}\left(|\mathcal{N}(i)|-1\right)\int b_{i}\left(\mathbf{x}_{i}\right)\log b_{i}\left(\mathbf{x}_{i}\right){\rm d}\mathbf{x}_{i}\notag\\
    &-\sum_{i\in\mathcal{I}_{\rm MF}}\left(|\mathcal{N}(i)|-1\right)\int b_{i}\left(\boldsymbol{\phi}_{i}\right)\log b_{i}\left(\boldsymbol{\phi}_{i}\right){\rm d}\boldsymbol{\phi}_{i},\label{Bethe_free_energy_BPMF}
\end{align}
where $b_{a}(\mathbf{x}_{a},\boldsymbol{\phi}_{a})$, $b_{i}(\mathbf{x}_{i})$, and $b_{i}(\boldsymbol{\phi}_{i})$ are beliefs, serving as candidate approximations to posteriors $p(\mathbf{x}_{a},\boldsymbol{\phi}_{a}|\mathbf{z})$, $p(\mathbf{x}_{i}|\mathbf{z})$, and $p(\boldsymbol{\phi}_{i}|\mathbf{z})$, respectively.

Following \cite{bibBPMF3}, to accommodate the advantages of BP and MF approximation, we introduce factorization constraints
\begin{align}
  \label{factorization_constraints}
  b_{a}\left(\mathbf{x}_{a},\boldsymbol{\phi}_{a}\right)=b_{a}\left(\mathbf{x}_{a}\right)\prod_{i\in\mathcal{N}_{\rm MF}(a)}b_{i}\left(\boldsymbol{\phi}_{i}\right),\forall a\in\mathcal{A}
\end{align}
When $\mathbf{x}_{a}$ or $\boldsymbol{\phi}_{a}$ is empty, $b_{a}(\mathbf{x}_{a})$ or $b_{i}(\boldsymbol{\phi}_{i})$ is interpreted as the identity element of the product, i.e., equals to $1$.

Substituting factorization constraints (\ref{factorization_constraints}) into (\ref{Bethe_free_energy_BPMF}) and expressing certain terms by entropy, we have
\begin{align}
    F_{\rm B}=&-\sum_{a\in\mathcal{A}_{\rm BP}}H(b_{a}\left(\mathbf{x}_{a}\right))-\sum_{i\in\mathcal{I}_{\rm MF}}H(b_{i}\left(\boldsymbol{\phi}_{i}\right))-\sum_{a\in\mathcal{A}}U_{a}\notag\\
    &+\sum_{i\in\mathcal{I}_{\rm BP}}\left(|\mathcal{N}(i)|-1\right)H(b_{i}\left(\mathbf{x}_{i}\right)),\label{loss_function}
\end{align}
where
\begin{align}
  U_{a}=\int b_{a}\left(\mathbf{x}_{a}\right)b_{a}(\boldsymbol{\phi}_{a})\log f_{a}\left(\mathbf{x}_{a},\boldsymbol{\phi}_{a}\right){\rm d}\mathbf{x}_{a}{\rm d}\boldsymbol{\phi}_{a},\label{U_a}
\end{align}
and $b_{a}(\boldsymbol{\phi}_{a})=\prod_{i\in\mathcal{N}_{\rm MF}(a)}b_{i}\left(\boldsymbol{\phi}_{i}\right)$.

Following BP \cite{bibBP} and hybrid BP-MF \cite{bibBPMF1}, \cite{bibBPMF3} algorithms, we introduce marginalization constraints to maintain dependence between BP part variables:
\begin{align}
  \int b_{a}\left(\mathbf{x}_{a}\right){\rm d}\mathbf{x}_{a}\setminus\mathbf{x}_{i}=b_{i}\left(\mathbf{x}_{i}\right),\forall a\in\mathcal{A}_{\rm BP},i\in\mathcal{N}_{\rm BP}(a).\label{marginalization_constraints}
\end{align}

\begin{Remark}
  In the subsequent collaborative localization task, $\mathbf{x}_{i},i\in\mathcal{I}_{\rm BP}$ and $\boldsymbol{\phi}_{i},i\in\mathcal{I}_{\rm MF}$ will be states and noise parameters, respectively. The factorization in (\ref{factorization_constraints}) facilitates the inference of noise parameters. Meanwhile, (\ref{marginalization_constraints}) preserves the dependence among states, ensuring high estimation accuracy.
\end{Remark}

Traditional MP algorithms view beliefs as optimization variables and do not limit the distributions of beliefs. In many practical problems, the beliefs can be modeled by exponential family, which is also the form targeted by many estimation algorithms. In this work, we set $b_{a}(\mathbf{x}_{a}),\forall a\in\mathcal{A}_{\rm BP}$ as a minimal exponential family distribution with natural parameter $\boldsymbol{\lambda}_{a}$, sufficient statistics $\mathcal{T}_{a}(\mathbf{x}_{a})$, expectation parameter $\boldsymbol{\mu}_{a}$, and set $b_{i}(\mathbf{x}_{i})$ and $b_{i}(\boldsymbol{\phi}_{i})$ as minimal exponential family distributions with natural parameter $\boldsymbol{\lambda}_{i}$, sufficient statistics $\mathcal{T}_{i}(\mathbf{x}_{i})$ ($\mathcal{T}_{i}(\boldsymbol{\phi}_{i})$ for $b_{i}(\boldsymbol{\phi}_{i})$), and expectation parameters $\boldsymbol{\mu}_{i}$.

Since this paper focuses on state estimation, where states are generally modeled by Gaussian distributions, the beliefs in BP part, $b_{a}(\mathbf{x}_{a}),a\in\mathcal{A}_{\rm BP}$ and $b_{i}(\mathbf{x}_{i}),i\in\mathcal{I}_{\rm BP}$, are restricted to Gaussian distributions. For rigor, the subsequent consideration of $\mathbf{x}_{a}$ is a vector consisting of all $\mathbf{x}_{i},i\in\mathcal{N}_{\rm BP}(a)$, whose order can be arbitrary. Following the exponential family form of a Gaussian distribution in Appendix \ref{Examples_Exponential_Families}, marginalization constraints in (\ref{marginalization_constraints}) are equivalent to
\begin{align}
  \boldsymbol{\mu}_{i}=\mathbf{L}_{a,i}\boldsymbol{\mu}_{a},\forall a\in\mathcal{A}_{\rm BP},i\in\mathcal{N}_{\rm BP}(a),\label{eq_constraint}
\end{align}
where $\mathbf{L}_{a,i}={\rm diag}(\mathbf{\Gamma}_{a,i},\mathbf{\Gamma}_{a,i}\otimes\mathbf{\Gamma}_{a,i})$ and $\mathbf{\Gamma}_{a,i}$ satisfies $\mathbf{x}_{i}=\mathbf{\Gamma}_{a,i}\mathbf{x}_{a}$. 

For exponential family whose natural and expectation parameters are linked by a one-to-one mapping, an optimization problem posed in the natural parameter space can be reformulated in the expectation parameter space. This trick preserves the objective while facilitating gradient computation. A similar idea also underlies \cite{bibNGVI1}, \cite{bibNGVI2}, \cite{bibNGVI3}, \cite{bibNGVI4}. Let $\boldsymbol{\mu}$ denote a set consisting of all $\boldsymbol{\mu}_{a},a\in\mathcal{A}_{\rm BP}$ and $\boldsymbol{\mu}_{i},i\in\mathcal{I}$. We construct the optimization problem over the expectation parameter space:
\begin{align}
  \min_{\boldsymbol{\mu}}\;&F_{\rm B}\notag\\
  {\rm s.t.}\;&(\ref{eq_constraint})\label{optimization_problem}
\end{align}

\begin{Remark}
In contrast to existing MP algorithms, e.g., \cite{BP3}, \cite{bibVMP1}, \cite{bibVMP2}, \cite{bibBPMF1}, \cite{bibBPMF3}, the approach in this paper employs parameterization when constructing optimization problems, rather treats beliefs as optimization variables. In the subsequent design, it allows for a separation between factors that admit closed-form updates and the rest require dedicated treatment, i.e., conjugate and non-conjugate terms. For state estimation, the former typically arises from linear measurement models and conjugate noise models, whereas the latter originates from nonlinear ones. This parameterization approach assigns tailored treatments to different factors, achieving a balance between computational efficiency and estimation accuracy. Moreover, the parametric MP algorithm can integrate with modern optimization and learning tools.
\end{Remark}

The Lagrangian function of (\ref{optimization_problem}) is 
\begin{align}
  \label{Lagrangian}
    L=&F_{\rm B}+\sum_{a\in\mathcal{A}_{\rm BP}}\sum_{i\in\mathcal{N}_{\rm BP}(a)}\boldsymbol{\lambda}_{a,i}^{\top}\left(\boldsymbol{\mu}_{i}-\mathbf{L}_{a,i}\boldsymbol{\mu}_{a}\right),
\end{align}
where $\boldsymbol{\lambda}_{a,i}$ is a Lagrangian multiplier.

\begin{Theorem}
  \label{Theorem_stationary_points}
When minimal exponential family beliefs $b_{a}(\mathbf{x}_{a})$ for $a\in\mathcal{A}_{\rm BP}$, $b_{i}(\mathbf{x}_{i})$ for $i\in\mathcal{I}_{\rm BP}$, and $b_{i}(\boldsymbol{\phi}_{i})$ for $i\in\mathcal{I}_{\rm MF}$ have constant base measures, the stationary points of (\ref{Lagrangian}) are fixed points fulfilling
\begin{align}
  \boldsymbol{\lambda}_{a}=&\nabla_{\boldsymbol{\mu}_{a}}U_{a}+\sum_{i\in\mathcal{N}_{\rm BP}(a)}\mathbf{L}_{a,i}^{\top}\mathbf{n}_{i\to a},\forall a\in\mathcal{A}_{\rm BP}\label{lambda_a_bpmf}\\
  \boldsymbol{\lambda}_{i}=&\sum_{a\in\mathcal{N}(i)}\mathbf{m}_{a\to i},\forall i\in\mathcal{I}\label{lambda_i_bpmf}
\end{align}
with messages given by vectors
\begin{align}
  \mathbf{n}_{i\to a}=&\sum_{c\in\mathcal{N}(i)\setminus a}\mathbf{m}_{c\to i},\forall i\in\mathcal{I}_{\rm BP},a\in\mathcal{N}(i)\label{m2n_bpmf}\\
  \mathbf{m}_{a\to i}=&\nabla_{\boldsymbol{\mu}_{i}}U_{a},\forall a\in\mathcal{A},i\in\mathcal{N}_{\rm MF}(a)\label{belief2m_mf_bpmf}\\
  \mathbf{m}_{a\to i}=&g\left(\mathbf{L}_{a,i}f\left(\boldsymbol{\lambda}_{a}\right)\right)-\mathbf{n}_{i\to a},\forall a\in\mathcal{A}_{\rm BP},i\in\mathcal{N}_{\rm BP}(a)\label{n2m_bpmf}
\end{align}
and vice versa, where function $f$ denotes the mapping from natural parameter to its corresponding expectation parameter, and $g$ is its inverse mapping.
\end{Theorem}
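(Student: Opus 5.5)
Our approach is to take partial derivatives of the Lagrangian (\ref{Lagrangian}) with respect to every expectation parameter $\boldsymbol{\mu}_{a}$, $a\in\mathcal{A}_{\rm BP}$, and $\boldsymbol{\mu}_{i}$, $i\in\mathcal{I}$, set them to zero, and then identify the Lagrange multipliers with messages. The key tool is the entropy gradient of a minimal exponential family with constant base measure. Since $\log b=\langle\boldsymbol{\lambda},\mathcal{T}\rangle-A(\boldsymbol{\lambda})+{\rm const}$, we have $H(b)=-\langle\boldsymbol{\lambda},\boldsymbol{\mu}\rangle+A(\boldsymbol{\lambda})+{\rm const}$. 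Writing $\boldsymbol{\lambda}=g(\boldsymbol{\mu})$ and using $\nabla_{\boldsymbol{\lambda}}A=\boldsymbol{\mu}$, the chain rule gives $\nabla_{\boldsymbol{\mu}}H=-\boldsymbol{\lambda}$. The Jacobian terms $(\partial\boldsymbol{\lambda}/\partial\boldsymbol{\mu})^{\top}(\boldsymbol{\mu}-\boldsymbol{\mu})$ cancel, which is exactly where minimality (bijectivity of $f$) and the constant base measure are used. A nonconstant $h$ would add a term ${\rm E}[\log h]$ whose gradient is not simply $-\boldsymbol{\lambda}$.

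With this identity, the stationarity conditions follow from (\ref{loss_function}) case by case.

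\emph{Factor beliefs.} For $a\in\mathcal{A}_{\rm BP}$, $\boldsymbol{\mu}_{a}$ appears only in $-H(b_{a})$, in $-U_{a}$ (which is linear in $b_{a}(\mathbf{x}_{a})$ and hence a function of $\boldsymbol{\mu}_{a}$), and in the constraint terms. This yields
\begin{align*}
\boldsymbol{\lambda}_{a}-\nabla_{\boldsymbol{\mu}_{a}}U_{a}-\sum_{i\in\mathcal{N}_{\rm BP}(a)}\mathbf{L}_{a,i}^{\top}\boldsymbol{\lambda}_{a,i}=\mathbf{0}.
\end{align*}

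\emph{MF variables.} For $i\in\mathcal{I}_{\rm MF}$, the terms are $-H(b_{i})$ and $-\sum_{a\in\mathcal{N}(i)}U_{a}$. This gives $\boldsymbol{\lambda}_{i}=\sum_{a\in\mathcal{N}(i)}\nabla_{\boldsymbol{\mu}_{i}}U_{a}$, which is (\ref{lambda_i_bpmf}) with (\ref{belief2m_mf_bpmf}).

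\emph{BP variables.} For $i\in\mathcal{I}_{\rm BP}$, $\boldsymbol{\mu}_{i}$ appears only in $(|\mathcal{N}(i)|-1)H(b_{i})$ and the constraint terms. Here we should note that each $a\in\mathcal{N}(i)$ lies in $\mathcal{A}_{\rm BP}$ and contains $i$ in $\mathcal{N}_{\rm BP}(a)$. This gives $(|\mathcal{N}(i)|-1)\boldsymbol{\lambda}_{i}=\sum_{a\in\mathcal{N}(i)}\boldsymbol{\lambda}_{a,i}$.

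Next we introduce messages by the substitution $\boldsymbol{\lambda}_{a,i}\triangleq\mathbf{n}_{i\to a}$ and define $\mathbf{m}_{a\to i}\triangleq\boldsymbol{\lambda}_{i}-\mathbf{n}_{i\to a}$ for BP pairs. Summing the latter over $a\in\mathcal{N}(i)$ and using the BP-variable stationarity condition gives
\begin{align*}
\sum_{a\in\mathcal{N}(i)}\mathbf{m}_{a\to i}=|\mathcal{N}(i)|\boldsymbol{\lambda}_{i}-(|\mathcal{N}(i)|-1)\boldsymbol{\lambda}_{i}=\boldsymbol{\lambda}_{i}.
\end{align*}
This is (\ref{lambda_i_bpmf}) for BP variables, and subtracting $\mathbf{m}_{a\to i}$ gives (\ref{m2n_bpmf}). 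The factor-belief condition becomes (\ref{lambda_a_bpmf}). Finally, the primal constraint (\ref{eq_constraint}) gives $\boldsymbol{\mu}_{i}=\mathbf{L}_{a,i}f(\boldsymbol{\lambda}_{a})$, so $\boldsymbol{\lambda}_{i}=g(\mathbf{L}_{a,i}f(\boldsymbol{\lambda}_{a}))$, and hence $\mathbf{m}_{a\to i}=g(\mathbf{L}_{a,i}f(\boldsymbol{\lambda}_{a}))-\mathbf{n}_{i\to a}$, which is (\ref{n2m_bpmf}).

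For the converse ("vice versa"), we reverse the argument. Given a fixed point, we set $\boldsymbol{\lambda}_{a,i}=\mathbf{n}_{i\to a}$. Then (\ref{n2m_bpmf}) together with (\ref{lambda_i_bpmf}) shows that $g(\mathbf{L}_{a,i}f(\boldsymbol{\lambda}_{a}))=\boldsymbol{\lambda}_{i}$, which by bijectivity recovers (\ref{eq_constraint}). Summing (\ref{m2n_bpmf}) over $a\in\mathcal{N}(i)$ recovers the BP-variable stationarity condition. The remaining stationarity conditions are read off directly from (\ref{lambda_a_bpmf}) and (\ref{lambda_i_bpmf}).

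The main obstacle is the entropy-gradient identity, in particular justifying the cancellation of the Jacobian terms and handling the Gaussian case, where $\mathcal{T}$ contains $\mathbf{x}\otimes\mathbf{x}$ and is therefore not minimal as written. We would handle this by working on the symmetric parametrization, or by noting that the gradient is taken in the consistent coordinates used by $\mathbf{L}_{a,i}$, so that the identity still holds on the constrained subspace.
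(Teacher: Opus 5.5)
Your proposal is correct and follows essentially the same argument as the paper: you set $\nabla_{\boldsymbol{\mu}_a}L$ and $\nabla_{\boldsymbol{\mu}_i}L$ to zero using $\nabla_{\boldsymbol{\mu}}H=-\boldsymbol{\lambda}$, identify the multipliers via $\boldsymbol{\lambda}_{a,i}=\mathbf{n}_{i\to a}$ and $\mathbf{m}_{a\to i}=\boldsymbol{\lambda}_i-\boldsymbol{\lambda}_{a,i}$, rewrite the primal constraint as $\boldsymbol{\lambda}_i=g(\mathbf{L}_{a,i}f(\boldsymbol{\lambda}_a))$, and obtain the converse by reversing these steps. The differences are only in detail: you derive the entropy-gradient identity where the paper cites it, you write out the converse explicitly where the paper only asserts the steps are reversible, and you flag that the vectorized Gaussian statistics are not minimal, which the paper treats as purely notational.
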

\textit{Proof:} See Appendix \ref{Proof_stationary_points}.
$\hfill\blacksquare$

The examples of exponential family given in Section \ref{Examples_Exponential_Families} all satisfy the prerequisites of the Theorem \ref{Theorem_stationary_points}, admitting later usage of these distributions. The specific expressions of $f$ and $g$ in Theorem \ref{Theorem_stationary_points} can be uniquely determined from the natural and expectation parameter expressions of exponential family representation of Gaussian distributions in Appendix \ref{Examples_Exponential_Families}. Inspired by \cite{bibBPMF1}, we use fixed point iterations to solve natural parameters and messages. Let $L$ denote the max iterations, and superscript $(l)$ denote the $l$th iteration. The proposed parametric MP is summarized in Algorithm \ref{Alg_BPMF}, also illustrated in Fig. \ref{fig_Framework}. Step \ref{alg_Initialize} is designed to satisfy equation constants in (\ref{optimization_problem}), which means the initial mean of $\mathbf{x}_{a}$ consists of the mean of $\mathbf{x}_{i},i\in\mathcal{N}_{\rm BP}(a)$ and the initial covariance is a block diagonal matrix with diagonal blocks being covariance matrices of $\mathbf{x}_{i},i\in\mathcal{N}_{\rm BP}(a)$. Step \ref{alg_Initialize_msg} is used to initialize messages, following that in \cite{bibBPMF1}.

\begin{algorithm}[tb]
  \small
  \hsize=\linewidth
  \caption{Parametric MP}
  \label{Alg_BPMF}
  Initialize $\boldsymbol{\lambda}_{a}^{\left(0\right)}=\sum_{i\in\mathcal{N}_{\rm BP}(a)}\mathbf{L}_{a,i}^{\top}\boldsymbol{\lambda}_{i}^{\left(0\right)},\forall a\in\mathcal{A}_{\rm BP}$;\label{alg_Initialize}\\
  \For{$l=1:L$}
  {
    \eIf{$l=1$}
    {
      Initialize $\mathbf{n}_{i\to a}^{\left(1\right)}=\boldsymbol{\lambda}_{i}^{\left(0\right)},\forall i\in\mathcal{I}_{\rm BP},a\in\mathcal{N}(i)$;\label{alg_Initialize_msg}\\
    }
    {
      $\mathbf{n}_{i\to a}^{\left(l\right)}=\sum_{c\in\mathcal{N}(i)\setminus a}\mathbf{m}_{c\to i}^{\left(l-1\right)},\forall i\in\mathcal{I}_{\rm BP},a\in\mathcal{N}(i)$;\label{Alg_n}\\
    }
    $\mathbf{m}_{a\to i}^{(l)}=(\nabla_{\boldsymbol{\mu}_{i}}U_{a})^{(l)},\forall a\in\mathcal{A},i\in\mathcal{N}_{\rm MF}(a)$;\label{Alg_BPMF_grad_2}\\
    $\boldsymbol{\lambda}_{a}^{(l)}=(\nabla_{\boldsymbol{\mu}_{a}}U_{a})^{(l)}+\sum_{i\in\mathcal{N}_{\rm BP}(a)}\mathbf{L}_{a,i}^{\top}\mathbf{n}_{i\to a}^{(l)},\forall a\in\mathcal{A}_{\rm BP}$;\label{Alg_BPMF_grad_1}\\
    $\mathbf{m}_{a\to i}^{(l)}=g(\mathbf{L}_{a,i}f(\boldsymbol{\lambda}_{a}^{(l)}))-\mathbf{n}_{i\to a}^{\left(l\right)},\forall a\in\mathcal{A}_{\rm BP},i\in\mathcal{N}_{\rm BP}(a)$;\label{Alg_m}\\
    $\boldsymbol{\lambda}_{i}^{(l)}=\sum_{a\in\mathcal{N}(i)}\mathbf{m}_{a\to i}^{(l)},\forall i\in\mathcal{I}$;\label{Alg_BPMF_lambda_i_BP}\\
  }
\end{algorithm}

\subsection{Gradient Estimators}
\label{subsec::gradient_estimators}
The next issue is how to evaluate gradients $(\nabla_{\boldsymbol{\mu}_{a}}U_{a})^{(l)}$ and $(\nabla_{\boldsymbol{\mu}_{i}}U_{a})^{(l)}$ in Algorithm \ref{Alg_BPMF}. We first denote $b_{i}(\mathbf{x}_{i})={\rm N}(\mathbf{x}_{i};\hat{\mathbf{x}}_{i},\mathbf{P}_{i}),\forall i\in\mathcal{I}_{\rm BP}$ and $b_{a}(\mathbf{x}_{a})={\rm N}(\mathbf{x}_{a};\hat{\mathbf{x}}_{a},\mathbf{P}_{a}),\forall a\in\mathcal{A}_{\rm BP}$ and denote their estimates at $l$th iteration as $b_{i}^{(l)}(\mathbf{x}_{i})={\rm N}(\mathbf{x}_{i};\hat{\mathbf{x}}_{i}^{(l)},\mathbf{P}_{i}^{(l)})$ and $b_{a}^{(l)}(\mathbf{x}_{a})={\rm N}(\mathbf{x}_{a};\hat{\mathbf{x}}_{a}^{(l)},\mathbf{P}_{a}^{(l)})$, respectively. Based on the modeling in localization and a summary of the conjugate noise models \cite{bibVBAKF1}, \cite{bibVBAKF2}, \cite{bibVBRKF1}, \cite{bibVBRKF2}, for factor node $a\in\mathcal{A}_{\rm BP}$, the factor is restricted to
\begin{align}
    f_{a}\left(\mathbf{x}_{a},\boldsymbol{\phi}_{a}\right)=c({\boldsymbol{\phi}_{a}})\exp\left(-\frac{1}{2}\mathbf{r}_{a}^{\top}\left(\mathbf{x}_{a}\right)\mathbf{R}_{a}^{-1}({\boldsymbol{\phi}_{a}})\mathbf{r}_{a}\left(\mathbf{x}_{a}\right)\right),\label{factor_model}
\end{align}
where $\mathbf{r}_{a}\left(\mathbf{x}_{a}\right)$ is the residual model, $c({\boldsymbol{\phi}_{a}})>0$ and symmetric, positive definite matrix $\mathbf{R}_{a}({\boldsymbol{\phi}_{a}})$ are functions of $\boldsymbol{\phi}_{a}$. For localization, $\mathbf{r}_{a}\left(\mathbf{x}_{a}\right)$ describes the measurement model, and variable $\boldsymbol{\phi}_{a}$ represents measurement noise parameter.

Since the gradient $(\nabla_{\boldsymbol{\mu}_{a}}U_{a})^{(l)}$ involves an integral with respect to expectation parameters, and this integral becomes analytically intractable whenever the term $\mathbf{r}_{a}(\mathbf{x}_{a})$ is nonlinear, we give linearization, sampling, NF-based gradient estimators. For gradient $(\nabla_{\boldsymbol{\mu}_{i}}U_{a})^{(l)}$ in step \ref{Alg_BPMF_grad_2} of Algorithm \ref{Alg_BPMF}, the calculation varies for different models. This will be given later in specific applications in Section \ref{subsec::gradient_estimators_app}. However, the expectation $\mathbf{A}_{a}^{(l)}={\rm E}_{b^{(l-1)}(\mathbf{x}_{a})}[\mathbf{r}_{a}(\mathbf{x}_{a})\mathbf{r}_{a}^{\top}(\mathbf{x}_{a})]$ will be later used in the evaluation of $(\nabla_{\boldsymbol{\mu}_{i}}U_{a})^{(l)}$, and its calculation is related with gradient estimators designed in this subsection. The following two lemma will be helpful in  gradient estimator design:

\begin{Lemma}
\label{Lemma_simplied_U}
For factor (\ref{factor_model}), $U_{a}$ in (\ref{U_a}) can be simplified as
\begin{align}
  U_{a}=\int b_{a}\left(\mathbf{x}_{a}\right)\log{\rm N}\left(\mathbf{r}_{a}\left(\mathbf{x}_{a}\right);\mathbf{0},\bar{\mathbf{R}}_{a}\right){\rm d}\mathbf{x}_{a}+c.\notag
\end{align}
where covariance $\bar{\mathbf{R}}_{a}={\rm E}^{-1}_{b_{a}(\boldsymbol{\phi}_{a})}[\mathbf{R}^{-1}_{a}\left(\boldsymbol{\phi}_{a}\right)]$, and $c$ is a constant independent of $\mathbf{x}_{a}$ and $\boldsymbol{\mu}_{a}$.
\end{Lemma}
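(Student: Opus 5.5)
The plan is to substitute the factor model (\ref{factor_model}) into the definition (\ref{U_a}) and carry out the integration over $\boldsymbol{\phi}_{a}$ first, using the factorization $b_{a}(\mathbf{x}_{a},\boldsymbol{\phi}_{a})=b_{a}(\mathbf{x}_{a})b_{a}(\boldsymbol{\phi}_{a})$ from (\ref{factorization_constraints}). Taking the logarithm of (\ref{factor_model}) gives
\begin{align}
  \log f_{a}(\mathbf{x}_{a},\boldsymbol{\phi}_{a})=\log c(\boldsymbol{\phi}_{a})-\frac{1}{2}{\rm tr}\left(\mathbf{R}_{a}^{-1}(\boldsymbol{\phi}_{a})\mathbf{r}_{a}(\mathbf{x}_{a})\mathbf{r}_{a}^{\top}(\mathbf{x}_{a})\right),\notag
\end{align}
where I rewrite the quadratic form as a trace so that the $\boldsymbol{\phi}_{a}$-dependent and $\mathbf{x}_{a}$-dependent parts appear as a product. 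Because the integrand is a sum, Fubini applies termwise. The first term integrates to ${\rm E}_{b_{a}(\boldsymbol{\phi}_{a})}[\log c(\boldsymbol{\phi}_{a})]$, since $b_{a}(\mathbf{x}_{a})$ integrates to one. This quantity depends only on the MF beliefs, so it does not depend on $\mathbf{x}_{a}$ or $\boldsymbol{\mu}_{a}$ and is absorbed into $c$.

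For the second term, I integrate over $\boldsymbol{\phi}_{a}$ inside the trace, using linearity of trace and expectation. This replaces $\mathbf{R}_{a}^{-1}(\boldsymbol{\phi}_{a})$ by ${\rm E}_{b_{a}(\boldsymbol{\phi}_{a})}[\mathbf{R}_{a}^{-1}(\boldsymbol{\phi}_{a})]=\bar{\mathbf{R}}_{a}^{-1}$, leaving
\begin{align}
  -\frac{1}{2}\int b_{a}(\mathbf{x}_{a})\,\mathbf{r}_{a}^{\top}(\mathbf{x}_{a})\bar{\mathbf{R}}_{a}^{-1}\mathbf{r}_{a}(\mathbf{x}_{a})\,{\rm d}\mathbf{x}_{a}.\notag
\end{align}
Next I add and subtract $-\frac{1}{2}\log\det(2\pi\bar{\mathbf{R}}_{a})$ inside the integral. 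That turns the integrand into $\log{\rm N}(\mathbf{r}_{a}(\mathbf{x}_{a});\mathbf{0},\bar{\mathbf{R}}_{a})$. The compensating term $\frac{1}{2}\log\det(2\pi\bar{\mathbf{R}}_{a})$ again depends only on $b_{a}(\boldsymbol{\phi}_{a})$, so it also goes into $c$.

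The only step needing care is that $\bar{\mathbf{R}}_{a}$ is well defined, i.e.\ that the expectation of $\mathbf{R}_{a}^{-1}(\boldsymbol{\phi}_{a})$ is invertible and symmetric positive definite. This follows because each $\mathbf{R}_{a}^{-1}(\boldsymbol{\phi}_{a})$ is symmetric positive definite, and an average of such matrices stays in that convex cone, assuming the expectation is finite. I would state finiteness of ${\rm E}[\mathbf{R}_{a}^{-1}]$ and ${\rm E}[\log c]$ as an implicit integrability assumption. I would also note that ``constant'' here means independent of $\mathbf{x}_{a}$ and $\boldsymbol{\mu}_{a}$, though possibly dependent on $\boldsymbol{\mu}_{i}$ for $i\in\mathcal{N}_{\rm MF}(a)$. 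That matches how the lemma is used for $\nabla_{\boldsymbol{\mu}_{a}}U_{a}$.

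For factor nodes with $\mathbf{x}_{a}$ empty the statement is vacuous. For those with $\boldsymbol{\phi}_{a}$ empty, the same computation holds with $\bar{\mathbf{R}}_{a}=\mathbf{R}_{a}$ and $b_{a}(\boldsymbol{\phi}_{a})\equiv1$.
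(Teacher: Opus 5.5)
Your proposal is correct and follows essentially the same route as the paper's proof. Both substitute (\ref{factor_model}) into (\ref{U_a}), integrate out $\boldsymbol{\phi}_{a}$ using the factorization so that $\mathbf{R}_{a}^{-1}(\boldsymbol{\phi}_{a})$ becomes $\bar{\mathbf{R}}_{a}^{-1}$, and absorb every term depending only on $b_{a}(\boldsymbol{\phi}_{a})$, including the normalizer of ${\rm N}(\cdot;\mathbf{0},\bar{\mathbf{R}}_{a})$, into $c$; you differ only in keeping a general $c(\boldsymbol{\phi}_{a})$ and stating the well-definedness and integrability of $\bar{\mathbf{R}}_{a}$ explicitly, whereas the paper writes the log-normalizer directly as $-\frac{1}{2}(\log{\rm det}(\mathbf{R}_{a}(\boldsymbol{\phi}_{a}))+n_{r_a}\log 2\pi)$ and leaves these points implicit.
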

\textit{Proof:} See Appendix \ref{Proof_simplied_U}.
$\hfill\blacksquare$

\begin{Lemma}
\label{Lemma_gradient_mu}
The gradient of $U_{a}$ in (\ref{U_a}) with respect to $\boldsymbol{\mu}_{a}$ can be calculated by
\begin{align}
  \nabla_{\boldsymbol{\mu}_{a}}U_{a}=\left[\begin{matrix}
    \frac{\partial U_{a}}{\partial \hat{\mathbf{x}}_{a}}-2\frac{\partial U_{a}}{\partial \mathbf{P}_{a}}\hat{\mathbf{x}}_{a}\\
    {\rm vec}\left(\frac{\partial U_{a}}{\partial \mathbf{P}_{a}}\right)
  \end{matrix}\right].\label{gradient_expectation}
\end{align}
\end{Lemma}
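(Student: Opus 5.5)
This is a chain-rule computation, changing variables between the mean/covariance parametrization and the Gaussian expectation parameters. For the Gaussian belief $b_{a}(\mathbf{x}_{a})={\rm N}(\mathbf{x}_{a};\hat{\mathbf{x}}_{a},\mathbf{P}_{a})$, take sufficient statistics $\mathcal{T}_{a}(\mathbf{x}_{a})={\rm col}(\mathbf{x}_{a},{\rm vec}(\mathbf{x}_{a}\mathbf{x}_{a}^{\top}))$ as in Appendix \ref{Examples_Exponential_Families}. The expectation parameters are then
\begin{align*}
  \boldsymbol{\mu}_{a}={\rm col}(\boldsymbol{\mu}_{a}^{(1)},{\rm vec}(\mathbf{M}_{a})),
\end{align*}
with $\boldsymbol{\mu}_{a}^{(1)}=\hat{\mathbf{x}}_{a}$ and $\mathbf{M}_{a}=\mathbf{P}_{a}+\hat{\mathbf{x}}_{a}\hat{\mathbf{x}}_{a}^{\top}$. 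I would regard $U_{a}$ as a function of $(\hat{\mathbf{x}}_{a},\mathbf{P}_{a})$, as is natural from Lemma \ref{Lemma_simplied_U}. The inverse map is
\begin{align*}
  \hat{\mathbf{x}}_{a}=\boldsymbol{\mu}_{a}^{(1)},\qquad \mathbf{P}_{a}=\mathbf{M}_{a}-\boldsymbol{\mu}_{a}^{(1)}(\boldsymbol{\mu}_{a}^{(1)})^{\top}.
\end{align*}
Differentiating $U_{a}(\hat{\mathbf{x}}_{a},\mathbf{P}_{a})$ through this map gives the result.

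\textbf{Step 1: the $\mathbf{M}_{a}$ block.} Only $\mathbf{P}_{a}$ depends on $\mathbf{M}_{a}$, and $\partial\mathbf{P}_{a}/\partial\mathbf{M}_{a}$ is the identity. Hence $\partial U_{a}/\partial\mathbf{M}_{a}=\partial U_{a}/\partial\mathbf{P}_{a}$, which vectorizes to the second block ${\rm vec}(\partial U_{a}/\partial\mathbf{P}_{a})$.

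\textbf{Step 2: the $\boldsymbol{\mu}_{a}^{(1)}$ block.} Here $\boldsymbol{\mu}_{a}^{(1)}$ affects $U_{a}$ directly through $\hat{\mathbf{x}}_{a}$ and indirectly through $\mathbf{P}_{a}$. Perturb $\boldsymbol{\mu}_{a}^{(1)}$ by ${\rm d}\mathbf{m}$. Then
\begin{align*}
  {\rm d}\mathbf{P}_{a}=-{\rm d}\mathbf{m}\,\hat{\mathbf{x}}_{a}^{\top}-\hat{\mathbf{x}}_{a}{\rm d}\mathbf{m}^{\top},
\end{align*}
and
\begin{align*}
  {\rm d}U_{a}=\left(\frac{\partial U_{a}}{\partial\hat{\mathbf{x}}_{a}}\right)^{\top}{\rm d}\mathbf{m}+{\rm tr}\!\left(\left(\frac{\partial U_{a}}{\partial\mathbf{P}_{a}}\right)^{\top}{\rm d}\mathbf{P}_{a}\right).
\end{align*}
Using the symmetry of $\partial U_{a}/\partial\mathbf{P}_{a}$, the trace term equals $-2\hat{\mathbf{x}}_{a}^{\top}(\partial U_{a}/\partial\mathbf{P}_{a}){\rm d}\mathbf{m}$. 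Reading off the gradient gives the first block, $\partial U_{a}/\partial\hat{\mathbf{x}}_{a}-2(\partial U_{a}/\partial\mathbf{P}_{a})\hat{\mathbf{x}}_{a}$. Stacking the two blocks gives (\ref{gradient_expectation}).

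\textbf{Main obstacle: the symmetry convention.} The only delicate point is how to treat symmetric matrices. $\mathbf{P}_{a}$ and $\mathbf{M}_{a}$ are symmetric, and $\mathcal{T}_{a}$ contains the redundant ${\rm vec}(\mathbf{x}_{a}\mathbf{x}_{a}^{\top})$ rather than a half-vectorization. So I will fix the convention that $\partial U_{a}/\partial\mathbf{P}_{a}$ is the symmetric matrix gradient obtained by treating $\mathbf{P}_{a}$ as unconstrained and evaluating at a symmetric point. This holds because $U_{a}$ in Lemma \ref{Lemma_simplied_U} is a Gaussian expectation that depends on $\mathbf{P}_{a}$ only through its symmetric part.

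With this convention, the identification of the gradient with respect to ${\rm vec}(\mathbf{M}_{a})$ is consistent with the inner product $\left<\boldsymbol{\lambda},\mathcal{T}\right>$ used in Theorem \ref{Theorem_stationary_points}. The factor $2$ in the first block then comes out exactly from the two symmetric terms in ${\rm d}\mathbf{P}_{a}$. I would state this convention explicitly at the start of the proof so that the chain rule in Step 2 is unambiguous.
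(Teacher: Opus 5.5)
Your proposal is correct and follows essentially the same route as the paper. Both invert the map to $\hat{\mathbf{x}}_{a}=\boldsymbol{\mu}_{a,1}$, ${\rm vec}(\mathbf{P}_{a})=\boldsymbol{\mu}_{a,2}-{\rm vec}(\boldsymbol{\mu}_{a,1}\boldsymbol{\mu}_{a,1}^{\top})$ and apply the chain rule; you use differentials and a trace where the paper writes the Jacobian as $\boldsymbol{\mu}_{a,1}\otimes\mathbf{I}+\mathbf{I}\otimes\boldsymbol{\mu}_{a,1}$, and its last step $(\boldsymbol{\mu}_{a,1}\otimes\mathbf{I}+\mathbf{I}\otimes\boldsymbol{\mu}_{a,1})^{\top}{\rm vec}(\partial U_{a}/\partial\mathbf{P}_{a})=2(\partial U_{a}/\partial\mathbf{P}_{a})\hat{\mathbf{x}}_{a}$ silently relies on the same symmetry of $\partial U_{a}/\partial\mathbf{P}_{a}$ that you state explicitly.
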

\textit{Proof:} See Appendix \ref{Proof_gradient_mu}.
$\hfill\blacksquare$

\subsubsection{Linearization}
Adopting first-order Taylor expansion, we linearize $\mathbf{r}_{a}\left(\mathbf{x}_{a}\right)$ as
\begin{align}
   \mathbf{r}_{a}\left(\mathbf{x}_{a}\right)=\mathbf{r}_{a}\left(\bar{\mathbf{x}}_{a}\right)+\mathbf{J}_{a}\left(\bar{\mathbf{x}}_{a}\right)(\mathbf{x}_{a}-\bar{\mathbf{x}}_{a}),\label{linearization}
\end{align}
where the linearization point $\bar{\mathbf{x}}_{a}$ could choose the latest estimation of $\mathbf{x}_{a}$, i.e., $\hat{\mathbf{x}}_{a}^{(l-1)}$, the mean of belief $b_{a}^{(l-1)}(\mathbf{x}_{a})$. The matrix $\mathbf{J}_{a}(\bar{\mathbf{x}}_{a})$ is the Jacobian of $\mathbf{r}_{a}(\mathbf{x}_{a})$ with respect to $\mathbf{x}_{a}$, evaluated at $\bar{\mathbf{x}}_{a}$. With the help of Lemma \ref{Lemma_simplied_U}, we have
\begin{align}
  \frac{\partial U_{a}}{\partial \hat{\mathbf{x}}_{a}}=&-\mathbf{J}_{a}^{\top}\left(\bar{\mathbf{x}}_{a}\right)\bar{\mathbf{R}}_{a}^{-1}\left(\mathbf{r}_{a}\left(\bar{\mathbf{x}}_{a}\right)+\mathbf{J}_{a}\left(\bar{\mathbf{x}}_{a}\right)\left(\hat{\mathbf{x}}_{a}-\bar{\mathbf{x}}_{a}\right)\right),\notag\\
  \frac{\partial U_{a}}{\partial \mathbf{P}_{a}}=&-\frac{1}{2}\mathbf{J}_{a}^{\top}\left(\bar{\mathbf{x}}_{a}\right)\bar{\mathbf{R}}_{a}^{-1}\mathbf{J}_{a}\left(\bar{\mathbf{x}}_{a}\right).\notag
\end{align}
When evaluating $(\nabla_{\boldsymbol{\mu}_{a}}U_{a})^{(l)}$, we apply these two equations at the linearization point $\bar{\mathbf{x}}_{a}=\hat{\mathbf{x}}_{a}^{(l-1)}$, and substitute $\hat{\mathbf{x}}_{a}=\hat{\mathbf{x}}_{a}^{(l-1)}$ and $\bar{\mathbf{R}}_{a}^{-1}=\boldsymbol{\Lambda}_{a}^{(l-1)}={\rm E}_{b_{a}^{(l-1)}(\boldsymbol{\phi}_{a})}[\mathbf{R}^{-1}_{a}\left(\boldsymbol{\phi}_{a}\right)]$, then
\begin{align}
  \left(\frac{\partial U_{a}}{\partial \hat{\mathbf{x}}_{a}}\right)^{(l)}=&-\mathbf{J}_{a}^{\top}(\hat{\mathbf{x}}_{a}^{(l-1)})\boldsymbol{\Lambda}_{a}^{(l-1)}\mathbf{r}_{a}(\hat{\mathbf{x}}_{a}^{(l-1)}),\label{Taylor_mean}\\
  \left(\frac{\partial U_{a}}{\partial \mathbf{P}_{a}}\right)^{(l)}=&-\frac{1}{2}\mathbf{J}_{a}^{\top}(\hat{\mathbf{x}}_{a}^{(l-1)})\boldsymbol{\Lambda}_{a}^{(l-1)}\mathbf{J}_{a}(\hat{\mathbf{x}}_{a}^{(l-1)}).\label{Taylor_cov}
\end{align}

In summary, to evaluate gradient $(\nabla_{\boldsymbol{\mu}_{a}}U_{a})^{(l)}$ by deterministic linearization, we first calculate (\ref{Taylor_mean}) and (\ref{Taylor_cov}), and then substitute them and $\hat{\mathbf{x}}_{a}=\hat{\mathbf{x}}_{a}^{(l-1)}$ into (\ref{gradient_expectation}). The linearization-based gradient estimator is naturally applicable to linear residual models $\mathbf{r}_{a}\left(\mathbf{x}_{a}\right)$ without any linear approximations.

Similarly, the expectation $\mathbf{A}_{a}^{(l)}$ can be updated by
\begin{align}
  \mathbf{A}_{a}^{(l)}=&\mathbf{J}_{a}(\hat{\mathbf{x}}_{a}^{(l-1)})\mathbf{P}_{a}^{(l-1)}\mathbf{J}_{a}^{\top}(\hat{\mathbf{x}}_{a}^{(l-1)})+\mathbf{r}_{a}(\hat{\mathbf{x}}_{a}^{(l-1)})\mathbf{r}_{a}^{\top}(\hat{\mathbf{x}}_{a}^{(l-1)}).\label{A_Taylor}
\end{align}

\subsubsection{Sampling} Combining Lemma \ref{Lemma_simplied_U} with Bonnet's theorem and Price's theorem \cite{bibPML}, which relate gradients of expectations to expectations of gradients, we obtain
\begin{align}
  \frac{\partial U_{a}}{\partial \hat{\mathbf{x}}_{a}}=&\int b_{a}\left(\mathbf{x}_{a}\right)\nabla_{\mathbf{x}_{a}}\log{\rm N}\left(\mathbf{r}_{a}\left(\mathbf{x}_{a}\right);\mathbf{0},\bar{\mathbf{R}}_{a}\right){\rm d}\mathbf{x}_{a},\label{gradient_mean}\\
  \frac{\partial U_{a}}{\partial \mathbf{P}_{a}}=&\frac{1}{2}\int b_{a}\left(\mathbf{x}_{a}\right)\nabla_{\mathbf{x}_{a}}^{2}\log{\rm N}\left(\mathbf{r}_{a}\left(\mathbf{x}_{a}\right);\mathbf{0},\bar{\mathbf{R}}_{a}\right){\rm d}\mathbf{x}_{a},\label{gradient_cov}
\end{align}
with
\begin{align}
  \nabla_{\mathbf{x}_{a}}\log{\rm N}\left(\mathbf{r}_{a}\left(\mathbf{x}_{a}\right);\mathbf{0},\bar{\mathbf{R}}_{a}\right)=&-\mathbf{J}_{a}^{\top}\left(\mathbf{x}_{a}\right)\bar{\mathbf{R}}_{a}^{-1}\mathbf{r}_{a}\left(\mathbf{x}_{a}\right),\notag\\
  \nabla^{2}_{\mathbf{x}_{a}}\log{\rm N}\left(\mathbf{r}_{a}\left(\mathbf{x}_{a}\right);\mathbf{0},\bar{\mathbf{R}}_{a}\right)=&-\mathbf{J}_{a}^{\top}\left(\mathbf{x}_{a}\right)\bar{\mathbf{R}}_{a}^{-1}\mathbf{J}_{a}\left(\mathbf{x}_{a}\right)-\tilde{\mathbf{H}}_{a},\notag
\end{align}
where $\tilde{\mathbf{H}}_{a}=\sum_{d=1}^{n_{r_{a}}}G_{a,d}(\mathbf{x}_{a})\mathbf{H}_{a,d}(\mathbf{x}_{a})$ and $G_{a,d}(\mathbf{x}_{a})$ is the $d$th element of vector $\bar{\mathbf{R}}_{a}^{-1}\mathbf{r}_{a}\left(\mathbf{x}_{a}\right)$. Let $\mathbf{J}_{a,d}(\mathbf{x}_{a})$ denote the $d$th row of $\mathbf{J}_{a}(\mathbf{x}_{a})$, then $\mathbf{H}_{a,d}(\mathbf{x}_{a})$ is the Jacobian of $\mathbf{J}_{a,d}^{\top}(\mathbf{x}_{a})$ with respect to $\mathbf{x}_{a}$, i.e., the Hessian of the $d$th element of $\mathbf{r}_{a}(\mathbf{x}_{a})$ with respect to $\mathbf{x}_{a}$. Computing $\tilde{\mathbf{H}}_{a}$ requires evaluating and accumulating the Hessians of all residual components, which is computationally expensive. Moreover, $\tilde{\mathbf{H}}_{a}$ originates from the second-order derivatives of the residual function and is typically small compared with the first-order term $\mathbf{J}_{a}^{\top}(\mathbf{x}_{a})\bar{\mathbf{R}}_{a}^{-1}\mathbf{J}_{a}(\mathbf{x}_{a})$ in the region where the Gaussian belief concentrates. Following Gauss-Newton method \cite{bibcovopt}, \cite{bibestrob}, we therefore drop $\tilde{\mathbf{H}}_{a}$ and retain only the first-order terms.

Whether computing $\frac{\partial U_{a}}{\partial \hat{\mathbf{x}}_{a}}$ or $\frac{\partial U_{a}}{\partial \mathbf{P}_{a}}$, we both take the unified form $\int b_{a}\left(\mathbf{x}_{a}\right)\mathcal{F}\left(\mathbf{x}_{a}\right){\rm d}\mathbf{x}_{a}$. To control the variance of gradient estimators, we use reparameterization trick \cite{bibGE}:
\begin{align}
  \int b_{a}\left(\mathbf{x}_{a}\right)\mathcal{F}\left(\mathbf{x}_{a}\right){\rm d}\mathbf{x}_{a}=&\int q\left(\boldsymbol{\epsilon}\right)\mathcal{H}\left(\boldsymbol{\epsilon}\right){\rm d}\boldsymbol{\epsilon},\label{reparameterization_trick}
\end{align}
where $q(\boldsymbol{\epsilon})={\rm N}(\boldsymbol{\epsilon};\mathbf{0},\mathbf{I})$, $\mathcal{H}(\boldsymbol{\epsilon})=\mathcal{F}(\hat{\mathbf{x}}_{a}+\mathbf{S}_{a}\boldsymbol{\epsilon})$, and $\mathbf{S}_{a}\mathbf{S}_{a}^{\top}=\mathbf{P}_{a}$. Applying Monte Carlo integration to (\ref{gradient_mean}) and (\ref{gradient_cov}),
\begin{align}
  \left(\frac{\partial U_{a}}{\partial \hat{\mathbf{x}}_{a}}\right)^{(l)}=&-\frac{1}{S}\sum_{s=1}^{S}\mathbf{J}_{a}^{\top}\big(\mathbf{x}_{a,s}^{(l-1)}\big)\boldsymbol{\Lambda}_{a}^{(l-1)}\mathbf{r}_{a}\big(\mathbf{x}_{a,s}^{(l-1)}\big),\label{sampling_mean}\\
  \left(\frac{\partial U_{a}}{\partial \mathbf{P}_{a}}\right)^{(l)}=&-\frac{1}{2S}\sum_{s=1}^{S}\mathbf{J}_{a}^{\top}\big(\mathbf{x}_{a,s}^{(l-1)})\boldsymbol{\Lambda}_{a}^{(l-1)}\mathbf{J}_{a}(\mathbf{x}_{a,s}^{(l-1)}\big),\label{sampling_cov}
\end{align}
where $\mathbf{x}_{a,s}^{(l-1)}=\hat{\mathbf{x}}_{a}^{(l-1)}+\mathbf{S}_{a}^{(l-1)}\boldsymbol{\epsilon}_{s}$, $\mathbf{S}_{a}^{(l-1)}(\mathbf{S}_{a}^{(l-1)})^{\top}=\mathbf{P}_{a}^{(l-1)}$, $\boldsymbol{\epsilon}_{s}$ is sampled from ${\rm N}(\boldsymbol{\epsilon}_{s};\mathbf{0},\mathbf{I})$, and $S$ is the number of Monte Carlo samples.

In summary, to evaluate gradient $(\nabla_{\boldsymbol{\mu}_{a}}U_{a})^{(l)}$ by sampling, we calculate (\ref{sampling_mean}) and (\ref{sampling_cov}), and substitute them into (\ref{gradient_expectation}). For expectation $\mathbf{A}_{a}^{(l)}$,
\begin{align}
  \mathbf{A}_{a}^{(l)}=&\frac{1}{S}\sum_{s=1}^{S}\mathbf{r}_{a}(\mathbf{x}_{a,s}^{(l-1)})\mathbf{r}_{a}^{\top}(\mathbf{x}_{a,s}^{(l-1)}).\label{A_sampling}
\end{align}

\subsubsection{NF}
Still consider equation (\ref{reparameterization_trick}). To better handle the nonlinearity of the target function $\mathcal{H}(\boldsymbol{\epsilon})$, we adopt an importance sampling formulation and draw samples from an alternative proposal density $p\left(\boldsymbol{\epsilon}\right)$. Then,
\begin{align}
  \int b_{a}\left(\mathbf{x}_{a}\right)\mathcal{F}\left(\mathbf{x}_{a}\right){\rm d}\mathbf{x}_{a}=&\int \tilde{w}\left(\boldsymbol{\epsilon}\right)p\left(\boldsymbol{\epsilon}\right)\mathcal{H}\left(\boldsymbol{\epsilon}\right){\rm d}\boldsymbol{\epsilon},\label{importance_sampling}
\end{align}
where $\tilde{w}\left(\boldsymbol{\epsilon}\right)=\frac{q\left(\boldsymbol{\epsilon}\right)}{p\left(\boldsymbol{\epsilon}\right)}$.

The density $p\left(\boldsymbol{\epsilon}\right)$ is modeled by Real NVP, which defines a bijective transformation $\boldsymbol{\epsilon}=T_{\boldsymbol{\theta}}(\mathbf{y})$ and $\mathbf{y}$ samples from $p(\mathbf{y})={\rm N}(\mathbf{y};\mathbf{0},\mathbf{I})$. The proposal $p\left(\boldsymbol{\epsilon}\right)$ is then induced by this transformation. In practice, $T_{\boldsymbol{\theta}}(\mathbf{y})$ is conditioned on features, such as sensor measurements, beliefs, and noise covariances, allowing the transformation to adapt to available information. These quantities are encoded in a feature vector $\boldsymbol{\theta}$, which will be detailed in Section~\ref{sec::multirobot_Collaborative_Localization}. By applying the change of variables to (\ref{importance_sampling}) and treating $\mathbf{y}$ as the integration variable, we obtain
\begin{align}
  \int b_{a}\left(\mathbf{x}_{a}\right)\mathcal{F}\left(\mathbf{x}_{a}\right){\rm d}\mathbf{x}_{a}=&\int w\left(\mathbf{y}\right)p\left(\mathbf{y}\right)\mathcal{H}\left(T_{\boldsymbol{\theta}}\left(\mathbf{y}\right)\right){\rm d}\mathbf{y},\label{nn_sampling}
\end{align}
where
\begin{align}
  w\left(\mathbf{y}\right)=&\frac{q\left(T_{\boldsymbol{\theta}}(\mathbf{y})\right)\left|{\rm det}\left(\frac{\partial T_{\boldsymbol{\theta}}(\mathbf{y})}{\partial \mathbf{y}^{\top}}\right)\right|}{p(\mathbf{y})}\notag\\
  =&\exp\left(-0.5\left(\Vert T_{\boldsymbol{\theta}}(\mathbf{y})\Vert^{2}_{2}-\Vert\mathbf{y}\Vert^{2}_{2}\right)\right)\left|{\rm det}\left(\frac{\partial T_{\boldsymbol{\theta}}(\mathbf{y})}{\partial \mathbf{y}^{\top}}\right)\right|.\notag
\end{align}

Applying (\ref{nn_sampling}) to compute $\frac{\partial U_{a}}{\partial \hat{\mathbf{x}}_{a}}$ and $\frac{\partial U_{a}}{\partial \mathbf{P}_{a}}$, we have
\begin{align}
    \left(\frac{\partial U_{a}}{\partial \hat{\mathbf{x}}_{a}}\right)^{(l)}=&-\frac{1}{S}\sum_{s=1}^{S}w(\mathbf{y}_{s})\mathbf{J}_{a}^{\top}\big(\mathbf{x}_{a,s}^{(l-1)}\big)\boldsymbol{\Lambda}_{a}^{(l-1)}\mathbf{r}_{a}\big(\mathbf{x}_{a,s}^{(l-1)}\big),\label{NN_mean}\\
  \left(\frac{\partial U_{a}}{\partial \mathbf{P}_{a}}\right)^{(l)}=&-\frac{1}{2S}\sum_{s=1}^{S}w(\mathbf{y}_{s})\mathbf{J}_{a}^{\top}\big(\mathbf{x}_{a,s}^{(l-1)}\big)\boldsymbol{\Lambda}_{a}^{(l-1)}\mathbf{J}_{a}\big(\mathbf{x}_{a,s}^{(l-1)}\big),\label{NN_cov}
\end{align}
where $\mathbf{x}_{a,s}^{(l-1)}=\hat{\mathbf{x}}_{a}^{(l-1)}+\mathbf{S}_{a}^{(l-1)}T_{\boldsymbol{\theta}^{(l)}}(\mathbf{y}_{s})$, $\mathbf{S}_{a}^{(l-1)}(\mathbf{S}_{a}^{(l-1)})^{\top}=\mathbf{P}_{a}^{(l-1)}$, and $\mathbf{y}_{s}$ is sampled from ${\rm N}(\mathbf{y}_{s};\mathbf{0},\mathbf{I})$.

For the expectation $\mathbf{A}_{a}^{(l)}$,
\begin{align}
  \mathbf{A}_{a}^{(l)}=&\frac{1}{S}\sum_{s=1}^{S}w(\mathbf{y}_{s})\mathbf{r}_{a}(\mathbf{x}_{a,s}^{(l-1)})\mathbf{r}_{a}^{\top}(\mathbf{x}_{a,s}^{(l-1)}).\label{A_NN}
\end{align}

\begin{Remark}
Through above derivation, Real NVP is seamlessly integrated into the MP algorithm. One of its roles is to construct a learnable proposal $p(\boldsymbol{\epsilon})$ that improves sampling efficiency. Although equation (\ref{nn_sampling}) no longer displays the proposal density $p(\boldsymbol{\epsilon})$ explicitly, it is mathematically equivalent to the (\ref{importance_sampling}), which provides a clear and interpretable justification for introducing Real NVP. Beyond this, the Real NVP is further optimized through end-to-end training within the full MP algorithm. This allows the network to learn richer, task-specific mappings that are difficult to design analytically.
\end{Remark}

\subsection{Extension to Lie Groups}
We further consider the case that variables in BP part are all defined on Lie groups, which is common for robot state estimation. Define product manifold $\mathcal{M}_{i}\triangleq{\rm SO}(3)\times\mathbb{R}^{3}$. For variable node $i\in\mathcal{I}_{\rm BP}$, the variable is $\mathcal{X}_{i}\in\mathcal{M}_{i}$. For factor node $a\in\mathcal{A}_{\rm BP}$, the variable $\mathcal{X}_{a}\in\mathcal{M}_{a}$ consists of all $\mathcal{X}_{i},i\in\mathcal{N}_{\rm BP}(a)$, with product manifold $\mathcal{M}_{a}$ consisting of all $\mathcal{M}_{i},i\in\mathcal{N}_{\rm BP}(a)$. The factors are denoted by $\bar{f}_{a}\left(\mathcal{X}_{a},\boldsymbol{\phi}_{a}\right),\forall a\in\mathcal{A}$.

Inspired by ``lift-solve-retract" scheme \cite{bibLie}, the Riemannian optimization problem can be transformed into an optimization problem in Euclidean space. Assume that we have initial estimates of beliefs, denoted by $b_{i}\left(\mathcal{X}_{i}\right)={\rm N}\big(\mathcal{X}_{i};\hat{\mathcal{X}}_{i},\boldsymbol{\Sigma}_{i}\big)$. We use $\hat{\mathcal{X}}_{i},i\in\mathcal{I}_{\rm BP}$ as tangent points. Let $\bar{\mathcal{X}}_{i}=\hat{\mathcal{X}}_{i}$ and $\bar{\mathcal{X}}_{a}=(\bar{\mathcal{X}}_{i})_{i\in\mathcal{N}_{\rm BP}(a)}$. Define retraction functions $\mathcal{X}_{i}=\mathcal{R}_{\bar{\mathcal{X}}_{i}}(\delta\mathbf{x}_{i})\triangleq\bar{\mathcal{X}}_{i}\boxplus\delta\mathbf{x}_{i},\forall i\in\mathcal{I}_{\rm BP}$ and $\mathcal{X}_{a}=\mathcal{R}_{\bar{\mathcal{X}}_{a}}(\delta\mathbf{x}_{a})\triangleq\bar{\mathcal{X}}_{a}\boxplus\delta\mathbf{x}_{a},\forall a\in\mathcal{A}_{\rm BP}$. Then, we can use vectors $\delta\mathbf{x}_{i}$ and $\delta\mathbf{x}_{a}$ in Euclidean space as latent variables. Define $f_{a}\left(\delta\mathbf{x}_{a},\boldsymbol{\phi}_{a}\right)\triangleq\bar{f}_{a}\left(\bar{\mathcal{X}}_{a}\boxplus\delta\mathbf{x}_{a},\boldsymbol{\phi}_{a}\right)$. When using $f_{a}\left(\delta\mathbf{x}_{a},\boldsymbol{\phi}_{a}\right)$ to construct factor graph and viewing $\delta\mathbf{x}_{a}$, $\delta\mathbf{x}_{i}$, and $\boldsymbol{\phi}_{i}$ as variables, the inference problems on Lie group can be converted to that in Euclidean space. Then, Algorithm \ref{Alg_BPMF} can be used to infer beliefs of $\delta\mathbf{x}_{a}$, $\delta\mathbf{x}_{i}$, and $\boldsymbol{\phi}_{i}$. Combining $b_{i}\left(\mathcal{X}_{i}\right)={\rm N}\big(\mathcal{X}_{i};\hat{\mathcal{X}}_{i},\boldsymbol{\Sigma}_{i}\big)$ with $\mathcal{X}_{i}=\mathcal{R}_{\bar{\mathcal{X}}_{i}}(\delta\mathbf{x}_{i})$, the initial belief of $\delta\mathbf{x}_{i}$ is set as
\begin{align}
  b^{(0)}_{i}(\delta\mathbf{x}_{i})=&{\rm N}(\delta\mathbf{x}_{i};\mathbf{0},\mathbf{P}_{i}^{(0)}),\label{initial_b_x_i}
\end{align}
where $\mathbf{P}_{i}^{(0)}=\boldsymbol{\Sigma}_{i}$. After running Algorithm \ref{Alg_BPMF}, we can obtain $b^{(l)}_{i}(\delta\mathbf{x}_{i})={\rm N}(\delta\mathbf{x}_{i};\delta\hat{\mathbf{x}}_{i}^{(l)},\mathbf{P}_{i}^{(l)})$. Finally, belief $b_{i}\left(\mathcal{X}_{i}\right)={\rm N}\big(\mathcal{X}_{i};\hat{\mathcal{X}}_{i},\boldsymbol{\Sigma}_{i}\big)$ can be updated by \cite{BP3}
\begin{align}
  \label{alg_warp}
  \hat{\mathcal{X}}_{i}=\mathcal{R}_{\bar{\mathcal{X}}_{i}}(\delta\hat{\mathbf{x}}_{i}^{(l)}),\boldsymbol{\Sigma}_{i}=\mathbf{J}_{r}(\delta\hat{\mathbf{x}}_{i}^{(l)})\mathbf{P}_{i}^{(l)}\mathbf{J}_{r}^{\top}(\delta\hat{\mathbf{x}}_{i}^{(l)}),
\end{align}
where $\mathbf{J}_{r}(\delta\hat{\mathbf{x}}_{i}^{(l)})$ is the right Jacobian matrix. The MP on Lie groups is summarized in Algorithm \ref{Alg2}.

\begin{algorithm}[tb]
  \small
  \hsize=\linewidth
  \caption{MP on Lie Groups}
  \label{Alg2}
  Initialize $\bar{\mathcal{X}}_{a}=(\bar{\mathcal{X}}_{i})_{i\in\mathcal{N}_{\rm BP}(a)}$;\\
  Initialize $b^{(0)}_{i}(\delta\mathbf{x}_{i}),\forall i\in\mathcal{I}_{\rm BP}$ by (\ref{initial_b_x_i});\\
  View $\delta\mathbf{x}_{a}$, $\delta\mathbf{x}_{i}$, and $\boldsymbol{\phi}_{i}$ as variables and run Algorithm \ref{Alg_BPMF};\\
  Update $b_{i}\left(\mathcal{X}_{i}\right)={\rm N}\big(\mathcal{X}_{i};\hat{\mathcal{X}}_{i},\boldsymbol{\Sigma}_{i}\big),\forall i\in\mathcal{I}_{\rm BP}$ by (\ref{alg_warp});
\end{algorithm}

\section{Multirobot Collaborative Localization}
\label{sec::multirobot_Collaborative_Localization}
In this section, we give the application of the proposed MP algorithm to multirobot collaborative localization. Each robot is equipped with odometry, GNSS, and UWB. The UWB measures inter-robot distances. We first introduce measurement models and noise assumptions. Then, we present the specific expressions of $(\nabla_{\boldsymbol{\mu}_{i}}U_{a})^{(l)}$ in step \ref{Alg_BPMF_grad_2} of Algorithm \ref{Alg_BPMF}. Finally, we discuss some implementation details.

\subsection{Measurement Models and Noise Assumptions}
The world frame is the same for all robots. Denote the state of robot $n\in\mathcal{N}=\{1,2,\ldots,N\}$ at time step $k$ as $\mathcal{X}_{k,n}=(\mathbf{R}_{k,n},\mathbf{p}_{k,n})\in\mathcal{M}={\rm SO}(3)\times\mathbb{R}^{3}$, where $\mathbf{R}_{k,n}\in{\rm SO}(3)$ and $\mathbf{p}_{k,n}\in\mathbb{R}^{3}$ represent the orientation and position of robot $n$ in the world frame, respectively. The body frame of each robot is attached to its odometry. As introduced in Section \ref{sec:MP}, we view error state $\delta \mathbf{x}_{k,n}=\mathcal{X}_{k,n}\boxminus\bar{\mathcal{X}}_{k,n}$ as variables in factor graphs, where $\bar{\mathcal{X}}_{k,n}$ is set as the previous estimate of $\mathcal{X}_{k,n}$. The estimation framework is designed based on sliding window estimation. At time step $k$, states from $k_0$ to $k$ require simultaneous estimation.

\subsubsection{Odometry Model}
The odometry measures the relative motion of robot $n$ from time step $k-1$ to $k$. We model its measurement $\mathbf{z}_{k,n}^{\mathcal{O}}\in\mathcal{M}$ as
\begin{align}
  \mathbf{z}_{k,n}^{\mathcal{O}}=\mathbf{h}^{\mathcal{O}}\left(\mathcal{X}_{k-1,n},\mathcal{X}_{k,n}\right)\boxplus\mathbf{v}^{\mathcal{O}}_{k,n},\label{measurement_model_odo}
\end{align}
where noise $\mathbf{v}^{\mathcal{O}}_{k,n}\in\mathbb{R}^{6}$ is zero-mean Gaussian distributed with covariance $\mathbf{P}^{\mathcal{O}}_{k,n}$, and relative motion model
\begin{align*}
  \mathbf{h}^{\mathcal{O}}\left(\mathcal{X}_{k-1,n},\mathcal{X}_{k,n}\right)=\left[\begin{matrix}
    \mathbf{R}_{k-1,n}^{\top}\mathbf{R}_{k,n}\\
    \mathbf{R}_{k-1,n}^{\top}(\mathbf{p}_{k,n}-\mathbf{p}_{k-1,n})
  \end{matrix}\right].
\end{align*}
Define residual $\mathbf{r}^{\mathcal{O}}_{k,n}=\mathbf{z}_{k,n}^{\mathcal{O}}\boxminus\mathbf{h}^{\mathcal{O}}\left(\mathcal{X}_{k-1,n},\mathcal{X}_{k,n}\right)$. From (\ref{measurement_model_odo}), we have $p\big(\mathbf{r}^{\mathcal{O}}_{k,n}|\delta\mathbf{x}_{k-1,n},\delta\mathbf{x}_{k,n},\mathbf{P}^{\mathcal{O}}_{k,n}\big)={\rm N}\big(\mathbf{r}^{\mathcal{O}}_{k,n};\mathbf{0}, \mathbf{P}^{\mathcal{O}}_{k,n}\big)$. To introduce a likelihood thus facilitating derivation, we model a virtual measurement $\tilde{\mathbf{z}}_{k,n}^{\mathcal{O}}$ by $p\big(\tilde{\mathbf{z}}_{k,n}^{\mathcal{O}}|\delta\mathbf{x}_{k-1,n},\delta\mathbf{x}_{k,n},\mathbf{P}^{\mathcal{O}}_{k,n}\big)={\rm N}\big(\tilde{\mathbf{z}}_{k,n}^{\mathcal{O}};\mathbf{r}^{\mathcal{O}}_{k,n}, \mathbf{P}^{\mathcal{O}}_{k,n}\big)$ and fix its observed value to $\tilde{\mathbf{z}}_{k,n}^{\mathcal{O}}=\mathbf{0}$. This virtual measurement formulation does not change the underlying probabilistic model, since fixing the virtual measurement to zero yields a likelihood term identical to that of the original residual, and therefore represents the same factor in the factor graph. Denote odometry factor
\begin{align}
  f_{k,n}^{\mathcal{O}}(\delta\mathbf{x}_{k-1,n},\delta\mathbf{x}_{k,n},\mathbf{P}^{\mathcal{O}}_{k,n})=p(\tilde{\mathbf{z}}^{\mathcal{O}}_{k,n}|\delta\mathbf{x}_{k-1,n},\delta\mathbf{x}_{k,n},\mathbf{P}^{\mathcal{O}}_{k,n}).\label{factor_odo}
\end{align}

To estimate the uncertain odometry noise covariances and enhance adaptability, we view $\mathbf{P}^{\mathcal{O}}_{k,n}$ as a latent variable and model its prior as inverse Wishart distribution $p(\mathbf{P}^{\mathcal{O}}_{k,n})={\rm IW}(\mathbf{P}^{\mathcal{O}}_{k,n};\mathbf{T}^{\mathcal{O}}_{k,n},t^{\mathcal{O}}_{k,n})$. Denote factor
\begin{align}
  f_{k,n}^{\mathcal{C}}(\mathbf{P}^{\mathcal{O}}_{k,n})=p(\mathbf{P}^{\mathcal{O}}_{k,n}).\label{factor_odo_R}
\end{align}
Denote the estimated belief of $\mathbf{P}^{\mathcal{O}}_{k-1,n}$ at time step $k-1$ as $b(\mathbf{P}^{\mathcal{O}}_{k-1,n})={\rm IW}(\mathbf{P}^{\mathcal{O}}_{k-1,n};\hat{\mathbf{T}}^{\mathcal{O}}_{k-1,n},\hat{t}^{\mathcal{O}}_{k-1,n})$. Motivated by heuristics mechanism in \cite{bibVBAKF2}, we set $\mathbf{T}^{\mathcal{O}}_{k,n}=\rho \hat{\mathbf{T}}^{\mathcal{O}}_{k-1,n}$ and  $t^{\mathcal{O}}_{k,n}=\rho \hat{t}^{\mathcal{O}}_{k-1,n}$, where $\rho\in(0,1]$ is a forgetting factor. This can retain the previous estimates of the noise covariances while accommodating time-varying noise covariances.

\subsubsection{GNSS Model}
Denote the measurement from GNSS as $\mathbf{z}^{\mathcal{G}}_{k,n}\in\mathbb{R}^{3}$. The GNSS measurement is modeled as
\begin{align}
  \mathbf{z}^{\mathcal{G}}_{k,n}=\mathbf{R}_{k,n}\mathbf{p}^{b_{n}}_{g_{n}}+\mathbf{p}_{k,n}+\mathbf{v}^{\mathcal{G}}_{k,n},\label{measurement_model_gnss}
\end{align}
where $\mathbf{v}^{\mathcal{G}}_{k,n}$ is Gaussian distributed noise with zero mean and covariance matrix $\mathbf{P}^{\mathcal{G}}_{k,n}$, and $\mathbf{p}^{b_{n}}_{g_{n}}$ is the GNSS antenna position with respect to body frame of robot $n$, i.e., the extrinsic parameter.  Then, $p(\mathbf{z}^{\mathcal{G}}_{k,n}|\delta\mathbf{x}_{k,n})={\rm N}(\mathbf{z}^{\mathcal{G}}_{k,n};\mathbf{R}_{k,n}\mathbf{p}^{b_{n}}_{g_{n}}+\mathbf{p}_{k,n},\mathbf{P}^{\mathcal{G}}_{k,n})$. Denote the GNSS factor
\begin{align}
  f^{\mathcal{G}}_{k,n}(\delta\mathbf{x}_{k,n})=p(\mathbf{z}^{\mathcal{G}}_{k,n}|\delta\mathbf{x}_{k,n}).\label{factor_gnss}
\end{align}
Denote the residual $\mathbf{r}^{\mathcal{G}}_{k,n}=\mathbf{z}^{\mathcal{G}}_{k,n}-\mathbf{R}_{k,n}\mathbf{p}^{b_{n}}_{g_{n}}-\mathbf{p}_{k,n}$.

\subsubsection{UWB Model}
Let $z_{k,n}^{\mathcal{U},m}\in\mathbb{R}$ denote the ranging between UWB modules mounted on robots $n$ and $m$. We have
\begin{align}
  z_{k,n}^{\mathcal{U},m}=\left\Vert\Delta\mathbf{p}_{k,n}^{m}\right\Vert+v_{k,n}^{\mathcal{U},m},\label{measurement_model_uwb}
\end{align}
where the relative position $\Delta\mathbf{p}_{k,n}^{m}=\mathbf{R}_{k,n}\mathbf{p}^{b_{n}}_{u_{n}}+\mathbf{p}_{k,n}-\mathbf{R}_{k,m}\mathbf{p}^{b_{m}}_{u_{m}}-\mathbf{p}_{k,m}$, extrinsic parameters $\mathbf{p}^{b_{n}}_{u_{n}}$ and $\mathbf{p}^{b_{m}}_{u_{m}}$ are positions of UWB modules in the body frame of robot $n$ and $m$, respectively. Due to non-line-of-sight and multi-path propagation, UWB ranging noise can alternate between approximately Gaussian behavior and distinctly heavy-tailed characteristics. Thus, noise $v_{k,n}^{\mathcal{U},m}$ is modeled by GSTM \cite{bibVBRKF2}. The unknown mixing coefficient $0\leq\pi_{k,n}^{m}\leq 1$ specifies the relative contribution of the Gaussian and Student's \textit{t} components. Conditioned on $\pi_{k,n}^{m}$, the density of $v_{k,n}^{\mathcal{U},m}$ is
\begin{align}
  \label{GSTMcondpi}
  p(v_{k,n}^{\mathcal{U},m}|\pi_{k,n}^{m})=&\pi_{k,n}^{m}{\rm N}(v_{k,n}^{\mathcal{U},m};0,P_{k,n}^{\mathcal{U},m})\notag\\
  &+(1-\pi_{k,n}^{m}){\rm St}(v_{k,n}^{\mathcal{U},m};0,P_{k,n,0}^{\mathcal{U},m},\nu_{k,n}^{m}),
\end{align}
where $P_{k,n}^{\mathcal{U},m}$ and $P_{k,n,0}^{\mathcal{U},m}$ denotes variance and $\nu_{k,n}^{m}$ is a shape parameter. The Student's \textit{t} distribution can be expressed as
\begin{align}
  {\rm St}\big(v_{k,n}^{\mathcal{U},m};0,P_{k,n,0}^{m},\nu_{k,n}^{m}\big)=&\int{{\rm N}\big(v_{k,n}^{\mathcal{U},m};0,P_{k,n,0}^{\mathcal{U},m}/\xi_{k,n}^{m}\big)}\notag\\
  &\times{\rm G}\left(\xi_{k,n}^{m};\nu_{k,n}^{m}/2,\nu_{k,n}^{m}/2\right){\rm d}\xi_{k,n}^{m},\notag
\end{align}
with $\xi_{k,n}^{m}$ serving as an auxiliary latent variable. The prior of the mixing weight $\pi_{k,n}^{m}$ follows a Beta distribution:
\begin{align}
  p\left(\pi_{k,n}^{m}\right)=&{\rm Beta}\left(\pi_{k,n}^{m};a_{k,n}^{m},1-a_{k,n}^{m}\right),\notag
\end{align}
where $0<a_{k,n}^{m}<1$ is a prior parameter. Introducing a binary indicator $y_{k,n}^{m}\in\left\{0,1\right\}$, the conditional density in (\ref{GSTMcondpi}) admits the hierarchical representation in (\ref{hierarchical_model}). Denote UWB factor
\begin{align}
  &f^{\mathcal{U},m}_{k,n}(\delta\mathbf{x}_{k,n},\delta\mathbf{x}_{k,m},\xi_{k,n}^{m},y_{k,n}^{m})\notag\\
  =&p\big(z_{k,n}^{\mathcal{U},m}|\delta\mathbf{x}_{k,n},\delta\mathbf{x}_{k,m},\xi_{k,n}^{m},y_{k,n}^{m}\big),\label{factor_uwb}
\end{align}
and noise parameter factors
\begin{align}
&f^{\xi,m}_{k,n}(\xi_{k,n}^{m})=p(\xi_{k,n}^{m}),f^{\pi,m}_{k,n}(\pi_{k,n}^{m})=p(\pi_{k,n}^{m}),\notag\\
&f^{y,m}_{k,n}(y_{k,n}^{m},\pi_{k,n}^{m})=p(y_{k,n}^{m}|\pi_{k,n}^{m}).\label{factor_uwb_R}
\end{align}
Define residual $r^{\mathcal{U},m}_{k,n}=z^{\mathcal{U},m}_{k,n}-\Vert\Delta\mathbf{p}_{k,n}^{m}\Vert$.

\begin{figure*}[t]
\begin{align}
  &p(z_{k,n}^{\mathcal{U},m}|\delta\mathbf{x}_{k,n},\delta\mathbf{x}_{k,m},\xi_{k,n}^{m},y_{k,n}^{m})={\rm N}(z_{k,n}^{\mathcal{U},m};\Vert\Delta\mathbf{p}_{k,n}^{m}\Vert,P_{k,n}^{\mathcal{U},m})^{y_{k,n}^{m}}{\rm N}(z_{k,n}^{\mathcal{U},m};\Vert\Delta\mathbf{p}_{k,n}^{m}\Vert,P_{k,n,0}^{\mathcal{U},m}/\xi_{k,n}^{m})^{1-y_{k,n}^{m}},\notag\\
  &p\left(y_{k,n}^{m}|\pi_{k,n}^{m}\right)={\rm Bern}\left(y_{k,n}^{m};\pi_{k,n}^{m}\right),p\left(\xi_{k,n}^{m}\right)={\rm G}\left(\xi_{k,n}^{m};\nu_{k,n}^{m}/2,\nu_{k,n}^{m}/2\right).\label{hierarchical_model}
\end{align}
  \hrule
  \vspace{-5pt}
\end{figure*}

\subsubsection{Prior of States}
Let $\mathcal{N}_{k,n}$ denote the set of indices of robots that can be ranged by UWB on robot $n$ at time step $k$. Define measurement sets $\mathbf{z}_{k,n}^{\mathcal{U}}=\{z_{k,n}^{\mathcal{U},m}\}_{m\in\mathcal{N}_{k,n}}$, $\mathbf{z}_{k,n}=\{\tilde{\mathbf{z}}_{k,n}^{\mathcal{O}}\}\bigcup\{\mathbf{z}_{k,n}^{\mathcal{G}}\}\bigcup\mathbf{z}_{k,n}^{\mathcal{U}}$, and $\mathbf{z}_{s:k}=\bigcup_{t=s}^{k}\bigcup_{n=1}^{N}\mathbf{z}_{t,n}$, describing ranging obtained by robot $n$ at time step $k$, measurements obtained by robot $n$ at time step $k$, all measurements from time step $s$ to $k$, respectively. The posterior $p(\mathcal{X}_{k_0,n}|\mathbf{z}_{1:k_0})={\rm N}(\mathcal{X}_{k_0,n};\hat{\mathcal{X}}_{k_0|k_0,n},\mathbf{P}^{\mathcal{P}}_{k_0|k_0,n})$ could be calculated at time step $k_{0}$. Denote $\delta\bar{\mathbf{x}}_{k_0,n}=\hat{\mathcal{X}}_{k_0|k_0,n}\boxminus\bar{\mathcal{X}}_{k_0,n}$, where $\bar{\mathcal{X}}_{k_0,n}$ is the current tangent point of $\mathcal{X}_{k_0,n}$. According to \cite{BP3}, the posterior of error state $\delta\mathbf{x}_{k_0,n}$ can be approximated as $p(\delta\mathbf{x}_{k_0,n}|\mathbf{z}_{1:k_0})={\rm N}(\delta\mathbf{x}_{k_0,n};\delta\bar{\mathbf{x}}_{k_0,n},\bar{\mathbf{P}}^{\mathcal{P}}_{k_0|k_0,n})$, where $\bar{\mathbf{P}}^{\mathcal{P}}_{k_0|k_0,n}=\mathbf{J}_{r}^{-1}(\delta\bar{\mathbf{x}}_{k_0,n})\mathbf{P}^{\mathcal{P}}_{k_0|k_0,n}\mathbf{J}_{r}^{-\top}(\delta\bar{\mathbf{x}}_{k_0,n})$. Denote prior factor
\begin{align}
  f^{\mathcal{P}}_{k_0,n}(\delta\mathbf{x}_{k_0,n})=p(\delta\mathbf{x}_{k_0,n}|\mathbf{z}_{1:k_0}).\label{factor_prior}
\end{align}

\subsubsection{Factor Graph}
Define $\boldsymbol{\Theta}_{k,n}=\bigcup_{m\in\mathcal{N}_{k,n}}\boldsymbol{\Theta}_{k,n}^{m}$, where $\boldsymbol{\Theta}_{k,n}^{m}=\{\xi_{k,n}^{m},y_{k,n}^{m},\pi_{k,n}^{m}\}$ contains all noise parameters in GSTM. Let latent variable sets
\begin{align*}
  \boldsymbol{\Phi}_{s:k,n}=&\left\{\delta\mathbf{x}_{s,n},\ldots,\delta\mathbf{x}_{k,n}\right\}\bigcup\left\{\mathbf{P}_{s+1,n}^{\mathcal{O}},\ldots,\mathbf{P}_{k,n}^{\mathcal{O}}\right\}\notag\\
  &\bigcup\boldsymbol{\Theta}_{s+1,n}\bigcup\ldots\bigcup\boldsymbol{\Theta}_{k,n},
\end{align*}
and $\boldsymbol{\Phi}=\bigcup_{n=1}^{N}\boldsymbol{\Phi}_{k_{0}:k,n}$. The set $\boldsymbol{\Phi}$ consists of all latent variables need to be estimated at time step $k$. The posterior $p(\boldsymbol{\Phi}|\mathbf{z}_{1:k})$ can be calculated as $p(\boldsymbol{\Phi}|\mathbf{z}_{1:k})\propto f(\boldsymbol{\Phi})$, where $f(\boldsymbol{\Phi})=p(\boldsymbol{\Phi},\mathbf{z}_{k_{0}+1:k}|\mathbf{z}_{1:k_{0}})$. For the ease of distributed design, we approximate $p(\{\delta\mathbf{x}_{k_{0},n}\}_{n\in\mathcal{N}}|\mathbf{z}_{1:k_0})=\prod_{n=1}^{N}f_{k_0,n}^{\mathcal{P}}$. With noninformative prior $p(\delta\mathbf{x}_{s,n})\propto1$ for all $s=k_{0}+1,\ldots,k$ and $n=1,\ldots,N$, $f(\boldsymbol{\Phi})$ can be written as
\begin{align*}
  f\left(\boldsymbol{\Phi}\right)\propto\prod_{n=1}^{N}f_{k_0,n}^{\mathcal{P}}\prod_{s=k_0+1}^{k}f_{s,n}^{\mathcal{O}}f_{s,n}^{\mathcal{C}}f_{s,n}^{\mathcal{G}}\prod_{m\in\mathcal{N}_{s,n}}f_{s,n}^{\mathcal{U},m}f_{s,n}^{\Theta,m},
\end{align*}
where $f_{s,n}^{\Theta,m}=f_{s,n}^{\xi,m}f_{s,n}^{\pi,m}f_{s,n}^{y,m}$, and the associated factor graph is given in Fig. \ref{fig_Factorgraph}. We assign all state variables to the BP part, while the remaining variables, i.e., noise parameters, are all assigned to the MF part. In this case, the factor nodes connected to BP variable nodes all satisfy (\ref{factor_model}).

\begin{figure}[tb]
  \centering
  \includegraphics[scale=1.09]{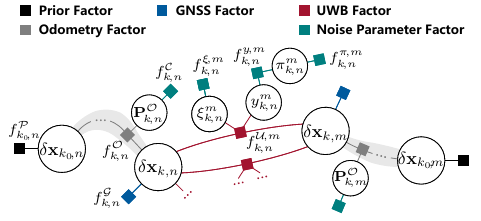}
  \caption{Factor graph for the multirobot collaborative localization problem, including odometry factor (\ref{factor_odo}), GNSS factor (\ref{factor_gnss}), UWB factor (\ref{factor_uwb}), prior factor (\ref{factor_prior}), and noise parameter factors (\ref{factor_odo_R}), (\ref{factor_uwb_R}). The circles are variable nodes, where the larger ones correspond to error states, and the smaller ones represent uncertain noise parameters. The squares depict factor nodes.}
  \label{fig_Factorgraph}
\end{figure}

\subsection{Gradient Estimators}
\label{subsec::gradient_estimators_app}
In Section \ref{subsec::gradient_estimators}, we have already provided the calculation of gradient $(\nabla_{\boldsymbol{\mu}_{a}}U_{a})^{(l)}$ used in step \ref{Alg_BPMF_grad_1} of Algorithm \ref{Alg_BPMF}. Since factor nodes connected to BP variable nodes all satisfy the restricted expression (\ref{factor_model}), the calculation derived for $(\nabla_{\boldsymbol{\mu}_{a}}U_{a})^{(l)}$ in Section \ref{subsec::gradient_estimators} applies directly. In contrast, the gradient $(\nabla_{\boldsymbol{\mu}_{i}}U_{a})^{(l)}$ in step \ref{Alg_BPMF_grad_2} depends on specific models. In the following, we discuss the calculation of gradient $(\nabla_{\boldsymbol{\mu}_{i}}U_{a})^{(l)}$. For iteration $l$, denote beliefs 
\begin{align*}
  b^{(l)}(y_{k,n}^{m})=&{\rm Bern}(y_{k,n}^{m};\hat{y}_{k,n}^{m,(l)}),\notag\\
  b^{(l)}(\pi_{k,n}^{m})=&{\rm Beta}(\pi_{k,n}^{m};\hat{a}_{k,n}^{m,(l)},\hat{b}_{k,n}^{m,(l)}),\notag\\
  b^{(l)}(\xi_{k,n}^{m})=&{\rm G}(\xi_{k,n}^{m};g_{k,n}^{m,(l)},f_{k,n}^{m,(l)}),\notag\\
  b^{(l)}(\mathbf{P}^{\mathcal{O}}_{k,n})=&{\rm IW}(\mathbf{P}^{\mathcal{O}}_{k,n};\hat{\mathbf{T}}^{\mathcal{O},(l)}_{k,n},\hat{t}^{\mathcal{O},(l)}_{k,n}),
\end{align*}
whose exponential family form all use that in Appendix \ref{Examples_Exponential_Families}.

Substituting specific factor models into (\ref{U_a}), the gradient $(\nabla_{\boldsymbol{\mu}_{i}}U_{a})^{(l)}$ can be calculated by the following results: For the messages from $f^{\mathcal{C}}_{k,n}$ to $\mathbf{P}^{\mathcal{O}}_{k,n}$, from $f^{\xi,m}_{k,n}$ to $\xi_{k,n}^{m}$, and from $f^{\pi,m}_{k,n}$ to $\pi_{k,n}^{m}$, the gradient $(\nabla_{\boldsymbol{\mu}_{i}}U_{a})^{(l)}$ should be natural parameters of $f^{\mathcal{C}}_{k,n}$, $f^{\xi,m}_{k,n}$, and $f^{\pi,m}_{k,n}$, respectively. For the messages from $f^{\mathcal{O}}_{k,n}$ to $\mathbf{P}^{\mathcal{O}}_{k,n}$, the gradient is calculated by
\begin{align*}
    (\nabla_{\boldsymbol{\mu}_{i}}U_{a})^{(l)}=-0.5\left[\begin{matrix}
        1\\
        {\rm vec}\big(\mathbf{A}_{k,n}^{(l)}\big)
    \end{matrix}\right],
\end{align*}
where $\mathbf{A}_{k,n}^{(l)}={\rm E}_{b^{(l-1)}(\delta\mathbf{x}_{k,n},\delta\mathbf{x}_{k-1,n})}[\mathbf{r}_{k,n}^{\mathcal{O}}(\mathbf{r}_{k,n}^{\mathcal{O}})^{\top}]$. For the message from $f^{y,m}_{k,n}$ to $y_{k,n}^{m}$,
\begin{align*}
  (\nabla_{\boldsymbol{\mu}_{i}}U_{a})^{(l)}={\rm E}[\log{\pi}_{k,n}^{m}]-{\rm E}[\log(1-{\pi}_{k,n}^{m})],
\end{align*}
For the message from $f^{y,m}_{k,n}$ to $\pi_{k,n}^{m}$,
\begin{align*}
  (\nabla_{\boldsymbol{\mu}_{i}}U_{a})^{(l)}=\left[\begin{matrix}
        {\rm E}[y_{k,n}^{m}]\\
        1-{\rm E}[y_{k,n}^{m}]
    \end{matrix}\right],
\end{align*}
For the message from $f^{\mathcal{U},m}_{k,n}$ to $y_{k,n}^{m}$,
\begin{align*}
  (\nabla_{\boldsymbol{\mu}_{i}}U_{a})^{(l)}=&0.5\left(B_{k,n}^{m,(l)}{\rm E}[\xi_{k,n}^{m}]/P_{k,n,0}^{\mathcal{U},m}-{\rm E}[\log\xi_{k,n}^{m}]\right.\notag\\
  &\left.-B_{k,n}^{m,(l)}/P_{k,n}^{\mathcal{U},m}+\log P_{k,n,0}^{\mathcal{U},m}-\log P_{k,n}^{\mathcal{U},m}\right),
\end{align*}
where $B_{k,n}^{m,(l)}={\rm E}_{b^{(l-1)}(\delta\mathbf{x}_{k,n},\delta\mathbf{x}_{k,m})}[r_{k,n}^{\mathcal{U},m}(r_{k,n}^{\mathcal{U},m})^{\top}]$. For the message from $f^{\mathcal{U},m}_{k,n}$ to $\xi_{k,n}^{m}$,
\begin{align*}
  (\nabla_{\boldsymbol{\mu}_{i}}U_{a})^{(l)}=0.5\left[\begin{matrix}
      (1-{\rm E}[y_{k,n}^{m}])\\
      -(1-{\rm E}[y_{k,n}^{m}])B_{k,n}^{m,(l)}/P_{k,n,0}^{\mathcal{U},m}
    \end{matrix}\right].
\end{align*}

Expectations $\mathbf{A}_{k,n}^{(l)}$ and $B_{k,n}^{m,(l)}$ are calculated by (\ref{A_Taylor}), (\ref{A_sampling}), or (\ref{A_NN}). The other expectations involved above are given by
\begin{align*}
  &{\rm E}[\log{\pi}_{k,n}^{m}]=\psi(\hat{a}_{k,n}^{m,(l-1)})-\psi(\hat{a}_{k,n}^{m,(l-1)}+\hat{b}_{k,n}^{m,(l-1)}),\notag\\
  &{\rm E}[\log(1-{\pi}_{k,n}^{m})]=\psi(\hat{b}_{k,n}^{m,(l-1)})-\psi(\hat{a}_{k,n}^{m,(l-1)}+\hat{b}_{k,n}^{m,(l-1)}),\notag\\
  &{\rm E}[y_{k,n}^{m}]=\hat{y}_{k,n}^{m,(l-1)},{\rm E}[\xi_{k,n}^{m}]=g_{k,n}^{m,(l-1)}/f_{k,n}^{m,(l-1)},\notag\\
  &{\rm E}[\log\xi_{k,n}^{m}]=\psi(g_{k,n}^{m,(l-1)})-\log f_{k,n}^{m,(l-1)},
\end{align*}
where $\psi(\cdot)$ is digamma function. 

\subsection{Some Implementation Details}
\subsubsection{The Features Used in Real NVP} The feature vector $\boldsymbol{\theta}^{(l)}$ in mapping $T_{\boldsymbol{\theta}^{(l)}}(\mathbf{y})$ consists of relative quantities of state estimates of two robots, measurements, covariances in beliefs, and noise covariances. Since translational components of states lie in an unbounded Euclidean space and are difficult for networks to learn reliably, we instead use the relative quantities. Specifically, $\boldsymbol{\theta}^{(l)}={\rm col}(\boldsymbol{\delta}_{k,n}^{m,(l)},z^{\mathcal{U},m}_{k,n},\lambda(\mathbf{P}_{a}^{(l-1)}),\lambda(\boldsymbol{\Lambda}_{a}^{(l-1)}))$. The relative quantity $\boldsymbol{\delta}_{k,n}^{m,(l)}$ is given as
\begin{align*}
  \boldsymbol{\delta}_{k,n}^{m,(l)}=(\bar{\mathcal{X}}_{k,n}\boxplus\delta \hat{\mathbf{x}}_{k,n}^{(l-1)})\boxminus(\bar{\mathcal{X}}_{k,m}\boxplus\delta \hat{\mathbf{x}}_{k,m}^{(l-1)}),
\end{align*}
where $\bar{\mathcal{X}}_{k,n}$ and $\bar{\mathcal{X}}_{k,m}$ are tangent points corresponding to the states of robot $n$ and $m$, respectively, $\delta \hat{\mathbf{x}}_{k,n}^{(l-1)}$ and $\delta \hat{\mathbf{x}}_{k,m}^{(l-1)}$ are means in beliefs $b^{(l-1)}(\delta\mathbf{x}_{k,n})$ and $b^{(l-1)}(\delta\mathbf{x}_{k,m})$, respectively, i.e., the estimates of error states $\delta\mathbf{x}_{k,n}$ and $\delta\mathbf{x}_{k,m}$ at $l-1$th iteration. The function $\lambda(\cdot)$ is used to extract features of covariances. For an input $\boldsymbol{\Sigma}\in\mathbb{R}^{d\times d}$,
\begin{align*}
  \lambda(\boldsymbol{\Sigma})=&{\rm col}\left(\boldsymbol{\omega},\boldsymbol{\varrho}\right),
\end{align*}
where $\boldsymbol{\omega}={\rm col}(\log(\sigma_i+\varepsilon))_{1\leq i\leq d}$, $\boldsymbol{\varrho}={\rm col}(\varrho_{ij})_{1\leq i<j\leq d}$, $\varepsilon=10^{-8}$ is a small constant to ensure numerical stability,
\begin{align*}
\sigma_i &= \sqrt{\Sigma_{ii}},R_{ij} = \frac{\Sigma_{ij}}{\sigma_i \sigma_j + \varepsilon},\\
\varrho_{ij} &= \frac{1}{2}\log\!\left(
\frac{1+R_{ij},}{1-R_{ij}+\varepsilon}
\right),
\end{align*}
and $\Sigma_{ij}$ is the $i$th row and $j$th column of $\boldsymbol{\Sigma}$. To provide a more informative representation of scale differences, the logarithm is applied to $\sigma_i$. The term $R_{ij}$ represents the correlation coefficient. To map the bounded interval $(-1,1)$ of $R_{ij}$ to the entire real axis and obtain well-behaved statistical properties, the Fisher z-transform is used to yield $\varrho_{ij}$.

\subsubsection{Construction of Real NVP}
Each algorithm iteration uses a separate Real NVP. Within the same iteration, however, all factors originating from the same sensor type share the same Real NVP module. The Real NVP consists of $4$ affine coupling layers. The input of each MLP is the concatenation of the original input vector $\mathbf{b}\odot\mathbf{y}^{l-1}$ in (\ref{acl}) and the feature vector $\boldsymbol{\theta}$. The MLP consists of $3$ fully connected layers. The first layer maps the input dimension to a rounded average dimension of MLP's input and output dimensions. The second layer further maps it to the output dimension of MLP. The third layer keeps the output dimension. ReLU activation functions are applied after the first two layers. To ensure numerical stability, especially during training, the logarithm of weight $w(\mathbf{y}_{s})$ in (\ref{NN_mean}), (\ref{NN_cov}), (\ref{A_NN}) is clamped to range $[-10,10]$.

\subsubsection{End-to-End Training}
We train NF parameters in an end-to-end manner. Training is carried out with truncated backpropagation through time over windows of length $\Delta k$, and NF parameters are updated every $\Delta k$ time steps. The loss function at time step $k=\tau\Delta k$, with $\tau\in\mathbb{N}^{+}$, is defined as
\begin{align*}
  \mathcal{L}_{k}=\frac{1}{BN\Delta k}\sum_{l=1}^{L}\sum_{b=1}^{B}\sum_{n=1}^{N}\sum_{s=k-\Delta k+1}^{k}\varrho (l)\big\Vert\hat{\mathcal{X}}_{s,n}^{(l),b}\boxminus\mathcal{X}_{s,n}^{b}\big\Vert^{2}_{2},
\end{align*}
where $B$ denotes the batch size, $\hat{\mathcal{X}}_{s,n}^{(l),b}$ is the state estimate of the $n$th robot at time step $s$ obtained from MP at iteration $l$ for the $b$th sample in the mini-batch, $\mathcal{X}_{s,n}^{b}$ is the corresponding ground truth, $\varrho (l)=\exp(-0.05(L-l))$ is a monotonically increasing weight with respect to the iteration index $l$, designed to emphasize the accuracy of later MP iterations.

\subsubsection{Distributed Computation and Decentralized Fusion}
The state variables can be allocated to the robots that own the corresponding states, while factors and their associated noise parameters are assigned to the robots that obtain the corresponding measurements. The belief and message updates are carried out locally, enabling distributed computation. When a factor and its connected variables reside on different robots, the messages between them are exchanged across robots. This process does not rely on any central node and thus achieves a completely distributed fusion scheme.

\section{Simulations and Experiments}
\label{sec:simexp}

\subsection{Methods Included in Verification}

GTSAM \cite{bibGTSAM}: A widely used centralized factor graph optimizer. To enhance robustness, Huber m-estimators with threshold set as $1$ were constructed for UWB and odometry factors in our implementation. GTSAM supports Gaussian noise models, whereas noise parameter factors are not handled.

MFVI \cite{bibMFVI}: Standard MFVI that uses the factor graph in Section \ref{sec::multirobot_Collaborative_Localization} with MF assumption that all variables are independent. The nonlinear residuals are handled by linearization.

GBP-L, GBP-S, and GBP-NF: Simplified versions of our method, Algorithm \ref{Alg_BPMF}, using linearization, sampling, and NF based-gradient estimators, respectively. Only odometry, GNSS, UWB, prior factors, and state variables are modeled.

GBP-DCS-n \cite{BP3} and GBP-DCS: GBP-DCS-n combines existing GBP with DCS. GBP-DCS is a robust extension of GBP-L, using DCS \cite{DCS} to adjust noise covariances in the presence of outliers. For both, DCS is used for UWB and odometry factors, with algorithm parameter set as in \cite{BP3}.

MP-L, MP-S, and MP-NF: Full versions of Algorithm \ref{Alg_BPMF}, using the factor graph in Section \ref{sec::multirobot_Collaborative_Localization}, with linearization, sampling, and NF based-gradient estimators, respectively.

To balance computation overhead and the handling of nonlinear dynamics, for GBP-S, GBP-NF, MP-S, and MP-NF, only UWB factors use sampling or NF based-gradient estimators, the odometry, GNSS, and prior factors still adopt linearization based-gradient estimators. For Lie group verification, all methods, except GTSAM, employ Algorithm \ref{Alg2} to enable inference on Lie groups, whereas GTSAM uses its built-in Lie group support.

\subsection{Euclidean Space Simulation Verification}
We first verify the effectiveness of proposed methods in an Euclidean space. Compared to Lie groups, this avoids the need for manifold and Euclidean space mappings, thereby providing a more direct means of verifying inference algorithms. The factor graph follows that in Section \ref{sec::multirobot_Collaborative_Localization}, except that orientation parts are removed from states and factors. Specifically, in this subsection the state $\mathcal{X}_{k,n}$ is reduced to position, $\mathcal{X}_{k,n}=\mathbf{p}_{k,n}\in\mathbb{R}^{3}$, and measurement models (\ref{measurement_model_odo}), (\ref{measurement_model_gnss}), and (\ref{measurement_model_uwb}) are respectively reduced to
\begin{align*}
  \mathbf{z}^{\mathcal{O}}_{k,n}=&\mathbf{p}_{k,n}-\mathbf{p}_{k-1,n}+\mathbf{v}^{\mathcal{O}}_{k,n},\notag\\
  \mathbf{z}^{\mathcal{G}}_{k,n}=&\mathbf{p}_{k,n}+\mathbf{v}^{\mathcal{G}}_{k,n},\notag\\
  z^{\mathcal{U},m}_{k,n}=&\left\Vert\mathbf{p}_{k,n}-\mathbf{p}_{k,m}\right\Vert+v_{k,n}^{\mathcal{U},m}.
\end{align*}
Other components of the factor graph follow the same formulation as in Section \ref{sec::multirobot_Collaborative_Localization}, with orientation-related terms omitted accordingly.

In evaluation stage, the ground truths of states are generated as $\mathcal{X}_{k,n}=\mathcal{X}_{k-1,n}+\Delta\mathbf{p}_{k,n}$, where $\Delta\mathbf{p}_{k,n}\in\mathbb{R}^{3}$ and initial positions $\mathcal{X}_{0,n}\in\mathbb{R}^{3}$ consist of elements uniformly distributed on $[0,2)$ and $[-10,10)$, respectively. Prior $p(\mathcal{X}_{0,n})={\rm N}(\mathcal{X}_{0,n};\hat{\mathcal{X}}_{0|0,n},\mathbf{P}^{\mathcal{P}}_{0|0,n})$, where $\hat{\mathcal{X}}_{0|0,n}=\mathcal{X}_{0,n}+\tilde{\mathbf{x}}_{0|0,n}$, $\tilde{\mathbf{x}}_{0|0,n}\sim{\rm N}(\mathbf{0}_{3\times 1},\mathbf{P}^{\mathcal{P}}_{0|0,n})$, and $\mathbf{P}^{\mathcal{P}}_{0|0,n}=10^{-1}\mathbf{I}_{3}$. Robot number $N=4$. For odometry and GNSS, noise covariances are set as $\mathbf{P}^{\mathcal{O}}_{k,n}=(1+0.1\sin\frac{k}{20})10^{-2}\mathbf{I}_{3}$ and $\mathbf{P}^{\mathcal{G}}_{k,n}=\mathbf{I}_{3}$, respectively. For UWB, noise is generated by
\begin{align}
  v_{k,n}^{\mathcal{U},m}\sim\left\{\begin{array}{ll}
    {\rm N}(0,10^{-2}), & 1\leq k\leq20\\
    \left\{\begin{array}{ll}
      {\rm N}(0,1), & {\rm w.p.} 0.05\\
      {\rm N}(0,10^{-2}), & {\rm w.p.} 0.95
    \end{array}\right. & 21\leq k\leq40\\
    {\rm N}(0,10^{-2}), & 41\leq k\leq60\\
    \left\{\begin{array}{ll}
      {\rm N}(0,4), & {\rm w.p.} 0.05\\
      {\rm N}(0,10^{-2}), & {\rm w.p.} 0.95
    \end{array}\right. & 61\leq k\leq80\\
    {\rm N}(0,10^{-2}), & 81\leq k\leq100\\
  \end{array}\right.\label{noise_setting}
\end{align}
where $\rm w.p.$ denotes ``with probability." For all algorithms, max iterations $L=5$, sliding window size is set as $3$, and $\mathbf{P}^{\mathcal{G}}_{k,n}$ is exact known. For GTSAM and all GBP, the odometry and UWB noise covariance are set as $\mathbf{P}^{\mathcal{O}}_{k,n}=10^{-2}\mathbf{I}_{3}$ and $P^{\mathcal{U},m}_{k,n}=10^{-2}$, respectively. For MFVI, MP-L, MP-S, and MP-NF, the odometry noise prior parameters $t^{\mathcal{O}}_{1,n}=5$, $\mathbf{T}^{\mathcal{O}}_{1,n}=t^{\mathcal{O}}_{1,n}\times10^{-2}\mathbf{I}_{3}$, $\rho=0.99$, UWB noise prior parameters $a_{k,n}^{m}=0.8$, $\nu_{k,n}^{m}=7$, $P^{\mathcal{U},m}_{k,n}=10^{-2}$ and $P^{\mathcal{U},m}_{k,n,0}=4P^{\mathcal{U},m}_{k,n}$. The initial beliefs of $\mathbf{P}^{\mathcal{O}}_{k,n}$, $\xi_{k,n}^{m}$, and $\pi_{k,n}^{m}$ are set as their priors, while the initial beliefs of $y_{k,n}^{m}$ is set as ${\rm Bern}(y_{k,n}^{m};1)$. The sampling point number of sampling and NF-based algorithms is $4$. Each simulation runs for $T=50$.

The training environment is set similar to above evaluation environment, with the following modifications to assess generalization across diverse scenarios: The components of position increment $\Delta\mathbf{p}_{k,n}$ are uniformly distributed on $[-1,1)$. The odometry noise covariance is $\mathbf{P}^{\mathcal{O}}_{k,n}=(1+0.1\sin\frac{k}{20})10^{-2-r_{k,n}}\mathbf{I}_{3}$, where $r_{k,n}$ is uniformly distributed on $[-3,-1)$. The sampling point number is $16$. For MP-NF, $P^{\mathcal{U},m}_{k,n,0}=10^{-2}$. The ranging noise $v_{k,n}^{\mathcal{U},m}$ is Gaussian distributed with variance $10^{-2}$ w.p. $0.95$, and with variance $1$ w.p. $0.05$. The networks are trained using the Adam optimizer provided by Pytorch \cite{bibPyTorch} with a learning rate of $10^{-3}$ and weight decay of $10^{-4}$. The training dataset contains $640$ simulated sequences, each with $100$ time steps. The batch size is $64$. Training uses truncated backpropagation through time with a truncation length of $10$ time steps. The computation graph is detached every $10$ steps, and NF parameters are updated using data from the most recent $10$ time step. After completing all $100$ time steps, $5\%$ of the sequences in the batch are replaced and the batch is shuffled to introduce diversity. This process is repeated $20$ times.

Training and evaluation are both run on a personal computer with Intel Core i5-14600K CPU @ 3.50 GHz, NVIDIA GeForce RTX 4070 GPU, and 32 GB RAM. The algorithms are coded by Python. Our training code is primarily based on PyTorch \cite{bibPyTorch} and PyTorch3D \cite{bibPyTorch3D}, while the evaluation relies on NumPy \cite{bibNumPy} and SciPy \cite{bibSciPy}. NF inference in evaluation is executed through ONNX Runtime on CPU. The training process of GBP-NF and MP-NF only takes $11.97$ minutes and $26.15$ minutes to complete, respectively.

The estimation errors of the algorithms are presented in Fig. \ref{fig_simulation_1}. The root mean square error (RMSE) is defined as
\begin{align}
  {\rm RMSE}(k)=\sqrt{\frac{1}{TN}\sum_{t=1}^{T}\sum_{n=1}^{N}\Vert\mathcal{X}_{k,n}^{t}-\hat{\mathcal{X}}_{k,n}^{t}\Vert_{2}^{2}},\notag
\end{align}
where superscript $t$ indicates the $t$th simulation, $\mathcal{X}_{k,n}^{t}$ and $\hat{\mathcal{X}}_{k,n}^{t}$ are ground truth and state estimate of robot $n$ at time step $k$ in $t$th simulation. As can be seen from Fig. \ref{fig_simulation_1}, when ranging outliers occur, the estimation errors of GBP-L and GBP-S increase markedly, and GBP-NF also exhibits a noticeable rise in the second shaded region. Although MFVI employs robust design, the MF assumption that variables are independent is overly restrict, resulting in the highest error. GBP-DCS and GBP-DCS-n exhibit similar performance, indicating that parameterization itself has limited impact on the algorithm's effectiveness. It mainly serves as a conceptual bridge in the design, while the performance is primarily determined by gradient estimators it introduces. With the incorporation of NFs, MP-NF achieves the highest estimation accuracy, not only significantly outperforming algorithms of the same class, but even surpassing centralized algorithm GTSAM.

\begin{figure}[tb]
  \centering
  \includegraphics[scale=1]{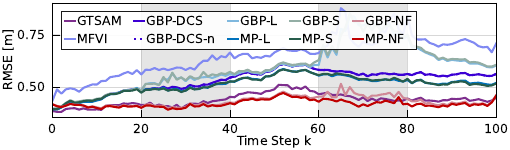}
  \caption{Estimation errors of different algorithms in Euclidean space verification. The two shaded regions mark occurrences of UWB ranging outliers. The curves of GBP-DCS and GBP-DCS-n, as well as those of MP-L and MP-S, are nearly identical.}
  \label{fig_simulation_1}
\end{figure}

\begin{figure}[tb]
  \centering
  \includegraphics[scale=1]{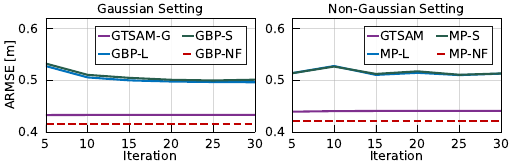}
  \caption{The estimation errors with respect to algorithms iterations. The red dashed line indicates GBP-NF or MP-NF consistently reaches the ARMSE within 5 iterations. The curves of MP-L and MP-S are close.}
  \label{fig_simulation_2}
\end{figure}

To provide a more comprehensive quantitative assessment of each algorithm's performance, we present the average RMSE (ARMSE), standard deviation (SD) of estimation errors, and average runtime for each algorithm iteration in Table \ref{tab_sim1}. The ARMSE is calculated by
\begin{align}
  {\rm ARMSE}=\sqrt{\frac{1}{K}\sum_{k=1}^{K}{\rm RMSE}(k)^{2}},\notag
\end{align}
where $K$ the number of time steps. The SD is computed as
\begin{align}
  {\rm SD}=\sqrt{\frac{1}{KTN}\sum_{k=1}^{K}\sum_{t=1}^{T}\sum_{n=1}^{N}\Big(\Vert\mathcal{X}_{k,n}^{t}-\hat{\mathcal{X}}_{k,n}^{t}\Vert_{2}-\mu\Big)^{2}},\notag
\end{align}
where $\mu$ denotes the mean of $\Vert\mathcal{X}_{k,n}^{t}-\hat{\mathcal{X}}_{k,n}^{t}\Vert_{2}$. Except for GTSAM, all algorithms considered are distributed, and thus their average time is measured relative to each robot. In contrast, GTSAM operates on a single computational unit, so its average runtime is not distributed across robots. As shown in Table \ref{tab_sim1}, MP-NF achieves the highest accuracy, followed by GBP-NF. Compared with other MP algorithms, these two approaches demonstrate a substantial improvement in accuracy, underscoring the advantages of NF-based enhancement techniques. Furthermore, algorithms that integrate GBP with MF, i.e., MP-L, MP-S, and MP-NF, achieve higher estimation accuracy than those relying solely on GBP, highlighting that the joint design of BP and MF methods can significantly improve robustness. Although GBP-NF does not explicitly incorporate MF to achieve robust design, it still exhibits improved robustness. Despite the approximations introduced to enable distributed implementation, both GBP-NF and MP-NF still surpass the centralized GTSAM baseline in estimation accuracy, demonstrating the effectiveness of NF-enhanced inference. Although our method is slower due to performing full probabilistic inference with covariance estimation and being implemented in Python rather than GTSAM's optimized C++ backend, the runtime remains within a range that is feasible for many practical scenarios.

\begin{table}[ht]
\centering
    \scriptsize
  \caption{The Localization Errors, SDs, and Average Time Per Iteration in Euclidean Space Verification}
  \vspace{-1ex}
  \label{tab_sim1}
  \setlength{\tabcolsep}{4mm}{
    \renewcommand{\arraystretch}{1.15}
\begin{tabular}{cccc}
  \Xhline{0.75pt}
Alg. & $\rm ARMSE$ [m] & $\rm SD$ [m] & Avg. Time [ms]\\\hline
GTSAM & 0.4392 & 0.1755 & \textbf{0.7320} \\
MFVI & 0.6565 & 0.2921 & 0.9442 \\
GBP-DCS & 0.5343 & 0.2184 & 1.3582 \\
GBP-DCS-n & 0.5339 & 0.2182 & 1.2163 \\
GBP-L & 0.6025 & 0.2794 & 1.2093 \\
MP-L & 0.5136 & 0.2097 & 1.5138 \\
GBP-S & 0.6061 & 0.2793 & 1.2033 \\
MP-S & 0.5135 & 0.2083 & 1.4233 \\
GBP-NF & 0.4329 & 0.1764 & 2.0171 \\
MP-NF & \textbf{0.4208} & \textbf{0.1676} & 2.1983 \\\Xhline{0.75pt}
\end{tabular}}
\end{table}

To further validate the effectiveness of NF-enhanced approaches, we increase the number of algorithm iterations for methods without NF. To ensure a fair comparison and eliminate the influence of outliers and uncertain covariance matrices, GTSAM, GBP-L, GBP-S, and GBP-NF are tested under Gaussian measurement noises with known covariances. Specifically, the ranging noise $v_{k,n}^{\mathcal{U},m}\sim {\rm N}(0,10^{-2})$, and the odometry covariance is fixed as  $\mathbf{P}^{\mathcal{O}}_{k,n}=10^{-2}\mathbf{I}_{3}$, known to all methods in this test. The Huber m-estimators in GTSAM are removed, denoted by GTSAM-G. The GTSAM with Huber m-estimators, MP-L, MP-S, and MP-NF are still evaluated in the non-Gaussian setting used previously. The errors with respect to iterations in both Gaussian and non-Gaussian settings are shown in Fig. \ref{fig_simulation_2}. Even as the number of iterations increases and the error plateaus, the estimation errors of methods without NF remain higher than those of GBP-NF or MP-NF, underscoring the critical role of NF enhancement in achieving superior accuracy and reducing iterations. Although Table \ref{tab_sim1} shows that GBP-NF and MP-NF have higher average runtime, the results in Fig. \ref{fig_simulation_2} indicate that introducing NF is worthwhile, as they achieve sufficiently high accuracy within only $5$ iterations.

We next validate the adaptability of the proposed methods. The odometry noise covariance is adjusted as $\mathbf{P}^{\mathcal{O}}_{k,n}=(1+0.1\sin\frac{k}{20})10^{-2+\kappa}\mathbf{I}_{3}$, where noise level $\kappa$ ranges from $-1$ to $1$. All algorithms retain the same parameter settings as before. This configuration simulates varying degrees of misrecognition of noise covariance matrices. The ARMSEs with respect to $\kappa$ are presented in Fig. \ref{fig_simulation_3}. When $\kappa$ is small, the estimation errors are similar and low, owing to the high accuracy of odometry measurements, which alone can also provide accurate estimates. However, with a lager $\kappa$, the estimation error of GBP-DCS rises sharply, followed by GBP-NF. Both lack odometry noise parameter factors and therefore cannot effectively cope with uncertain noise covariance matrices.

\begin{figure}[tb]
  \centering
  \includegraphics[scale=1]{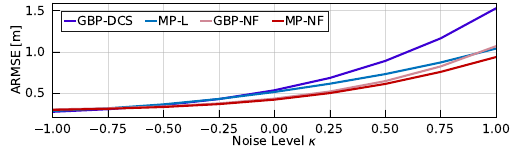}
  \caption{The ARMSEs with respect to different noise level $\kappa$.}
  \label{fig_simulation_3}
\end{figure}

\subsection{Lie Group Simulation Verification}

We next validate the performance of proposed methods on a 3D state space with rotational and translational components. The state of robot $n$ at time step $k$ is $\mathcal{X}_{k,n}=(\mathbf{R}_{k,n},\mathbf{p}_{k,n})$ as in Section \ref{sec::multirobot_Collaborative_Localization}. In evaluation stage, ground truths of states are generated as $\mathcal{X}_{k,n}=\mathcal{X}_{k-1,n}\boxplus\Delta\mathcal{X}_{k,n}$, where increment $\Delta\mathcal{X}_{k,n}=(\Delta\mathbf{R}_{k,n},\Delta\mathbf{p}_{k,n})$ with $\Delta\mathbf{R}_{k,n}={\rm Exp}(\Delta\mathbf{r}_{k,n})$, and $\Delta\mathbf{r}_{k,n}$, $\Delta\mathbf{p}_{k,n}\in\mathbb{R}^{3}$ consist of elements uniformly distributed on $[-0.02\pi,0.02\pi)$ and $[-1,1)$. The initial orientation $\mathbf{R}_{0,n}={\rm Exp}(\mathbf{r}_{0,n})$. The initial rotation vector $\mathbf{r}_{0,n}\in\mathbb{R}^{3}$ and position $\mathbf{p}_{0,n}\in\mathbb{R}^{3}$ consist of elements uniformly distributed on $[-\pi,\pi)$ and $[-10,10)$. For all algorithms, the prior $p(\mathcal{X}_{0,n})={\rm N}(\mathcal{X}_{0,n};\hat{\mathcal{X}}_{0|0,n},\mathbf{P}^{\mathcal{P}}_{0|0,n})$ with $\hat{\mathcal{X}}_{0|0,n}=\mathcal{X}_{0,n}\boxplus\tilde{\mathbf{x}}_{0|0,n}$, $\tilde{\mathbf{x}}_{0|0,n}\sim{\rm N}(\mathbf{0}_{6\times 1},\mathbf{P}^{\mathcal{P}}_{0|0,n})$, and $\mathbf{P}^{\mathcal{P}}_{0|0,n}={\rm diag}(10^{-4}\mathbf{I}_{3},10^{-1}\mathbf{I}_{3})$. For odometry, the measurement noise covariances are set as $\mathbf{P}^{\mathcal{O}}_{k,n}=(1+0.1\sin\frac{k}{20}){\rm diag}(10^{-6}\mathbf{I}_{3},10^{-2}\mathbf{I}_{3})$. For GNSS and UWB, noises are set as that in Euclidean space verification. Robot number $N=4$. The extrinsic parameters $\mathbf{p}^{b_n}_{g_n}=[0.6,0,-0.2]^{\top}$ and $\mathbf{p}^{b_n}_{u_n}=[0,0,-0.4]^{\top}$.

For all algorithms, max iterations $L=5$ and sliding window size is $3$. For GTSAM and all GBP, noise covariances of odometry, GNSS, and UWB are set as $\mathbf{P}^{\mathcal{O}}_{k,n}={\rm diag}(10^{-6}\mathbf{I}_{3},10^{-2}\mathbf{I}_{3})$, $\mathbf{P}^{\mathcal{G}}_{k,n}=\mathbf{I}_{3}$, and $P^{\mathcal{U},m}_{k,n}=10^{-2}$, respectively. For MP-L, MP-S, and MP-NF, odometry noise prior parameters $t^{\mathcal{O}}_{1,n}=5$, $\mathbf{T}^{\mathcal{O}}_{1,n}=t^{\mathcal{O}}_{1,n}{\rm diag}(10^{-6}\mathbf{I}_{3},10^{-2}\mathbf{I}_{3})$, forgetting parameter $\rho=0.99$, UWB noise prior parameters $a_{k,n}^{m}=0.8$, $\nu_{k,n}^{m}=7$, $P^{\mathcal{U},m}_{k,n}=10^{-2}$, and $P^{\mathcal{U},m}_{k,n,0}=4P^{\mathcal{U},m}_{k,n}$. The initial beliefs of $\mathbf{P}^{\mathcal{O}}_{k,n}$, $\xi_{k,n}^{m}$, and $\pi_{k,n}^{m}$ are set as their prior distributions, while the initial beliefs of $y_{k,n}^{m}$ is set as ${\rm Bern}(y_{k,n}^{m};1)$. The sampling point number is set as $4$. Each simulation runs for $T=30$.

For training, the odometry noise covariance is set as $\mathbf{P}^{\mathcal{O}}_{k,n}=(1+0.1\sin\frac{k}{20}){\rm diag}(10^{-6}\mathbf{I}_{3},10^{-2}\mathbf{I}_{3})\times10^{r_{k,n}}$, where $r_{k,n}$ is uniformly distributed on $[-3,-1)$. For MP-NF, $P^{\mathcal{U},m}_{k,n,0}=10^{-2}$. The ranging noise $v_{k,n}^{\mathcal{U},m}$ is Gaussian distributed with variance $10^{-2}$ w.p. $0.95$, and with variance $1$ w.p. $0.05$. The sampling point number is $16$. The other settings of environment and algorithm parameters are the same as those in evaluation. The training method is the same as that in Euclidean space verification, expect that NF parameters are updated every $5$ time steps. The training of GBP-NF and MP-NF only takes $29.32$ minutes and $41.31$ minutes, respectively.

\begin{figure}[tb]
  \centering
  \includegraphics[scale=1]{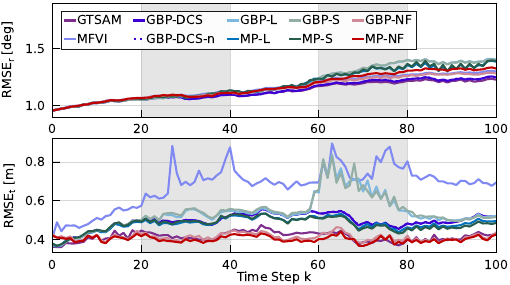}
  \caption{Rotational and translational errors of different algorithms. The two shaded regions mark occurrences of UWB ranging outliers. The curves of GBP-DCS and GBP-DCS-n, as well as those of MP-L and MP-S, are nearly identical.}
  \label{fig_simulation_4}
\end{figure}

The rotational and translational RMSEs are respectively defined as: 
\begin{align}
  {\rm RMSE_r}(k)=&\sqrt{\frac{1}{TN}\sum_{t=1}^{T}\sum_{n=1}^{N}\left\Vert\mathbf{R}_{k,n}^{t}\ominus  \hat{\mathbf{R}}_{k,n}^{t}\right\Vert_{2}^{2}},\notag\\
  {\rm RMSE_t}(k)=&\sqrt{\frac{1}{TN}\sum_{t=1}^{T}\sum_{n=1}^{N}\Vert\mathbf{p}_{k,n}^{t}-\hat{\mathbf{p}}_{k,n}^{t}\Vert_{2}^{2}},\notag
\end{align}
where superscript $t$ indicates the $t$th simulation, $\mathbf{R}_{k,n}$ and $\mathbf{p}_{k,n}$ are ground truths of attitude and position of robot $n$ at time step $k$, and $\hat{\mathbf{R}}_{k,n}$ and $\hat{\mathbf{p}}_{k,n}$ are estimates from algorithms. The rotational and translational RMSEs of all algorithms are shown in Fig. \ref{fig_simulation_4}. The rotational errors across different algorithms are similar, with the maximum discrepancy remaining below $0.25$ deg. For translational errors, consistent with the phenomena observed in Euclidean space simulations, errors of GBP-L and GBP-S increase significantly in the presence of ranging outliers. MP-NF achieves the highest accuracy. In Table \ref{tab_sim4}, we report the rotational and translational ARMSEs, SDs, and average time per iteration. The rotational estimation performances are comparable across algorithms. For translational estimation, MP-NF achieves the highest accuracy, followed by GBP-NF, highlighting the benefits of NF-based enhancement. Several BP and MP algorithms exhibit lower computational overhead than GTSAM in this test and MFVI achieves the lowest, owing to distributed computation design. The increase in runtime for GBP-NF and MP-NF is also minor.

\begin{table}[t]
\centering
    \scriptsize
  \caption{The ARMSEs, SDs, and Average Time Per Iteration in Lie Group Verification}
  \vspace{-1ex}
  \label{tab_sim4}
  \setlength{\tabcolsep}{0.6mm}{
    \renewcommand{\arraystretch}{1.15}
\begin{tabular}{cccccc}
  \Xhline{0.75pt}
Alg. & $\rm ARMSE_r$ [deg] & $\rm SD_r$ [deg] & $\rm ARMSE_t$ [m] & $\rm SD_t$ [m] & Avg. Time [ms]\\\hline
GTSAM & \textbf{1.1253} & \textbf{0.4169} & 0.4144 & 0.1634 & 2.7075 \\
MFVI & 1.1639 & 0.4468 & 0.6851 & 0.3045 & \textbf{1.9075} \\
GBP-DCS & 1.1312 & 0.4177 & 0.4939 & 0.1951 & 2.5611 \\
GBP-DCS-n & 1.1318 & 0.4187 & 0.4943 & 0.1951 & 2.5547 \\
GBP-L & 1.1987 & 0.4717 & 0.5444 & 0.2399 & 2.5418 \\
MP-L & 1.1864 & 0.4529 & 0.4804 & 0.1898 & 2.9059 \\
GBP-S & 1.1989 & 0.4768 & 0.5467 & 0.2410 & 2.7505 \\
MP-S & 1.1892 & 0.4539 & 0.4787 & 0.1902 & 2.9821 \\
GBP-NF & 1.1546 & 0.4304 & 0.4124 & 0.1659 & 2.7508 \\
MP-NF & 1.1705 & 0.4406 & \textbf{0.4018} & \textbf{0.1613} & 3.0426 \\\Xhline{0.75pt}
\end{tabular}}
\end{table}

\begin{figure}[tb]
  \centering
  \includegraphics[scale=1]{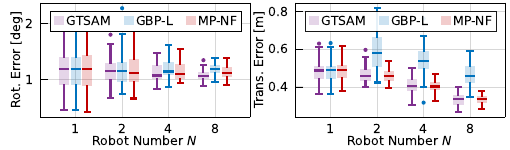}
  \caption{The box charts of rotational and translational errors with respect to different robot number $N$.}
  \label{fig_simulation_8}
\end{figure}

\begin{figure}[!tb]
  \centering
  \includegraphics[scale=1]{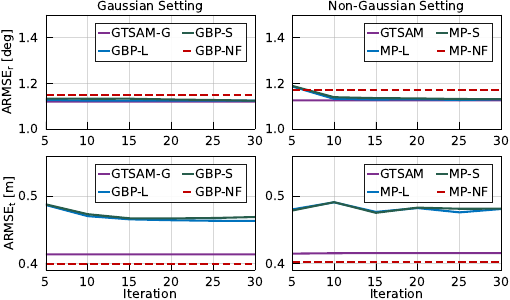}
  \caption{The rotational and translational errors with respect to algorithms iterations. The left column shows results under Gaussian setting, while the right shows results under non-Gaussian setting. The red dashed line indicates GBP-NF or MP-NF consistently reaches the ARMSE within $5$ iterations.}
  \label{fig_simulation_5}
\end{figure}

We evaluate the performance of the algorithms under varying numbers of robots. Fig. \ref{fig_simulation_8} presents box charts of the rotational and translational errors of GTSAM, GBP-L, and MP-NF with respect to robot numbers. The box charts use RMSEs $\sqrt{\frac{1}{KN}\sum_{k=1}^{K}\sum_{n=1}^{N}\Vert\mathbf{R}_{k,n}^{t}\ominus  \hat{\mathbf{R}}_{k,n}^{t}\Vert_{2}^{2}}$ and $\sqrt{\frac{1}{KN}\sum_{k=1}^{K}\sum_{n=1}^{N}\Vert\mathbf{p}_{k,n}^{t}-\hat{\mathbf{p}}_{k,n}^{t}\Vert_{2}^{2}}$ from each simulation run as data. The case $N=1$ corresponds to single robot localization relying solely on odometry and GNSS. For GTSAM and MP-NF, the estimation error decreases noticeably as the number of robots increases, indicating that collaborative localization framework provides significant benefits for enhancing accuracy. The performance of MP-NF and GTSAM is comparable. In contrast, GBP-L exhibits increasing error, suggesting that enhancing accuracy through collaboration requires algorithms to be robust against potential outliers in relative measurements.

We further analyze the performance of algorithms across different iterations. GTSAM-G (GTSAM without Huber m-estimators), GBP-L, GBP-S, and GBP-NF are tested under Gaussian noises with known covariances as that in Euclidean space verification except that $\mathbf{P}^{\mathcal{O}}_{k,n}={\rm diag}(10^{-6}\mathbf{I}_{3},10^{-2}\mathbf{I}_{3})$. GTSAM with Huber m-estimators, MP-L, MP-S, and MP-NF are evaluated in the non-Gaussian setting used before. The errors with respect to iterations are illustrated in Fig. \ref{fig_simulation_5}. GBP-NF and MP-NF reach accuracy levels that other methods cannot attain even after many more iterations.

\begin{table}[t]
\centering
    \scriptsize
  \caption{The ARMSEs, SDs, and Average Time Per Iteration Using New Nonlinear Ranging Model}
  \vspace{-1ex}
  \label{tab_sim9}
  \setlength{\tabcolsep}{0.6mm}{
    \renewcommand{\arraystretch}{1.15}
\begin{tabular}{cccccc}
  \Xhline{0.75pt}
Alg. & $\rm ARMSE_r$ [deg] & $\rm SD_r$ [deg] & $\rm ARMSE_t$ [m] & $\rm SD_t$ [m] & Avg. Time [ms] \\\hline
GTSAM & 1.1463 & 0.4339 & 0.4661 & 0.1853 & 3.2626 \\
MFVI & 1.1683 & 0.4477 & 0.5498 & 0.2291 & \textbf{2.3980} \\
GBP-DCS & \textbf{1.1355} & \textbf{0.4253} & 0.4678 & 0.1831 & 2.5447 \\
GBP-DCS-n & 1.1356 & 0.4260 & 0.4682 & 0.1835 & 2.5691 \\
GBP-L & 1.2034 & 0.4786 & 0.5294 & 0.2392 & 2.5881 \\
MP-L & 1.1894 & 0.4598 & 0.4547 & 0.1784 & 2.9771 \\
GBP-S & 1.2076 & 0.4811 & 0.5272 & 0.2342 & 2.7142 \\
MP-S & 1.1957 & 0.4612 & 0.4548 & 0.1790 & 2.9829 \\
GBP-NF & 1.1638 & 0.4367 & 0.4373 & 0.1819 & 2.8110 \\
MP-NF & 1.1800 & 0.4459 & \textbf{0.4203} & \textbf{0.1717} & 3.0687 \\\Xhline{0.75pt}
\end{tabular}}
\end{table}

\begin{figure}[!tb]
  \centering
  \includegraphics[scale=1]{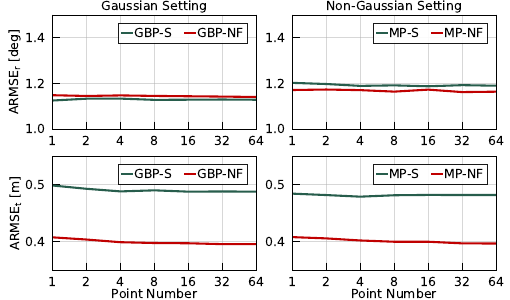}
  \caption{The rotational and translational errors with respect to sampling point numbers. The left column shows results under Gaussian setting, while the right column shows results under non-Gaussian setting.}
  \label{fig_simulation_7}
\end{figure}

\begin{figure}[!tb]
  \centering
  \includegraphics[scale=1]{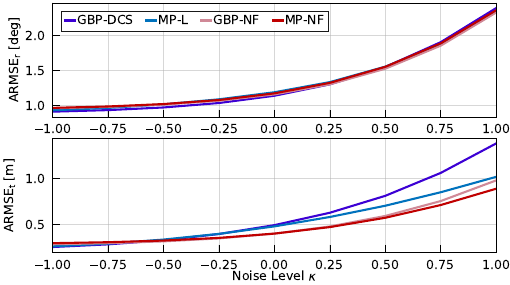}
  \caption{The rotational and translational errors with respect to different $\kappa$.}
  \label{fig_simulation_6}
\end{figure}

We also examine the choice of sampling points. Fig. \ref{fig_simulation_7} illustrates the ARMSEs of the sampling-based algorithm slightly decrease with the number of sampling points increases. However, excessive sampling does not result in a noticeable improvement in accuracy. For the nonlinear UWB measurement model considered in this paper, setting the number of sampling points to $4$ is sufficient.

We then assess the adaptability of the proposed methods. The odometry noise covariance is adjusted as $\mathbf{P}^{\mathcal{O}}_{k,n}=(1+0.1\sin\frac{k}{20})\times {\rm diag}(10^{-6}\mathbf{I}_{3},10^{-2}\mathbf{I}_{3}) \times10^{\kappa}$, where noise level $\kappa$ ranges from $-1$ to $1$. All algorithms use the same parameter settings as before. As can be seen from Fig. \ref{fig_simulation_6}, for rotation, the performance of different algorithms is nearly identical. For position, the error of GBP-DCS increases markedly with larger $\kappa$, followed by GBP-NF. The absence of odometry noise parameter factors constrains GBP-DCS and GBP-NF in handling uncertain covariance matrices.

To evaluate the proposed algorithm's ability to generalize across different measurement models, we replace (\ref{measurement_model_uwb}) with
\begin{align}
  z_{k,n}^{\mathcal{U},m}=\log\left\Vert\Delta\mathbf{p}_{k,n}^{m}\right\Vert+v_{k,n}^{\mathcal{U},m}.\label{new_nonlinear}
\end{align}
The non-Gaussian noise $v_{k,n}^{\mathcal{U},m}$ is still set as (\ref{noise_setting}), except that its noise variances and the corresponding algorithm settings are set to $0.01$ of their original values. The Real NVP consists of $5$ MLPs. The other settings are remain unchanged. GBP-NF and MP-NF were re-trained under the new model. The performance of algorithms are available in Table \ref{tab_sim9}. Consistent with the previous results, NF also improves estimation accuracy and robustness under the new nonlinear model (\ref{new_nonlinear}).

\begin{figure*}[tb]
  \centering
  \includegraphics[scale=1]{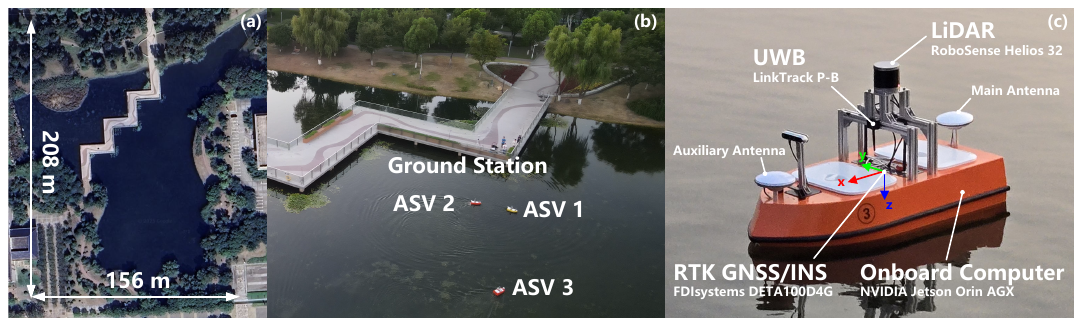}
  \caption{The experimental field and ASVs employed for dataset collection. (a) Satellite imagery of the test site. (b) Aerial photograph captured by a drone. (c) Close-up view of the ASV with annotations highlighting its body frame and key components.}
  \label{field}
\end{figure*}

\begin{table}[!t]
\centering
    \scriptsize
  \caption{Time Steps of Datasets in Experiments}
  \vspace{-1ex}
  \label{tab_data_size}
  \setlength{\tabcolsep}{2mm}{
    \renewcommand{\arraystretch}{1.15}
\begin{tabular}{cc|cc|cc}\Xhline{0.75pt}
Dataset & Time Steps & Dataset & Time Steps & Dataset & Time Steps \\\hline
1 & 343 & 5 & 724 & 9 & 703 \\
2 & 1216 & 6 & 1337 & 10 & 1880  \\
3 & 970 & 7 & 705 & 11 & 425 \\
4 & 1504 & 8 & 875 & 12 & 853 \\\Xhline{0.75pt}
\end{tabular}}
\end{table}

\begin{table*}[!t]
\centering
    \scriptsize
  \caption{The Rotational ARMSEs [deg] in Experiments}
  \vspace{-1ex}
  \label{tab_exp1_r}
  \setlength{\tabcolsep}{2.85mm}{
    \renewcommand{\arraystretch}{1.15}
\begin{tabular}{cccccccccccccc}\Xhline{0.75pt}
Dataset & GTSAM & MFVI & GBP-DCS & GBP-L & MP-L & GBP-S & MP-S & GBP-NF-SW & MP-NF-SW & GBP-NF & MP-NF \\\hline
1  & 7.3095 & \textbf{7.1906} & 7.4400 & 7.4373 & 7.5463 & 7.4671 & 7.5847 & 7.3138 & 7.2826 & 7.2911 & 7.3317 \\
2  & 13.8104 & 16.5160 & 13.8928 & 18.9840 & 17.5947 & 17.5589 & 16.9423 & \textbf{13.6838} & 13.7995 & 13.8126 & 13.8386 \\
3  & 11.5008 & 12.9391 & 11.7344 & 11.7283 & 11.8133 & 11.8990 & 11.8219 & 11.3288 & 11.2315 & 11.3369 & \textbf{11.1332} \\
4  & 11.5765 & 15.3202 & 11.5799 & 11.5684 & 12.3067 & 11.9828 & 11.6989 & 11.1594 & 10.8787 & 11.0392 & \textbf{10.6124} \\
5  & 14.7281 & 17.3029 & 15.3949 & 18.7050 & 15.9830 & 19.0662 & 15.6103 & 13.9730 & 13.8841 & 14.1230 & \textbf{13.7906} \\
6  & 10.4421 & 16.1631 & 11.6287 & 12.6902 & 17.8500 & 15.4563 & 14.1100 & 10.1840 & \textbf{10.1111} & 10.6988 & 10.1509 \\
7  & 14.0773 & 15.1575 & \textbf{13.8039} & 13.8202 & 14.1923 & 13.9180 & 14.1216 & 14.1451 & 14.1531 & 14.3472 & 14.1744 \\
8  & \textbf{11.8581} & 15.1152 & 12.4011 & 14.4803 & 13.5858 & 14.4902 & 13.8294 & 11.9652 & 11.9393 & 11.9817 & 12.1531 \\
9 & 7.9509 & 9.0157 & 8.3613 & 8.3351 & 8.8371 & 8.4476 & 8.7562 & 7.8816 & \textbf{7.8105} & 7.8600 & 7.9426 \\
10 & \textbf{7.3239} & 12.0040 & 7.9470 & 8.2692 & 8.3896 & 8.6248 & 8.5525 & 7.3284 & 7.3788 & 7.3397 & 7.3547 \\\hline
Total & 11.2188 & 14.3794 & 11.5912 & 13.0845 & 13.4692 & 13.3945 & 12.6915 & 11.0342 & 10.9843 & 11.1321 & \textbf{10.9654} \\ \Xhline{0.75pt}
\end{tabular}}

\vspace{0.3cm}

\centering
    \scriptsize
  \caption{The Translational ARMSEs [m] in Experiments}
  \vspace{-1ex}
  \label{tab_exp1_t}
  \setlength{\tabcolsep}{3.1mm}{
    \renewcommand{\arraystretch}{1.15}
\begin{tabular}{cccccccccccccc}\Xhline{0.75pt}
Dataset & GTSAM & MFVI & GBP-DCS & GBP-L & MP-L & GBP-S & MP-S & GBP-NF-SW & MP-NF-SW & GBP-NF & MP-NF \\
\hline
1  & 0.5790 & 0.7625 & 0.6310 & 0.6354 & 0.6513 & 0.6371 & 0.6384 & 0.5645 & 0.5583 & \textbf{0.5548} & 0.5555 \\
2  & 0.5848 & 0.9468 & 0.6239 & 0.7774 & 0.6885 & 0.7608 & 0.6789 & 0.5682 & \textbf{0.5618} & 0.5725 & 0.5783 \\
3  & 0.5378 & 0.7396 & 0.5529 & 0.5535 & 0.5764 & 0.5524 & 0.5634 & 0.5223 & 0.5168 & 0.5232 & \textbf{0.5109} \\
4  & 0.5400 & 0.9090 & 0.5584 & 0.5588 & 0.5783 & 0.5557 & 0.5690 & 0.5400 & 0.5363 & 0.5441 & \textbf{0.5344} \\
5  & 0.6703 & 0.8439 & 0.6235 & 1.2057 & 0.8479 & 1.1340 & 0.7902 & 0.5791 & 0.5776 & 0.5832 & \textbf{0.5744} \\
6  & 0.6296 & 1.0721 & 0.6178 & 0.8825 & 0.7775 & 0.9002 & 0.7550 & 0.5655 & 0.5610 & 0.5902 & \textbf{0.5547} \\
7  & 0.5525 & 0.7717 & 0.5747 & 0.5828 & 0.6027 & 0.5718 & 0.5943 & 0.5552 & 0.5540 & 0.5567 & \textbf{0.5480} \\
8  & 0.6265 & 0.8627 & 0.6040 & 1.1207 & 0.7149 & 1.1325 & 0.7630 & 0.5652 & 0.5651 & 0.5788 & \textbf{0.5575} \\
9  & 0.5412 & 0.8392 & 0.5540 & 0.5544 & 0.5798 & 0.5624 & 0.5823 & 0.5414 & 0.5371 & 0.5329 & \textbf{0.5298} \\
10  & 0.5470 & 0.9568 & 0.5755 & 0.6807 & 0.6030 & 0.6813 & 0.5968 & 0.5503 & 0.5448 & 0.5416 & \textbf{0.5398} \\\hline
Total  & 0.5785 & 0.9051 & 0.5890 & 0.7749 & 0.6623 & 0.7690 & 0.6531 & 0.5540 & 0.5499 & 0.5576 & \textbf{0.5475} \\\Xhline{0.75pt}
\end{tabular}}
\end{table*}

\subsection{ASV Field Dataset Verification}

In field experiments, three ASVs were used for data collection. The experimental field and ASVs are shown in Fig. \ref{field}. The test area is an irregularly shaped lake with a bridge spanning across it. The shoreline and the bridge introduce interference to the UWB ranging between the ASVs. Each ASV is equipped with an NVIDIA Jetson Orin AGX with $12$-core CPU, $2048$-core GPU, and $64$ GB
RAM as an onboard computer. The STM32F407 micro-controller (MCU) drives the dual propellers of ASV. Communication between the ASVs and the shore-based computer relies on $1.4$ GHz wireless MESH modules, supporting data rates up to $90$ Mbps. Each ASV carries an FDIsystems DETA100D4G module integrating dual-antenna real-time kinematic (RTK) GNSS and an inertial navigation system (INS), delivering ground truth positioning with $0.8$ cm + $1$ ppm accuracy. The body frame of each ASV is fixed on its INS. Since we collected ground truths when the integrated navigation system was in RTK mode, GNSS data with lower accuracy than RTK is difficult to obtain directly, we simulated GNSS data by adding zero-mean Gaussian noises with covariance $\mathbf{P}^{\mathcal{G}}_{k,n}={\rm diag}(1,1,1.5)$ to the ground truth. The extrinsic parameter $\mathbf{p}^{b_n}_{g_n}=[-0.363,0,-0.159]^{\top}$ m. A RoboSense Helios 32 LiDAR is installed on each ASV to acquire point clouds for LiDAR odometry (LO), implemented using the LeGO-LOAM \cite{biblegoloam}. The results of LO are transformed to the body frame. As shown in Fig. \ref{fig_LO}, the resulting trajectories are stable and consistent, providing reliable inputs for subsequent state estimation. Inter-ASV ranging is achieved via LinkTrackP-B UWB units with extrinsic parameter $\mathbf{p}^{b_n}_{u_n}=[0.0946,0,-0.2028]^{\top}$ m, providing up to $500$ m range with typical accuracy of $0.1$ m. The sampling frequency of all measurement data is fixed at $10$ Hz. A total of twelve datasets were collected, with the first ten designated for evaluation and the remaining two for training. The time steps of each dataset are available in Table \ref{tab_data_size}. The histograms of UWB ranging errors are presented in Fig. \ref{fig_experimet_ranging_error}. While the ranging errors in certain datasets approximately follow a Gaussian distribution, others exhibit substantial outliers. 

We further introduce GBP-NF-SW, and MP-NF-SW. The only difference is that GBP-NF and MP-NF further trained NF parameters on experimental data starting from the parameters obtained in simulation, whereas GBP-NF-SW and MP-NF-SW use parameters trained in simulations without additional refinement. Sequences are extracted from dataset 11 and 12 with a length of $100$ time steps, and the final portion of data not extracted is discarded. The batch size is set as $8$. NF parameters are updated every $5$ time steps. After $100$ steps, $2$ sequences are replaced and the batch data is shuffled. This process is repeated for $8$ times.

\begin{figure}[tb]
  \centering
  \includegraphics[scale=1]{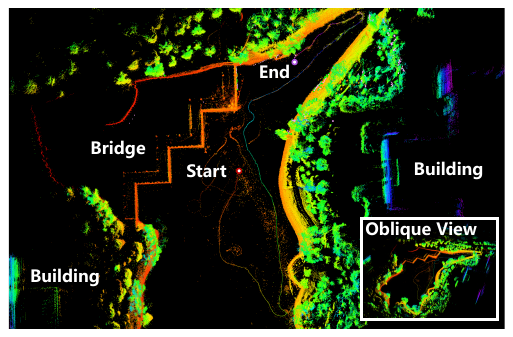}
  \caption{The estimated trajectories (gradient colored dots) and constructed maps from LeGO-LOAM.}
  \label{fig_LO}
\end{figure}

\begin{figure}[tb]
  \centering
  \includegraphics[scale=1]{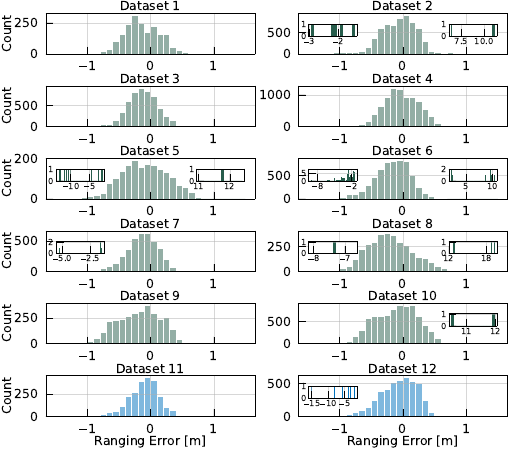}
  \caption{The histograms of UWB ranging errors. The first ten histograms, highlighted in green, are obtained from the evaluation datasets, whereas the final two, shown in blue, are from the training datasets.}
  \label{fig_experimet_ranging_error}
\end{figure}

For all algorithms, max iterations $L=5$, sliding window size is set as 3, and GNSS noise covariance is set as $\mathbf{P}^{\mathcal{G}}_{k,n}={\rm diag}(1,1,1.5)$. For GTSAM and all GBP, the odometry and UWB noise covariance are set as $\mathbf{P}^{\mathcal{O}}_{k,n}={\rm diag}(10^{-5}\mathbf{I}_{3},5\times10^{-2}\mathbf{I}_{3})$ and $P^{\mathcal{U},m}_{k,n}=10^{-2}$, respectively. For MP-L, MP-S, MP-NF, and MP-NF-SW, the odometry noise prior parameters $t^{\mathcal{O}}_{1,n}=50$, $\mathbf{T}^{\mathcal{O}}_{1,n}=t^{\mathcal{O}}_{1,n}{\rm diag}(10^{-5}\mathbf{I}_{3},5\times10^{-2}\mathbf{I}_{3})$, forgetting parameter $\rho=0.999$, UWB noise prior parameters $a_{k,n}^{m}=0.8$, $\nu_{k,n}^{m}=7$, $P^{\mathcal{U},m}_{k,n}=10^{-2}$, and $P^{\mathcal{U},m}_{k,n,0}=4P^{\mathcal{U},m}_{k,n}$.  The initial beliefs of $\mathbf{P}^{\mathcal{O}}_{k,n}$, $\xi_{k,n}^{m}$, and $\pi_{k,n}^{m}$ are set as their priors, while the initial beliefs of $y_{k,n}^{m}$ is set as ${\rm Bern}(y_{k,n}^{m};1)$. The sampling number of sampling and NF-based algorithms is $4$.

\begin{figure*}[tb]
  \centering
  \includegraphics[scale=1]{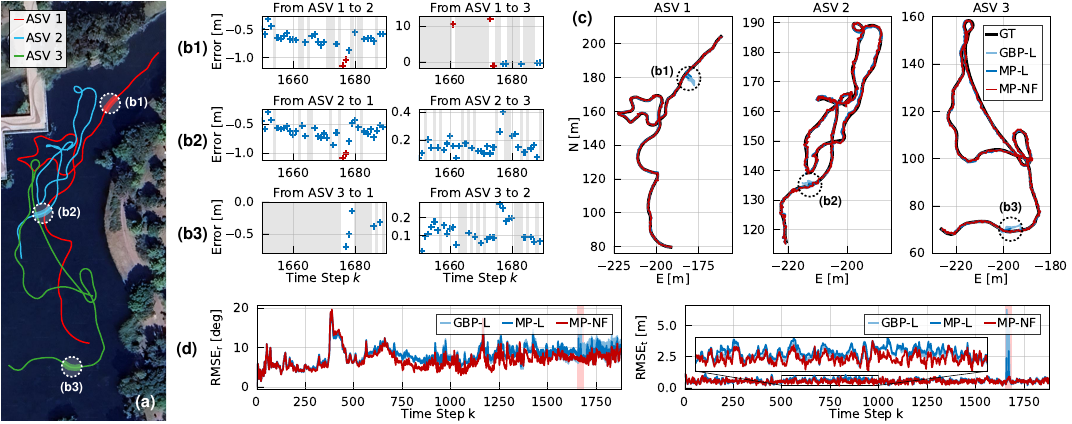}
  \caption{Results from dataset 10. (a) Ground truth trajectories of ASVs overlaid on the satellite map. The trajectories between time steps $1650$ and $1690$ are highlighted. (b1), (b2), and (b3) The errors of UWB ranging measured by ASV 1, ASV 2, ASV 3 from time step $1650$ to $1690$, respectively. Errors exceeding $1$ m are highlighted in red, while the remaining errors are marked in blue. Shaded regions indicate measurement loss. (c) Ground truths and estimated ASV trajectories. GT curves denote ground truth trajectories. (d) Rotational and translational RMSEs, with shaded regions marking time step $1650$ to $1690$.}
  \label{fig_experimet_traj}
\end{figure*}

\begin{figure}[tb]
  \centering
  \includegraphics[scale=1]{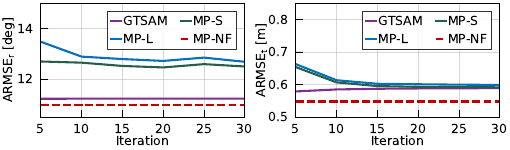}
  \caption{The rotational and translational errors with respect to algorithms iterations in experiments. The red dashed line indicates MP-NF consistently reaches the ARMSE within 5 iterations.}
  \label{fig_experimet_2}
\end{figure}

\begin{figure}[!tb]
  \centering
  \includegraphics[scale=1]{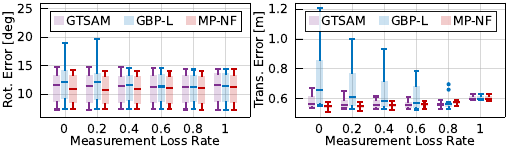}
  \caption{The box charts of rotational and translational errors with respect to different UWB measurement loss rate.}
  \label{fig_experimet_3}
\end{figure}

The rotational and translational ARMSEs of all algorithms are reported in Table \ref{tab_exp1_r} and Table \ref{tab_exp1_t}, respectively. Although GBP-NF-SW and MP-NF-SW employ NF parameters trained on simulation data and extrinsic parameters are different in simulations and experiments, their performance differs only marginally from GBP-NF and MP-NF. This indicates that the methods proposed in this paper possess strong generalization capabilities, allowing rapid deployment on physical systems even when trained solely in simulation. A detailed analysis of algorithm performance across various datasets, combined with the UWB ranging outliers illustrated in Fig. \ref{fig_experimet_ranging_error}, shows that in datasets containing outliers, algorithms without robust design suffer significantly larger estimation errors. This highlights the importance of robustness. 

Furthermore, Fig. \ref{fig_experimet_traj} presents the results from dataset 10. Due to the occurrences of ranging outliers, the trajectories estimated by GBP-L exhibit pronounced deviations at certain points during time step $1650$ to $1690$, followed by MP-L, whereas MP-NF maintains closer alignment with the ground truth. Even outside this specific time interval, MP-NF consistently achieves the highest estimation accuracy overall. The time-series analysis above further demonstrates that the proposed MP-NF provides both superior estimation accuracy and improved robustness.

We further analyze the estimation performance with increased iterations. The errors of GTSAM, MP-L, and MP-S across different iterations, together with MP-NF under 5 iterations, are shown in Fig. \ref{fig_experimet_2}. As the number of iterations increases, the errors of MP-S and MP-L gradually decrease, yet they still fail to reach the accuracy of MP-NF with only 5 iterations. Notably, MP-NF requires only a few iterations to achieve high estimation accuracy.

Finally, we analyze how the algorithms behave as the UWB measurement loss rate increases. A portion of the UWB measurements is randomly discarded, and all algorithms are subsequently executed under these degraded conditions. The results are shown in Fig. \ref{fig_experimet_3}. As the loss rate increases, MP-NF generally maintains higher estimation accuracy than GTSAM and GBP-L, although its error exhibits an upward trend. When the loss rate reaches $1$, all algorithms degenerate into fusing only LO and GNSS. In contrast, GBP-L shows a noticeable decrease in estimation error as the loss rate increases. This indicates that, in practical engineering scenarios, relying on robust collaborative localization can improve estimation accuracy, whereas the absence of robustness may even degrade performance. Moreover, since the experimental dataset itself contains UWB dropouts and we also additionally remove measurements for validation, the underlying factor graph is substantially altered. The consistently reliable performance of MP-NF under these varying graph structures demonstrates its generalization capability.

\section{Conclusion}
\label{sec:con}
This paper has presented a unified MP algorithm that integrates GBP and MF approximation, where the former maintains dependencies among robot states, and the latter estimates noise statistics to enhance robustness and adaptability. The MP algorithm optimizes over the natural parameter space of candidate distributions, while non-conjugate terms arising from nonlinear measurement models have been addressed through gradient estimators constructed by linearization, sampling, or NF-based techniques. End-to-end training for NF parameters has enhanced estimation accuracy. The MP algorithm has also been extended to Lie groups. The algorithm has been validated in multirobot localization involving odometry, GNSS, and UWB, with both simulations and ASV experiments demonstrating the improvements in accuracy, robustness, and adaptability. Future work will investigate extending the MP framework toward tightly coupled sensor fusion architectures and generalize the NF-enhanced gradient estimators to broader classes of inference algorithms.

\appendices
\section{Examples of Exponential Family Distributions}
\label{Examples_Exponential_Families}
The base measures of the following distributions are all set as $1$. The Gaussian distribution ${\rm N}(\mathbf{x};\mathbf{m},\boldsymbol{\Sigma})$ with mean vector $\mathbf{m}$ and covariance matrix $\boldsymbol{\Sigma}$ can be written in a minimal exponential family form by defining natural parameter $\boldsymbol{\lambda}$ and sufficient statistics $\mathcal{T}(\mathbf{x})$ in tuple notation \cite{bibNGVI4}, \cite{bibNGVIKF1}:
\begin{align}
  \boldsymbol{\lambda}=\left[\begin{matrix}
        \boldsymbol{\Sigma}^{-1}\mathbf{m}\\
        -\frac{1}{2}\boldsymbol{\Sigma}^{-1}
    \end{matrix}\right],\mathcal{T}(\mathbf{x})=\left[\begin{matrix}
        \mathbf{x}\\
        \mathbf{x}\mathbf{x}^{\top}
    \end{matrix}\right].\notag
\end{align}
For notational clarity and to facilitate the derivations, we use vectorized representations of matrix natural parameters and sufficient statistics:
\begin{align}
  \boldsymbol{\lambda}=\left[\begin{matrix}
        \boldsymbol{\Sigma}^{-1}\mathbf{m}\\
        -\frac{1}{2}{\rm vec}\left(\boldsymbol{\Sigma}^{-1}\right)
    \end{matrix}\right],\mathcal{T}(\mathbf{x})=\left[\begin{matrix}
        \mathbf{x}\\
        {\rm vec}\left(\mathbf{x}\mathbf{x}^{\top}\right)
    \end{matrix}\right].\notag
\end{align}
The vectorized form of expectation parameters is
\begin{align}
  \boldsymbol{\mu}=\left[\begin{matrix}
        \mathbf{m}\\
        {\rm vec}\left(\boldsymbol{\Sigma}+\mathbf{m}\mathbf{m}^{\top}\right)
    \end{matrix}\right].\notag
\end{align}
This change is purely notational and all resulting expressions obtained in this paper are vectorization of those directly obtained by the matrix form, since vectorization does not alter the underlying matrix operations.

The inverse Wishart distribution ${\rm IW}(\mathbf{R};\mathbf{T},t)$ with a symmetric, positive definite matrix $\mathbf{T}\in\mathbb{R}^{n\times n}$, and degrees of freedom $t>n+1$ also has a minimal exponential family form using tuple notation \cite{bibNGVIKF1}. We adopt its vectorized form
\begin{align}
  \boldsymbol{\lambda}=\left[\begin{matrix}
    -0.5(t+n+1)\\
    -0.5{\rm vec}(\mathbf{T})
  \end{matrix}\right],\mathcal{T}(\mathbf{R})=\left[\begin{matrix}
        \log{\rm det}(\mathbf{R})\\
        {\rm vec}\left(\mathbf{R}^{-1}\right)
    \end{matrix}\right].\notag
\end{align}

The Beta distribution ${\rm Beta}(\pi;a,b)$ with $a,b>0$ can be written in minimal exponential family form by defining
\begin{align}
  \boldsymbol{\lambda}=\left[\begin{matrix}
    a-1\\
    b-1
  \end{matrix}\right],
  \mathcal{T}(\pi)=\left[\begin{matrix}
    \log \pi\\
    \log(1-\pi)
  \end{matrix}\right].\notag
\end{align}

The Bernoulli distribution ${\rm Bern}(y;\alpha)$ with $y\in\{0,1\}$ and $\alpha\in[0,1]$ can be written in minimal exponential family by
\begin{align}
  \boldsymbol{\lambda}=\log\frac{\alpha}{1-\alpha},\mathcal{T}(y)=y.\notag
\end{align}

The Gamma distribution ${\rm G}(x;\alpha,\beta)$ with $x,\alpha,\beta>0$ can be written in minimal exponential family by defining
\begin{align}
  \boldsymbol{\lambda}=\left[\begin{matrix}
    \alpha-1\\
    -\beta
  \end{matrix}\right],
  \mathcal{T}(x)=\left[\begin{matrix}
    \log x\\
    x
  \end{matrix}\right].\notag
\end{align}

\section{Proof of Theorem {\ref{Theorem_stationary_points}}}
\label{Proof_stationary_points}
By setting $\nabla_{\boldsymbol{\mu}_{a}}L=\mathbf{0},\forall a\in\mathcal{A}_{\rm BP}$, $\nabla_{\boldsymbol{\mu}_{i}}L=\mathbf{0},\forall i\in\mathcal{I}$, and noticing that for a minimal exponential family defined as (\ref{ef}) with constant $h(\mathbf{x})$, we have $\nabla_{\boldsymbol{\mu}}H(p(\mathbf{x};\boldsymbol{\lambda}))=-\boldsymbol{\lambda}$ \cite{bibPML},
\begin{align}
    \boldsymbol{\lambda}_{a}=&\nabla_{\boldsymbol{\mu}_{a}}U_{a}+\sum_{i\in\mathcal{N}_{\rm BP}(a)}\mathbf{L}_{a,i}^{\top}\boldsymbol{\lambda}_{a,i},a\in\mathcal{A}_{\rm BP}\label{derivative_lambda_a}\\
    \boldsymbol{\lambda}_{i}=&\sum_{a\in\mathcal{N}(i)}\left(\boldsymbol{\lambda}_{i}-\boldsymbol{\lambda}_{a,i}\right),i\in\mathcal{I}_{\rm BP}\label{derivative_lambda_i_BP}\\
    \boldsymbol{\lambda}_{i}=&\sum_{a\in\mathcal{N}(i)}\nabla_{\boldsymbol{\mu}_{i}}U_{a},i\in\mathcal{I}_{\rm MF}\label{derivative_lambda_i_MF}
\end{align}
Denote the message from factor node $a$ to variable node $i\in\mathcal{N}(a)$ as $\mathbf{m}_{a\to i}$. For $i\in\mathcal{I}_{\rm BP}$, we define
\begin{align}
  \mathbf{m}_{a\to i}\triangleq \boldsymbol{\lambda}_{i}-\boldsymbol{\lambda}_{a,i}\label{def_m}.
\end{align}
For $i\in\mathcal{I}_{\rm MF}$, define $\mathbf{m}_{a\to i}\triangleq\nabla_{\boldsymbol{\mu}_{i}}U_{a}$ as in (\ref{belief2m_mf_bpmf}). Then, (\ref{derivative_lambda_i_BP}) and (\ref{derivative_lambda_i_MF}) can be written as (\ref{lambda_i_bpmf}). We further define the message from variable node $i\in\mathcal{I}_{\rm BP}$ to factor node $a\in\mathcal{N}(i)$, denoted by $\mathbf{n}_{i\to a}$, as in (\ref{m2n_bpmf}). Due to (\ref{lambda_i_bpmf}), (\ref{m2n_bpmf}), and (\ref{def_m}), we have 
\begin{align}
  \boldsymbol{\lambda}_{a,i}=\mathbf{n}_{i\to a},\forall i\in\mathcal{I}_{\rm BP},a\in\mathcal{N}(i).\label{multiplier_n}
\end{align}
Substituting (\ref{multiplier_n}) into (\ref{derivative_lambda_a}), the equation (\ref{lambda_a_bpmf}) is verified. The equation constraint $\boldsymbol{\mu}_{i}=\mathbf{L}_{a,i}\boldsymbol{\mu}_{a}$ is equivalent to $\boldsymbol{\lambda}_{i}=g\left(\mathbf{L}_{a,i}f\left(\boldsymbol{\lambda}_{a}\right)\right)$. Substituting this and (\ref{multiplier_n}) into (\ref{def_m}), the equation (\ref{n2m_bpmf}) is verified. The proof that stationary points of Lagrangian (\ref{Lagrangian}) are fixed points fulfilling (\ref{lambda_a_bpmf}) and (\ref{lambda_i_bpmf}) is completed. Additionally, all above steps are reversible. This completes the proof of Theorem \ref{Theorem_stationary_points}.

\section{Proof of Lemma {\ref{Lemma_simplied_U}}}
\label{Proof_simplied_U}
Substituting factor (\ref{factor_model}) into definition (\ref{U_a}), one has
\begin{align}
  U_{a}=&-0.5\int b_{a}\left(\mathbf{x}_{a}\right)b_{a}(\boldsymbol{\phi}_{a})\left(\mathbf{r}^{\top}(\mathbf{x}_{a})\mathbf{R}_{a}^{-1}(\boldsymbol{\phi}_{a})\mathbf{r}(\mathbf{x}_{a})\right.\notag\\
  &+\log{\rm det}(\mathbf{R}_{a}(\boldsymbol{\phi}_{a}))+n_{r_a}\log 2\pi\big){\rm d}\mathbf{x}_{a}{\rm d}\boldsymbol{\phi}_{a},\notag
\end{align}
where $n_{r_a}$ is the dimension of $\mathbf{r}(\mathbf{x}_{a})$. Then,
\begin{align}
  U_{a}=&-0.5\int b_{a}\left(\mathbf{x}_{a}\right)\mathbf{r}^{\top}(\mathbf{x}_{a})\bar{\mathbf{R}}_{a}^{-1}\mathbf{r}(\mathbf{x}_{a}){\rm d}\mathbf{x}_{a}+c_{1}\notag\\
  =&\int b_{a}\left(\mathbf{x}_{a}\right)\log{\rm N}\left(\mathbf{r}_{a}\left(\mathbf{x}_{a}\right);\mathbf{0},\bar{\mathbf{R}}_{a}\right){\rm d}\mathbf{x}_{a}+c,\notag
\end{align}
where $c_{1}$ is a constant independent of $\mathbf{x}_{a}$ and $\boldsymbol{\mu}_{a}$. This completes the proof of Lemma \ref{Lemma_simplied_U}.

\section{Proof of Lemma \ref{Lemma_gradient_mu}}
\label{Proof_gradient_mu}
Let $\boldsymbol{\mu}_{a}={\rm col}(\boldsymbol{\mu}_{a,1},\boldsymbol{\mu}_{a,2})$, where $\boldsymbol{\mu}_{a,1}$ has the same dimension as that of $\hat{\mathbf{x}}_{a}$. We have $\hat{\mathbf{x}}_{a}=\boldsymbol{\mu}_{a,1}$ and ${\rm vec}(\mathbf{P}_{a})=\boldsymbol{\mu}_{a,2}-{\rm vec}(\boldsymbol{\mu}_{a,1}\boldsymbol{\mu}_{a,1}^{\top})$. According to chain rule,
\begin{align}
  \nabla_{\boldsymbol{\mu}_{a,1}}U_{a}=&\frac{\partial \hat{\mathbf{x}}_{a}^{\top}}{\partial \boldsymbol{\mu}_{a,1}}\frac{\partial U_{a}}{\partial \hat{\mathbf{x}}_{a}}+\frac{\partial {\rm vec}^{\top}(\mathbf{P}_{a})}{\partial \boldsymbol{\mu}_{a,1}}\frac{\partial U_{a}}{\partial {\rm vec}(\mathbf{P}_{a})}\notag\\
  =&\frac{\partial U_{a}}{\partial \hat{\mathbf{x}}_{a}}-\left(\boldsymbol{\mu}_{a,1}\otimes\mathbf{I}+\mathbf{I}\otimes\boldsymbol{\mu}_{a,1}\right)^{\top}{\rm vec}\left(\frac{\partial U_{a}}{\partial \mathbf{P}_{a}}\right)\notag\\
  =&\frac{\partial U_{a}}{\partial \hat{\mathbf{x}}_{a}}-2\frac{\partial U_{a}}{\partial \mathbf{P}_{a}}\hat{\mathbf{x}}_{a},\notag
\end{align}
and
\begin{align}
  \nabla_{\boldsymbol{\mu}_{a,2}}U_{a}=&\frac{\partial {\rm vec}^{\top}(\mathbf{P}_{a})}{\partial \boldsymbol{\mu}_{a,2}}\frac{\partial U_{a}}{\partial {\rm vec}(\mathbf{P}_{a})}={\rm vec}\left(\frac{\partial U_{a}}{\partial \mathbf{P}_{a}}\right).\notag
\end{align}
This completes the proof of Lemma \ref{Lemma_gradient_mu}.

\vfill


\begin{thebibliography}{00}

\bibitem{chen1}	L. Chen, C. Liang, S. Yuan, M. Cao, and L. Xie, ``Relative localizability and localization for multirobot systems," \textit{IEEE Trans. Robot.}, vol. 41, pp. 2931--2949, 2025.

\bibitem{chen2}L. Chen, R. Ruan, J. Li, W. Yuan and Z. Lin, ``An angle-based control law for target location stabilization in 3D space," \textit{IEEE Trans. Autom. Control}, to be published, doi: 10.1109/TAC.2025.3613695.

\bibitem{PGO1}P.-Y. Lajoie, B. Ramtoula, Y. Chang, L. Carlone, and G. Beltrame, ``DOOR-SLAM: Distributed, online, and outlier resilient SLAM for robotic teams," \textit{IEEE Robot. Autom. Lett.}, vol. 5, no. 2, pp. 1656--1663, Apr. 2020.

\bibitem{PGO2}Y. Tian, Y. Chang, F. Herrera Arias, C. Nieto-Granda, J. P. How, and L. Carlone, ``Kimera-Multi: Robust, distributed, dense metric-semantic SLAM for multi-robot systems," \textit{IEEE Trans. Robot.}, vol. 38, no. 4, pp. 2022--2038, Aug. 2022.

\bibitem{PGO3}P.-Y. Lajoie, and G. Beltrame, ``Swarm-SLAM: Sparse decentralized collaborative simultaneous localization and mapping framework for multi-robot systems," \textit{IEEE Robot. Autom. Lett.}, vol. 9, no. 1, pp. 475--482, Jan. 2024.

\bibitem{PGO4}X. Liu, J. Lei, A. Prabhu, Y. Tao, I. Spasojevic, P. Chaudhari, N. Atanasov, and V. Kumar, ``SlideSLAM: Sparse, lightweight, decentralized metric-semantic SLAM for multirobot navigation," \textit{IEEE Trans. Robot.}, vol. 41, pp. 6529--6548, 2025.

\bibitem{FGO1}F. Zhu et al., ``Swarm-LIO: Decentralized swarm LiDAR-inertial odometry," in \textit{Proc. IEEE Int. Conf. Robot. Autom.}, 2023, pp. 3254--3260.

\bibitem{FGO2}F. Zhu, Y. Ren, L. Yin, F. Kong, Q. Liu, R. Xue, W. Liu, Y. Cai, G. Lu, H. Li, and F. Zhang, ``Swarm-LIO2: Decentralized efficient LiDAR-inertial odometry for aerial swarm systems," \textit{IEEE Trans. Robot.}, vol. 41, pp. 960--981, 2025.

\bibitem{FGO3}H. Xu, Y. Zhang, B. Zhou, L. Wang, X. Yao, G. Meng, and S. Shen, ``Omni-Swarm: A decentralized omnidirectional visual-inertial-UWB state estimation system for aerial swarms," \textit{IEEE Trans. Robot.}, vol. 38, no. 6, pp. 3374--3394, Dec. 2022.

\bibitem{ADMM}H. Xu, P. Liu, X. Chen, and S. Shen, ``$D^{2}$SLAM: Decentralized and distributed collaborative visual-inertial SLAM system for aerial swarm," \textit{IEEE Trans. Robot.}, vol. 40, pp. 3445--3464, 2024.

\bibitem{BP1}F. Meyer, O. Hlinka, H. Wymeersch, E. Riegler, and F. Hlawatsch, ``Distributed localization and tracking of mobile networks including noncooperative objects," \textit{IEEE Trans. Signal Inf. Process. Netw.}, vol. 2, no. 1, pp. 57--71, Mar. 2016.

\bibitem{BP2}Y. Li, W. Yu, and X. Guan, ``Hybrid TOA-AOA cooperative localization for multiple AUVs in the absence of anchors," \textit{IEEE Trans. Ind. Inform.}, vol. 20, no. 2, pp. 2420--2431, Feb. 2024.

\bibitem{BP3}R. Murai, J. Ortiz, S. Saeedi, P. H. J. Kelly, and A. J. Davison, ``A robot web for distributed many-device localization," \textit{IEEE Trans. Robot.}, vol. 40, pp. 121--138, 2024.

\bibitem{BP4}R. Murai, I. Alzugaray, P. H. J. Kelly, and A. J. Davison, ``Distributed simultaneous localisation and auto-calibration using Gaussian belief propagation," \textit{IEEE Robot. Autom. Lett.}, vol. 9, no. 3, pp. 2136--2143, Mar. 2024.

\bibitem{MFVI1}Y. W. Lv, G. H. Yang, and S. Wasly, ``Cooperative localization and target tracking in mobile sensor networks with uncertainties in measurement models," \textit{IEEE Trans. Veh. Technol.}, vol. 72, no. 12, pp. 15521--15534, Dec. 2023.

\bibitem{bibVMP1}J. Winn and C. M. Bishop, ``Variational message passing," \textit{J. Mach. Learn. Res.}, vol. 6, pp. 661--694, 2005.

\bibitem{bibVMP2}J. Dauwels, ``On variational message passing on factor graphs," in
\textit{Proc. IEEE Int. Symp. Inf. Theory}, 2007, pp. 2546--2550.

\bibitem{bibBPMF1}E. Riegler, G. E. Kirkelund, C. N. Manchon, M.-A. Badiu, and B. H. Fleury, ``Merging belief propagation and the mean field approximation: A free energy approach," \textit{IEEE Trans. Inf. Theory}, vol. 59, no. 1, pp. 588--602, Jan. 2013.

\bibitem{DCS}P. Agarwal, G. D. Tipaldi, L. Spinello, C. Stachniss, and W. Burgard, ``Robust map optimization using dynamic covariance scaling," in \textit{Proc. 2013 IEEE Int. Conf. Robot. Automat.}, Karlsruhe, Germany, 2013, pp. 62--69.

\bibitem{bibVBAKF1}S. Sarkka and A. Nummenmaa, ``Recursive noise adaptive Kalman filtering by variational Bayesian approximations," \textit{IEEE Trans. Autom. Control}, vol. 54, no. 3, pp. 596--600, Mar. 2009.

\bibitem{bibVBAKF2}Y. Huang, Y. Zhang, Z. Wu, N. Li, and J. Chambers, ``A novel adaptive Kalman filter with inaccurate process and measurement noise covariance matrices," \textit{IEEE Trans. Autom. Control}, vol. 63, no. 2, pp. 594--601, Feb. 2018.

\bibitem{bibVBRKF1}Y. Huang, Y. Zhang, B. Xu, Z. Wu, and J. Chambers, ``A new outlier-robust Student's \textit{t} based Gaussian approximate filter for cooperative localization," \textit{IEEE/ASME Trans. Mechatronics}, vol. 22, no. 5, pp. 2380--2386, Oct. 2017.

\bibitem{bibVBRKF2}Y. Huang, Y. Zhang, Y. Zhao, and J. A. Chambers, ``A novel robust Gaussian-Student's \textit{t} mixture distribution based Kalman filter," \textit{IEEE Trans. Signal Process.}, vol. 67, no. 13, pp. 3606--3620, Jul. 2019.

\bibitem{bibVBRKF3}H. Shen, G. Wen, Y. Lv, and J. Zhou, ``A stochastic event-triggered robust unscented Kalman filter-based USV parameter estimation," \textit{IEEE Trans. Ind. Electron.}, vol. 71, no. 9, pp. 11272--11282, Sep. 2024.

\bibitem{bibVBRKF4}H. Shen and G. Wen, ``Robust collaborative dynamic parameter estimation for multirobot systems: A distributed variational inference-based approach," \textit{IEEE/ASME Trans. Mechatronics}, vol. 31, no. 2, pp. 2209--2220, Apr. 2026.

\bibitem{bibVBRKF5}A. Zhang, X. Fu, Z. Fan and G. Wen, ``Distributed robust state estimation for islanded microgrids with randomly occurring measurement outliers," \textit{IEEE Trans. Ind. Informat.}, vol. 21, no. 12, pp. 9608--9618, Dec. 2025.

\bibitem{bibBPMF2}B. Cakmak, D. N. Urup, F. Meyer, T. Pedersen, B. H. Fleury, and F. Hlawatsch, ``Cooperative localization for mobile networks: A distributed belief propagation-mean field message passing algorithm," \textit{IEEE Signal Process. Lett.}, vol. 23, no. 6, pp. 828--832, Jun. 2016.

\bibitem{bibBPMF3}D. Zhang, X. Song, W. Wang, G. Fettweis, and X. Gao, ``Unifying message passing algorithms under the framework of constrained Bethe free energy minimization," \textit{IEEE Trans. Wireless Commun.}, vol. 20, no. 7, pp. 4144--4158, Jul. 2021.

\bibitem{bibSVI}M. D. Hoffman, D. M. Blei, C. Wang, and J. Paisley, ``Stochastic variational inference," \textit{J. Mach. Learn. Res.}, vol. 14, no. 40, pp. 1303--1347, 2013.

\bibitem{bibNGVI1}J. Hensman, M. Rattray, and N. Lawrence, ``Fast variational inference in the conjugate exponential family," in \textit{Proc. Int. Conf. Adv. Neural Inf. Process. Syst.}, 2012, vol. 25, pp. 2888--2896.

\bibitem{bibNGVI2}M. E. Khan, and W. Lin, ``Conjugate-computation variational inference: Converting variational inference in non-conjugate models to inferences in conjugate models," in \textit{Proc. Conf. Artif. Intell. Statist.}, 2017, pp. 878--887.

\bibitem{bibNGVI3}W. Lin, M. Schmidt, and M. E. Khan, ``Handling the positive-definite constraint in the Bayesian learning rule," in \textit{Proc. Int. Conf. Mach. Learn.}, 2020, pp. 6116--6126.

\bibitem{bibNGVI4}M. E. Khan, and H. Rue, ``The Bayesian learning rule," \textit{J. Mach. Learn. Res.}, vol. 24, no. 281, pp. 1--46, 2023.

\bibitem{bibNGVIKF1}H. Lan, S. J. Zhao, J. J. Hu, Z. F. Wang, and J. Fu, ``Joint state estimation and noise identification based on variational optimization," \textit{IEEE Trans. Autom. Control}, vol. 70, no. 7, pp. 4500--4515, Jul. 2025.

\bibitem{bibNGVIKF2}H. Lan, S. J. Zhao, Y. X. Mao, Z. F. Wang, Q. Cheng, and Z. A. Liu, ``Noise adaptive Kalman filtering with stochastic natural gradient variational inference," \textit{IEEE Trans. Aerosp. Electron. Syst.}, vol. 61, no. 4, pp. 9959--9976, Aug. 2025.

\bibitem{bibGE}S. Mohamed, M. Rosca, M. Figurnov, and A. Mnih, ``Monte Carlo gradient estimation in machine learning," \textit{J. Mach. Learn. Res.}, vol. 21, no. 132, pp. 1--62, 2020.

\bibitem{bibSFGE}J. P. Kleijnen and R. Y. Rubinstein, ``Optimization and sensitivity analysis of computer simulation models by the score function method," \textit{Eur. J. Oper. Res.}, vol. 88, no. 3, pp. 413--427, Feb. 1996.

\bibitem{bibPGE}D. J. Rezende, S. Mohamed, and D. Wierstra, ``Stochastic backpropagation and approximate inference in deep generative models," in \textit{Proc. Int. Conf. Mach. Learn.}, 2014, pp. 1278--1286.

\bibitem{bibMVGE}G. C. Pflug. \textit{Optimization of Stochastic Models: The Interface between Simulation and Optimization}. Springer Science \& Business Media, 1996.

\bibitem{bibNFS1}T. Müller, B. McWilliams, F. Rousselle, M. Gross, and J. Novák, ``Neural importance sampling," \textit{ACM Trans. Graph.}, vol. 38, no. 5, pp. 1--19, 2019.

\bibitem{bibNFS2}F. Noé, S. Olsson, J. Köhler, and H. Wu, ``Boltzmann generators: Sampling equilibrium states of many-body systems with deep learning," \textit{Science}, vol. 365, no. 6457, Art. no. eaaw1147, 2019.

\bibitem{bibNFS3}L. A. Kruse, A. Tzikas, H. Delecki, M. Arief, and M. J. Kochenderfer, ``Enhanced importance sampling through latent space exploration in normalizing flows," in \textit{Proc. AAAI Conf. Artif. Intell.}, 2025, pp. 17983--17989.

\bibitem{bibNF1}D. Rezende, and S. Mohamed, ``Variational inference with normalizing flows," in \textit{Proc. Int. Conf. Mach. Learn.}, 2015, pp. 1530--1538.

\bibitem{bibRealNVP}L. Dinh, J. Sohl-Dickstein, and S. Bengio, ``Density estimation using real NVP," in \textit{Proc. Int. Conf. Learn. Representations}, 2017.

\bibitem{bibNF2}G. Papamakarios, E. Nalisnick, D. J. Rezende, S. Mohamed, and B. Lakshminarayanan, ``Normalizing flows for probabilistic modeling and inference," \textit{J. Mach. Learn. Res.}, vol. 22, no. 57, pp. 1-64, 2021.

\bibitem{bibBP}J. S. Yedidia, W. T. Freeman, and Y. Weiss, ``Constructing free-energy approximations and generalized belief propagation algorithms," \textit{IEEE Trans. Inf. Theory}, vol. 51, no. 7, pp. 2282--2312, Jul. 2005.

\bibitem{bibMFVI}C. Zhang, J. Butepage, H. Kjellstrom, and S. Mandt, ``Advances in variational inference," \textit{IEEE Trans. Pattern Anal. Mach. Intell.}, vol. 41, no. 8, pp. 2008--2026, Aug. 2019.

\bibitem{bibPML}K. P. Murphy, \textit{Probabilistic Machine Learning: Advanced Topics.} Cambridge, MA, USA: MIT Press, 2023.

\bibitem{bibcovopt}S. Boyd and L. Vandenberghe, \textit{Convex Optimization}. Cambridge, U.K.: Cambridge Univ. Press, 2004.

\bibitem{bibestrob}T. D. Barfoot, \textit{State Estimation for Robotics}. Cambridge, U.K.: Cambridge Univ. Press, 2017.

\bibitem{bibLie}C. Forster, L. Carlone, F. Dellaert, and D. Scaramuzza, ``On-manifold preintegration for real-time visual-inertial odometry,” \textit{IEEE Trans. Robot.}, vol. 33, no. 1, pp. 1--21, Feb. 2017.

\bibitem{bibGTSAM}F. Dellaert, ``Factor graphs and GTSAM: A hands-on introduction," Georgia Institute of Technology, Atlanta, Georgia, Tech. Rep. GT-RIM-CP\&R-2012-002, 2012.

\bibitem{bibPyTorch}A. Paszke et al., ``PyTorch: An imperative style, high-performance deep learning library," in \textit{Proc. Int. Conf. Neural Inf. Process. Syst.}, 2019, Art. no. 721.

\bibitem{bibPyTorch3D}J. Johnson et al., ``Accelerating 3D deep learning with PyTorch3D," in \textit{Proc. ACM SIGGRAPH Asia Courses}, 2020, Art. no. 10.

\bibitem{bibNumPy}C. R. Harris et al., ``Array programming with NumPy," \textit{Nature}, vol. 585, pp. 357--362, Sep. 2020.

\bibitem{bibSciPy}P. Virtanen et al., ``SciPy 1.0: Fundamental algorithms for scientific computing in Python," \textit{Nat. Methods}, vol. 17, pp. 261--272, Mar. 2020.

\bibitem{biblegoloam}T. Shan and B. Englot, ``LeGO-LOAM: Lightweight and ground optimized Lidar odometry and mapping on variable terrain," in \textit{Proc. 2018 IEEE/RSJ Int. Conf. Intell. Robots Syst.}, 2018, pp. 4758--4765.



\end{thebibliography}
\end{document}